\newtheorem{assumption}{\bf Assumption}[section]
\newtheorem{thm}{\bf Theorem}[section]
\newtheorem{lemma}[thm]{\bf Lemma}
\newtheorem{proposition}[thm]{\bf Proposition}
\newtheorem{remark}{Remark} 
\newtheorem{definition}[thm]{\bf Definition}
\title{DeepSPoC: a deep learning-based PDE solver governed
by sequential propagation of chaos}
\author{{Kai Du} \thanks{Shanghai Center for Mathematical Sciences, Fudan University, Shanghai, 200438, China. Email: kdu@fudan.edu.cn}\\
\And
{Yongle Xie} \thanks{Shanghai Center for Mathematical Sciences, Fudan University, Shanghai, 200433, China. Email: 24110840016@m.fudan.edu.cn}\\
\And
{Tao Zhou} \thanks{ LSEC, Institute of Computational Mathematics and Scientific/Engineering Computing, AMSS, Chinese Academy of Sciences, Beijing, China. Email: tzhou@lsec.cc.ac.cn. }\\
\And
{Yuancheng Zhou } \thanks{ LSEC, Institute of Computational Mathematics and Scientific/Engineering Computing, AMSS, Chinese Academy of Sciences, Beijing, China. Email: yczhou@lsec.cc.ac.cn.}
\\
}
\date{July 2024}
\begin{document}

\maketitle
\begin{abstract}
Sequential propagation of chaos (SPoC) is a recently developed tool to solve mean-field stochastic differential equations and their related nonlinear Fokker-Planck equations. Based on the theory of SPoC, we present a new method (deepSPoC) that combines the interacting particle system of SPoC and deep learning. Under the framework of deepSPoC, two classes of frequently used deep models include fully connected neural networks and normalizing flows are considered. For high-dimensional problems, spatial adaptive method are designed to further improve the accuracy and efficiency of deepSPoC. We analysis the convergence of the framework of deepSPoC under some simplified conditions and also provide a posterior error estimation for the algorithm. Finally, we test our methods on a wide range of different types of mean-field equations.
\end{abstract}

\section{Introduction}
A wide range of nonlinear partial differential equations (PDEs) in kinetic theory can be associated with the so-called mean-field stochastic differential equations (SDEs), which can be expressed in the following general form:
\begin{equation}\label{genneral_eq}
    dX_t=b(t,X_{t-},\mu_t)dt + \sigma(t,X_{t-},\mu_t)dZ_t,
\end{equation}
where $\mu_t$ denotes the distribution of $X_t$, and $Z_t$ is a multidimensional L\'{e}vy process, including Poisson and compound Poisson process, Brownian motion and other stable processes as its special case. 
The distribution of the solution of the mean-field SDE $\mu_t$ is a weak solution of its related nonlinear PDE in the distributional sense. 
These nonlinear PDEs, commonly referred to as nonlinear Fokker-Planck equations, describe the time evolution of the probability distribution of particles under the influence of external forces (including random noise). 
Theses PDEs have important applications in fluid dynamics, biology, chemistry and other fields (see the review paper ~\cite{reviewofapplication}). 
However, most of them are difficult to solve numerically due to their non-linearity caused by the mean-field term and some other unfavorable properties.
The relation between the PDEs and SDEs inspires us to solve the problem from the perspective of the related SDEs. 
To solve mean-field SDEs, the propagation of chaos theory (PoC for short)~\cite{mckean1967propagation,sznitman1991topics} demonstrates that a wide range of these mean-field SDEs can be viewed as the mean-field limit equations of an interacting particle system. 
Hence their solutions can be approximated by the empirical measure of the particles of the system. 
This insight has led to the development of various particle methods, including deterministic particle methods ~\cite{carrillo2019blob,carrillo2016numerical} and stochastic particle methods ~\cite{bossy1997stochastic,le2017particle,belaribi2011probabilistic,belaribi2013probabilistic} and some related calculation techniques like random batch method~\cite{randombatch}. 
These methods aim at providing numerical solutions for mean-field SDEs and consequently resolving their associated PDEs concurrently.

Compared with traditional methods such as finite difference or finite element method, particle methods have various advantages. 
Particle methods are mesh-free, hence are more flexible with respect to space dimensions, while the increase in dimensions can bring an unbearable computational burden for traditional mesh-based methods. 
In addition, the nonlinear mean-field terms of the equations together with the possibly non-local operator brought by L\'{e}vy process also make it difficult to apply traditional methods.  

However, these widely used particle methods also have some shortcomings. 
First, these methods require simulation of a large number of particles (i.e. numerically solving a large system of stochastic or ordinary differential equations representing the interacting particles) which results in huge computational costs. 
Second, constructing the solution solved by particle methods requires to store trajectories of all particles, which leads to massive computational storage requirement. 
The two aforementioned difficulties become more prominent when solving high-dimensional problems because they need to simulate and store more particles to ensure accuracy. 

Recently, there has been a growing interest in using deep learning techniques to solve PDEs. 
Deep learning methods such as physics-informed neural networks (PINNs)~\cite{pinn}, Deep Ritz method~\cite{deepritz} and weak adversarial networks (WANs)~\cite{zang2020weak}, Deep BSDE~\cite{han2018solving} have been successfully applied to address various PDE related problems. 
These deep learning approaches have many advantages.
First, they are potential numerical methods to solve high-dimensional problems since they are mesh-free. 
Second, they can fully utilize the computing resources of GPUs. 
However, despite deep learning methods showing many advantages when solving PDEs, they can still encounter some difficulties when addressing PDEs with non-linearity and non-local terms. 
Given the strength of particle methods in handling mean-field equations, a natural question arises: Can we combine particle methods with neural network methods while avoiding their individual shortcomings? 

In this paper, we develop a neural networks based sequential particle method to solve the mean-field SDEs
(\ref{genneral_eq}) which can avoid the two shortcomings of particle methods mentioned above. 
In our method, the specific forms of how $b$ and $\sigma$ depend on the distribution $\mu_t$ can be very diverse, including density dependent type (e.g., porous medium equations, fractional porous medium equations) and distribution dependent type (e.g., Keller-Segel equations, Curie-Weiss mean-field equations), see Section \ref{neumerical results} for the applications of our method to various forms of mean-field equations. 
Building upon ideas and theoretical foundations drawn from the recent paper on sequential propagation of chaos~\cite{du2023sequential} (SPoC), our approach which we refer to as deepSPoC is to use a neural network to fit the empirical distribution of particles rather than directly store all the position of particles. 
This allows us to avoid the storage of a large number of particle trajectories. 
More specifically, we use a neural network that involves continuous time and space as inputs to fit the density function (or distribution) of the simulated particles over a period of time. 
Thanks to the sequential structure of SPoC, we can simulate particles in batches rather than simulate all particles at the same time.
Therefore the algorithm based on SPoC has an iterative form which is suitable for deep learning. 
By constructing appropriate loss function, the neural network can learn and assimilate information from the newly simulated particle batch and gradually converges to the desired mean-field limit of the interacting particle system.
The main contributions of our method can be summarized as follows:
\begin{itemize}
\item[$\bullet$]  By using neural network to parameterize the position of particles, we can avoid storing trajectory of particles, hence save memory space significantly.
\item[$\bullet$] This method inherits the recursive form of SPoC algorithm, so that we don't need to simulate all particles at one time like other particle methods. This alleviates the computational cost of simulating a large number of particles simultaneously and also makes the algorithm more flexible when the number of particles changes due to different accuracy requirements.  
\item[$\bullet$] As a mesh-free neural network based method, this approach can be applied to solve equations in higher space dimensions comparing with traditional methods. In addition, the simulated particles path data in the algorithm can lead to natural and convenient adaptive strategy for choosing training set when dealing with high-dimensional problems.
\item [$\bullet$] In most cases, deepSPoC does not require differentiation to neural networks or at least reduces the order of differentiation, which avoids the huge computational burden caused by automatic differentiation for high-order derivatives. Furthermore, it reduces the requirement for smoothness of neural networks, which allows us to use wider range of neural network structures to fit the solution.  
\end{itemize}

The remainder of this paper is structured as follows. 
In Section 2, we introduce the background theory of deepSPoC, along with its specific implementation and underlying motivation.
Section 3 is some theoretical analysis about deepSPoC algorithm, including a convergence analysis and a posterior error estimation of the algorithm under some simplified assumptions. 
In Section 4 we show the numerical results of the deepSPoC algorithm applied to various types of equations.

\section{DeepSPoC algorithm}
\subsection{Background and a sketch of deepSPoC}\label{Background and a sketch of deepSPoC}
In this subsection we discuss the basic ideas of PoC and SPoC, then we explain how the deepSPoC algorithm is formulated as a generalization and extension of these two methods.

First, the nonlinear Fokker-Planck equations associated to SDEs (\ref{genneral_eq}) are as follows:
\begin{equation}\label{general_pde}
    \partial_t\mu_t(x)=\mathcal{L}^{\ast}(t,\mu_t)(\mu_t(x)),
\end{equation}
where $\mu_t$ is the distribution of random variable $X_t$ in (\ref{genneral_eq}) and the nonlinear operator $\mathcal{L}^{\ast}(t,\mu_t)$, which dependents on both $t$ and $\mu_t$, is from the drift term and the diffusion term driven by L\'{e}vy process $Z_t$ in (\ref{genneral_eq}). For example, if we choose the L\'{e}vy process $Z_t$ to be a Brownian motion $B_t$, then the SDE will become a McKean-Vlasov type and the PDE will have the following form:
\begin{equation}\label{nonlinear-FPE}
   	\partial_t\mu_t(x)=-\nabla_x\cdot (b(t,x,\mu_t)\mu_t(x))+\frac{1}{2}\sum_{i,j=1}^{d}\partial_{x_i}\partial_{x_j}(a_{ij}(t,x,\mu_t)\mu_t(x)),
\end{equation}
which is a second order Fokker-Planck equation. 
The diffusion coefficient $a(t,x,\mu):=\sigma(t,x,\mu)\sigma(t,x,\mu)^{T}$ and $d$ is the dimension of $X_t$. 
In order to derive that $\mu_t$ of (\ref{genneral_eq}) is a weak solution of PDE (\ref{general_pde}), we can choose a test function $\phi$ and apply It\^o's formula to $\phi(X_t)$, then we can get the weak form of the nonlinear evolution equation \eqref{general_pde}. 
For the general It\^o's formula when SDE is driven by L\'evy process and the explicit form of the operator $\mathcal{L}^{\ast}(t,\mu_t)$ (which is the adjoint operator of the infinitesimal generator associated with the process (\ref{genneral_eq})), we refer the reader to \cite{applebaum2009levy}.

Since we have such connection between a nonlinear PDE and a mean-field SDEs, the solution of the PDE can be solved by calculating the SDE.
Note that the mean-field SDE can not be simulated directly by Monte Carlo method since it has mean-field terms and, therefore, not Markovian. 
The famous propagation of chaos (PoC) theory suggests the construction of an interacting particle system to approximate the mean-field SDE. 
The interacting particle system with $N$ particles  \{ $X^{n,N}:n=1,2\dots ,N$ \} usually takes the form
\begin{align}  \label{poc-particle-system}
    \left\{
    \begin{aligned}
        dX_t^{n,N} & = b(t,X_{t-}^{n,N},\mu_t^N)dt+\sigma(t,X_{t-}^{n,N},\mu_t^N)dZ_t^n\\
        \mu_t^N & = \frac{1}{N}\sum_{i=1}^{N}\delta_{X_t^{i,N}}.
    \end{aligned}
    \right.
\end{align}
where $Z^n$ are independent multidimensional L\'evy processes and the initial data $X_0^{n,N}\sim\mu_0$ are i.i.d. $\mathbb{R}^d$-valued random variables. 
In general, the PoC theory demonstrates that for a variety of different types of the mean-field SDE (\ref{genneral_eq}) (including some singular kernel interaction cases, see e.g.,~\cite{serfaty2020mean,jabinwang2018quantitative} and the review paper~\cite{pocreview}), if the particle number $N$ of the interacting particle system tends to infinity, the empirical measure of particles $\mu_t^N$ will converge to the distribution $\mu_t$ in (\ref{genneral_eq}) in a certain sense.
A special case is that when dealing with density dependent mean-field SDE, the empirical measure $\mu_t^N$ in the above interacting particle system can be replaced by the density function $\rho_t^N=\mu_t^N\ast f_{\epsilon}$, which is the mollified empirical measure, where $f_{\epsilon}$ is the mollifier and in this paper, it is often chosen as the Gaussian mollifier:
\begin{equation}\label{gaussian-kernel}
    f_{\epsilon}(x)=\frac{1}{\left(2\pi\epsilon^2\right)^\frac{d}{2}}e^{-\frac{|x|^2}{2\epsilon^2}}.
\end{equation}
An example for the density dependent mean-field SDE is porous medium equation and its PoC result for mollified particle system can be found in \cite{pocforpme}. 

Unlike the classical interacting particle system \eqref{poc-particle-system}, SPoC theory provides a different way for particles to interact: each particle only needs to interact with the preceding ones. 
Therefore, the system has an iterative form, allowing more and more particles to be added progressively.
The interacting particle system $\{X^n \}_{\{n\ge 1\}}$ of SPoC takes the form
\begin{align}\label{spoc-particle-system}
    \left\{
    \begin{aligned}
        dX_t^{n} &= b(t,X_{t-}^{n},\mu_t^{n-1})dt+\sigma(t,X_{t-}^{n},\mu_t^{n-1})dZ_t^n\\
        \mu_t^n  &= \mu_t^{n-1}+\alpha_n\left( \delta_{X_t^n}-\mu_t^{n-1} \right),
    \end{aligned}
    \right.
\end{align}
where $\mu_t^0$ can be chosen as any initial guess of the distribution, $\alpha_n$ denotes the update rate, which is usually decreasing and converge to 0 as $n\rightarrow \infty$, and $X_0^{n}\sim\mu_0$ are i.i.d. $\mathbb{R}^d$-valued random variables.
In \cite{du2023sequential}, it is proved that for McKean-Vlasov diffusion (when $Z_t$ is a Brownian motion), if the drift function $b$
and the diffusion matrix $\sigma$ satisfy the monotonicity condition, and the update rate $\alpha_n$ is selected appropriately, then $\mu_t^n$ will converge to $\mu_t$ of the mean-field SDE(\ref{genneral_eq}) according to Wasserstein distance as $n\rightarrow\infty$. 

System (\ref{spoc-particle-system}) is a particle by particle system, a batch by batch version of SPoC is also introduced in \cite{du2023sequential}, where the $n$-th batch contains $K$ particles $\{X^{i,n}:i=1,\dots,K\}$ and by $\hat{\mu}^n$ we denote the empirical measure of the trajectories of this $n$-th batch particles, i.e.,
\begin{equation}\label{empirical measure of a batch of particles}
    \hat{\mu}^n:=\frac{1}{K}\sum_{i=1}^{K}\delta_{X^{i,n}},
\end{equation} 
and naturally, we denote the time marginal of $\hat{\mu}^n$ by
$$\hat{\mu}^n_t:=\frac{1}{K}\sum_{i=1}^{K}\delta_{X^{i,n}_t},$$
then the interacting particle system for batch by batch SPoC is constructed as
\begin{align}\label{spoc-particle-system-batchbybatch}
    \left\{
    \begin{aligned}
        dX_t^{i,n} &= b(t,X_{t-}^{i,n},\mu_t^{n-1})dt+\sigma(t,X_{t-}^{i,n},\mu_t^{n-1})dZ_t^{i,n},i=1,\dots,K\\
        \mu_t^n  &= \mu_t^{n-1}+\alpha_n\left( \hat{\mu}^n_t-\mu_t^{n-1} \right),
    \end{aligned}
    \right.
\end{align}
with the initial data $X_0^{i,n}\sim \mu_0$ still i.i.d. random variables. 
This batch by batch form is usually more efficient in practice because it allows a batch of particles to be merged into a single tensor for parallel computation.

Now we begin to introduce the deepSPoC algorithm, and we can see how it is designed based on SPoC.  
As a numerical algorithm, deepSPoC exploits the iterative form of SPoC to become a deep learning algorithm.  
The interacting particle system belongs to deepSPoC is
\begin{align}\label{deepspoc-particle-system-batchbybatch}
    \left\{
    \begin{aligned}
            dX_t^{i,n}=& b\left(t,X_{t-}^{i,n},\rho_{NN,\boldsymbol{\theta}_{n-1}}(t,\cdot)\right)dt+\sigma\left(t,X_{t-}^{i,n},\rho_{NN,\boldsymbol{\theta}_{n-1}}(t,\cdot)\right)dZ_t^{i,n}, i=1,\dots ,K\\
        \rho_{NN,\boldsymbol{\theta}_{n}} =& F\left( \rho_{NN,\boldsymbol{\theta}_{n-1}}, \hat{\mu}^n, \alpha_n \right),
    \end{aligned}
    \right.
\end{align}
where $\rho_{NN,\boldsymbol{\theta}_{n-1}}(n\ge 1)$ is a neural network ($\rho_{NN,\boldsymbol{\theta}_{0}}$ is the initialized neural network) that for fixed $t$, $\rho_{NN,\boldsymbol{\theta}_{n-1}}(t,\cdot): \mathbb{R}^d\rightarrow \mathbb{R}$ tries to fit a density function that is approximate to the weighted empirical distribution formed by all simulated particles like $\mu_t^{n-1}$ in (\ref{spoc-particle-system-batchbybatch}). 
The most important part of the system is the operator $F$, which represents the method of how to update the neural network. More specifically, when given the current neural network $\rho_{NN,\boldsymbol{\theta}_{n-1}}$, empirical measure of newly simulated particle batch $\hat{\mu}^n$ and the update rate $\alpha_n$, the operator $F$ will update the current neural network's parameter $\boldsymbol{\theta_{n-1}}$ to $\boldsymbol{\theta_{n}}$ and therefore give the updated neural network $\rho_{NN,\boldsymbol{\theta_n}}$.
The complete definition of $F$ includes the construction of a loss function, the computation of gradient according to this loss function, and the execution of one gradient descent step on the parameters by an optimizer. We will explain these in detail in the next subsection.

Comparing deepSPoC's system (\ref{deepspoc-particle-system-batchbybatch}) with SPoC's system (\ref{spoc-particle-system-batchbybatch}), in order to achieve that as $n\rightarrow \infty$, $\rho_{NN,\boldsymbol{\theta}_n}(t,\cdot)$ converges to the density of $\mu_t$ in (\ref{genneral_eq}), the design of the deepSPoC algorithm needs to meet the following two requirements:
\begin{itemize}
\item[$\bullet$] The neural network we choose have the ability to approximate the empirical measure formed by simulated particles in a period of time.
\item[$\bullet$] The parameter update method $F$ can make the neural network absorb the information from newly simulated particles. For example, it can roughly achieve the effect of a weighted average, where the weights depend on $\alpha_n$, similar to how $\mu_t^n$ is updated in \eqref{spoc-particle-system-batchbybatch}.
\end{itemize}

While the theory of PoC and SPoC and the structure of their particle systems are the foundations of our deepSPoC algorithm, it should be acknowledged that the general equation (\ref{genneral_eq}) and many of the specific equations we compute in our numerical experiments actually go beyond the scope of the SPoC theory or even PoC theory, especially for the cases of singular interaction kernels and the equation which is driven by general L\'{e}vy process. 
However we hope that the background theory we mention in this subsection can partially and intuitively explain why the deepSPoC algorithm can performs well in a broad range of equations. 

\begin{remark}
    Note that the system (\ref{deepspoc-particle-system-batchbybatch}) can be applied to both density dependent case and distribution dependent case, while for PoC and SPoC's particle systems (\ref{poc-particle-system})(\ref{spoc-particle-system})(\ref{spoc-particle-system-batchbybatch}) we need to perform additional mollification on the empirical measure of particles when dealing with density dependent case.
Actually, it can be found in the next subsection that the mollification of empirical measure is included in the definition of $F$, and also implicitly included in the action of fitting time-dependent density functions by neural networks. 
\end{remark}

\subsection{Detailed implementation of deepSPoC algorithm}\label{detailed explanation of deepSPoC} 
The whole algorithm is based on the deepSPoC's interacting particle system \eqref{deepspoc-particle-system-batchbybatch}. According to \eqref{deepspoc-particle-system-batchbybatch}, in the $n$-th iteration, we need to simulate a batch of particles by substituting the neural network into the SDE, then applying the operator $F$ to update the neural network; we refer to this entire process as an epoch of deepSPoC. 
Now we explain the detailed process of a complete epoch of deepSPoC in practice.

Assume the current neural network is $\rho_{NN,\boldsymbol{\theta}}$. 
First we substitute the current neural network $\rho_{NN,\boldsymbol{\theta}}$ into the SDE \eqref{deepspoc-particle-system-batchbybatch} and use Euler-Maruyama scheme to numerically solve it. 
We equally divide the total time interval $[0,T]$ into $M$ subintervals, i.e.,$0=t_0<t_1<\dots<t_M=T$, the time step size is $\Delta t=T/M$, then simulate a batch of particles $\{ X^i : i=1,\dots,K \}$ as follows (for the convergence results of Euler scheme for L\'evy driven SDEs, readers can refer to \cite{eulerscheme}) :
\begin{equation}\label{euler-scheme}
    X^i_{t_m}-X^i_{t_{m-1}}=b\left(t_{m-1},X^i_{t_{m-1}},\rho_{NN,\boldsymbol{\theta}}(t_{m-1},\cdot)\right)\Delta t+\sigma\left(t_{m-1},X^i_{t_{m-1}},\rho_{NN,\boldsymbol{\theta}}(t_{m-1},\cdot)\right)\Delta Z_t^i,
\end{equation}
with the initial points $\{X_0^i\}_{\{i=1,\cdots,K\}}$ i.i.d. sampled from the initial distribution $\mu_0$. Then we get the empirical measure of the particles at each discrete time point, i.e.,
$$
\hat{\mu}_{t_m}=\frac{1}{K}\sum_{i=1}^{K}\delta_{X_{t_m}^i}, m=0,\cdots,M,
$$
which can be seen as the discrete version of $\hat{\mu}=(1/K)\sum_{i=1}^{K}\delta_{X^i}$.

Second we use this empirical measure to update the parameters of the current neural network, which is the most important part of deepSPoC algorithm. 
When the update rate $\alpha>0$ is chosen, the implementation of the update method (which can be seen as the discretized version of operator $F$) can be divided into three steps:
\begin{itemize}
    \item[Step1] Construct a loss function $L\left(\rho_{NN,\boldsymbol{\theta}},\{\hat{\mu}_{t_m}\}_{\{m=0,\dots,M\}}\right)$ which can be calculated by the current neural network and the empirical measure of particles.
    \item[Step2] Compute the gradient of the loss function with respect to the neural network's parameters, that is, compute $\nabla_{\boldsymbol{\theta}}L\left(\rho_{NN,\boldsymbol{\theta}},\{\hat{\mu}_{t_m}\}_{\{m=0,\dots,M\}}\right)$. 
    \item[Step3] Execute one gradient descent step on the neural network parameters by an optimizer (such as the Adam optimizer) with the learning rate $\alpha$.
\end{itemize}
Then we finish a complete epoch of the deepSPoC algorithm. A general framework of deepSPoC is shown in Fig \ref{network}. 
%(flow chart)
\begin{figure}[ht] 
        \centering
        \includegraphics[scale=0.57]{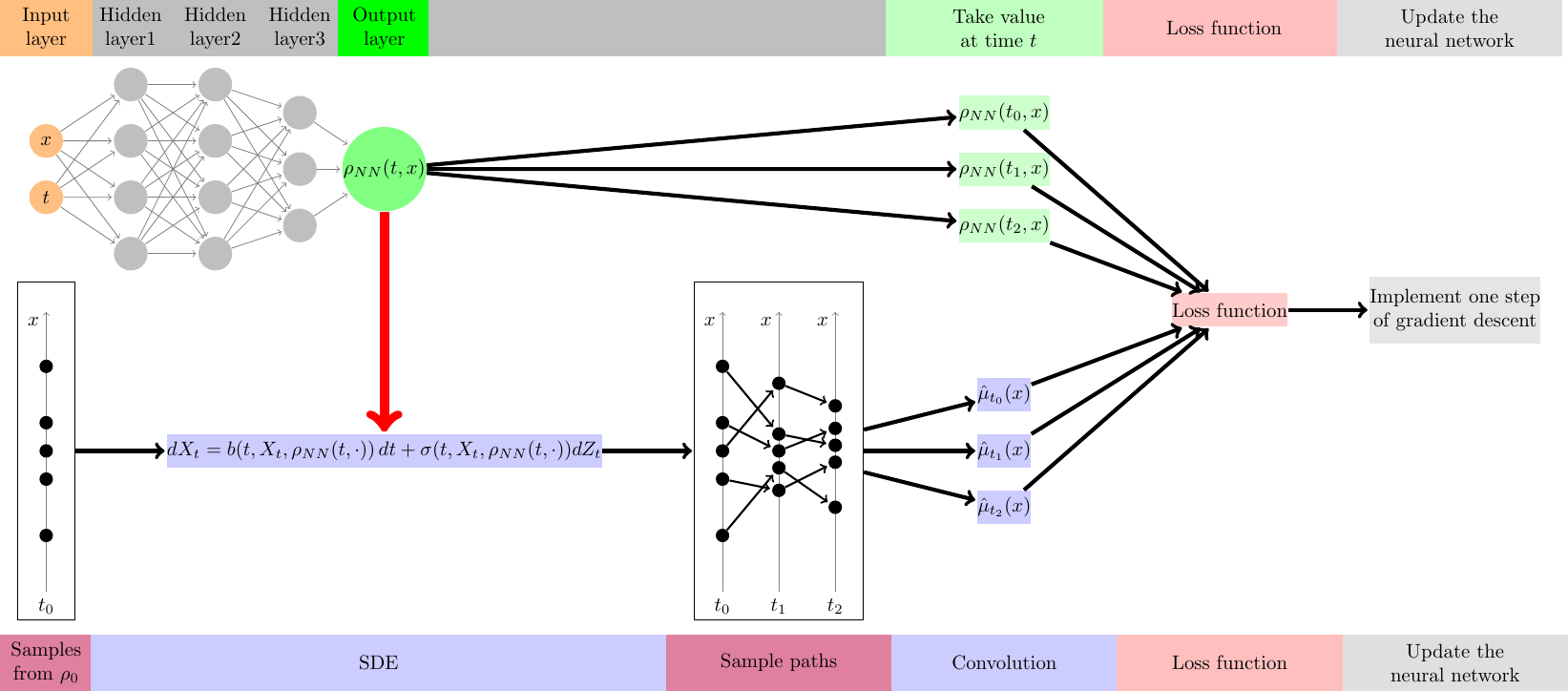}
        \caption{The flow chart of the deepSPoC}
        \label{network}
\end{figure}

Here are some further discussions on the loss function $L$. 
We point out that the role of the loss function in deepSPoC is very different from its role in many classical deep learning algorithms. 
The entire algorithm is not solving an optimization problem like a typical machine learning task.
Therefore referring to $L$ as the loss function might be somewhat misleading because we do not intend to minimize it during the training process.
In fact, since the distribution of the simulated particles changes with epochs, the loss function $L$ with respect to $\boldsymbol{\theta}$ also changes with epochs. 
We use gradient descent as a method to help our neural network assimilate the information from the newly simulated particles (for example, approximating the effect of weighted average of empirical measures) rather than minimizing the current loss function.
That is also why we only execute one gradient descent during each epoch.

Now the question is what the explicit form of the function $L$ is.
We have tried different forms of $L$ and sometimes the choice of $L$ also depends on the type of neural network we use. 
In the following subsections, we introduce the two types of neural networks with their corresponding loss functions that we have tried in our numerical experiments.

\subsubsection{DeepSPoC with fully connected neural networks}
We first consider the $\rho_{NN,\theta}$ to be a fully connected neural network. 
The fully connected neural network $\rho_{FC,\boldsymbol{\theta}}: \mathbb{R}^{d+1}\rightarrow \mathbb{R}$ with $m$ hidden layers has the following well-known structure:
$$
\rho_{FC,\boldsymbol{\theta}}(t,x)=\text{L}_{m}\circ \text{R} \circ \text{L}_{m-1} \cdots \circ \text{R}\circ \text{L}_0(t,x),
$$
where $x\in \mathbb{R}^d$ is space variable and $t\in \mathbb{R}$ is time variable, $\{\text{L}_i\}_{\{i=0,\dots,m\}}$ are linear transformations and $\text{R}$ is the activation function which is typically chosen as ReLU in our experiments. 

By $\hat{\rho}_{t_m}$ we denote the mollified empirical measure $\hat{\mu}_{t_m}\ast f_\epsilon$. 
Recall that $f_\epsilon$ is the Gaussian mollifier we have defined in \eqref{gaussian-kernel}, and the mollification scale $\epsilon$ should be determined in advance, which depends on the accuracy requirement of the numerical solution. 
The first loss function we construct which is denoted by $L_{\text{sq}}$ has the following $L^2$-distance form:
\begin{equation}\label{Lsq}
L_{\text{sq}}\left(\rho_{FC,\boldsymbol{\theta}},\{\hat{\mu}_{t_m}\}_{\{m=0,\dots,M\}}\right)=\sum_{m=0}^{M}\Vert \rho_{FC,\boldsymbol{\theta}}(t_m,\cdot)-\hat{\rho}_{t_m}(\cdot) \Vert_{L^2}^2.
\end{equation}
 To estimate this $L^2$-distance, for every epoch, we select $N$ training points uniformly from a pre-determined truncated region $\Omega_0\subset \mathbb{R}^d$ to form a training set $S=\{x^j\}_{\{j=1,\cdots,N\}}$. 
The bounded region $\Omega_0$ is pre-determined by our prior knowledge and estimation of the solution. 
For simplicity of our notation, we assume the constant of the volume of $|\Omega_0|$ is $1$ throughout this section. 
In fact, in experiments where it is necessary to estimate the loss function, we usually ignore this constant, as it does not have any 
big impact on the results. 
Using the training set $S$, we can estimate the above $L_{\text{sq}}$ as follows:
\begin{align}\label{L_2 loss}
    \begin{aligned}
        L_{\text{sq}}\left(\rho_{FC,\boldsymbol{\theta}},\{\hat{\mu}_{t_m}\}_{\{m=0,\dots,M\}}\right)\approx &\sum_{m=0}^{M}\frac{1}{N}\sum_{x\in S} \left|\rho_{FC,\boldsymbol{\theta}}(t_m,x)-\hat{\rho}_{t_m}(x) \right|^2\\
        =&\frac{1}{N}\sum_{m=0}^{M}\sum_{x \in S}\left| \rho_{FC,\boldsymbol{\theta}}(t_m,x)-\frac{1}{K}\sum_{i=1}^K\frac{1}{(2\pi\epsilon^2)^{\frac{d}{2}}}e^{-\frac{| x-X^i_{t_m}|^2}{2\epsilon^2}} \right|^2.
    \end{aligned}
\end{align}
In practice, we take the derivative of the right-hand side of the above equation to approximate the gradient of the loss function $L_{\text{sq}}$ with respect to $\boldsymbol{\theta}$.

A large part of our numerical results (see in section \ref{neumerical results}) are based on this $L^2$-distance function $L_{\text{sq}}$ together with fully connected neural networks. 
A natural problem for the fully connected neural network is that it is not a strict probability density function at each time point. 
To compensate for this drawback of fully connected networks, we need to carefully select the loss function.
Specifically, we find that together with this appropriate $L_{\text{sq}}$, the fully connected neural network can gradually acquire the property of probability density function during the training process, and perform well in solving various types of mean-field equations. 

\begin{remark}
When calculating the gradient, we view $\hat{\rho}_{t_m}(x^j)=(\hat{\mu}_{t_m}\ast f_\epsilon)(x^j)$ as a constant which is independent of $\boldsymbol{\theta}$, though it actually depends on the neural network during the simulation (\ref{euler-scheme}). Therefore during the experiment, the back propagation of gradient is detached for the process of simulating particles. 
\end{remark}

\begin{remark}
    The loss function $L_{\text{sq}}$ can be further divided by the time discrete steps $M+1$, which makes the update process more reasonable to be a discretized version of operator $F$. 
    However, the experimental results do not have significant difference whether divide this constant or not.  
    In addition, when using optimizers like Adam optimizer, the update of the parameter also depends on the previous gradients, so the expression $\rho_{FC,\boldsymbol{\theta}_{n}} = F\left( \rho_{FC,\boldsymbol{\theta}_{n-1}}, \hat{\mu}^n, \alpha_n \right)$ in \eqref{deepspoc-particle-system-batchbybatch} is actually not entirely rigorous. 
\end{remark}

\subsubsection{DeepSPoC with normalizing flows}
Instead of applying a fully connected neural network, we use a temporal normalizing flow~\cite{tang2022adaptive,feng2021solving} $\rho_{NF,\boldsymbol{\theta}}(t,\cdot)$ with $\boldsymbol{\theta}$ as unknown parameter to approximate the time-dependent density function $\rho_t$ in this section. 
Normalizing flows~\cite{rezende2015variational,papamakarios2021normalizing} are a class of generative models that construct complex probability distributions by transforming a simple initial distribution (like a Gaussian) through a series of invertible and differentiable mappings. 
A temporal normalizing flow is a time dependent generative model which at any fixed time $t$ in temporal domain, it is a normalising flow.
Unlike the fully connected neural network, for fixed $t\in [0,T]$, the normalizing flow $\rho_{NF,\boldsymbol{\theta}}(t,\cdot)$ as a density function automatically satisfies the following two rules
\begin{itemize}
    \item Nonnegativity, i.e., $\rho_{NF,\boldsymbol{\theta}}(t,x)\ge 0$, $\forall x\in \mathbb{R}^d$;
    \item Its integral equals to one, i.e., $\int\rho_{NF,\boldsymbol{\theta}}(t,x)\,dx=1$.
\end{itemize}
When a fully connected neural network is used to approximate the density function, these two rules can be violated. 
Another advantage of using temporal normalizing flow is we can draw new samples from it directly while we have to use accept-reject sampling in fully connected neural network case.
This direct sampling feature can significantly improve efficiency when numerically solving the SDE as in \eqref{euler-scheme} which $b$ (or $\sigma$) involves integrals with respect to the density function.

In particular, we use a recently developed normalizing flow called KRnet~\cite{tang2020deep} to build temporal generative model.
For the implementation of temporal KRnet to deepSPoC, we can also use the $L^2$-distance loss function $L_{\text{sq}}$ defined in \eqref{Lsq}.
Moreover, another significant advantage of using normalizing flows is that we can use more probabilistic loss functions because of their inherent property to be a probability density function. 
For example, instead of comparing the $L^2$-distance between $\hat{\rho}_{t_m}$ and $\rho_{NF,\boldsymbol{\theta}}(t_m,\cdot)$, we can also construct the following loss function:
\begin{equation}
\sum_{m=0}^{M}D_{\text{KL}} \left(\hat{\rho}_{t_m}\parallel \rho_{NF,\boldsymbol{\theta}}(t_m,\cdot)\right),
\end{equation}
% \begin{equation}
%  L\left(\rho_{FC,\boldsymbol{\theta}},\{\hat{\mu}_{t_m}\}_{\{m=0,\dots,M\}}\right)==\sum_{m=0}^{M}D_{\text{KL}}^2\left(\hat{\rho}_{t_m}\parallel \rho_{NF,\boldsymbol{\theta}}(t_m,\cdot)\right),
% \end{equation}
where the KL-divergence from $\hat{\rho}_{t_m}$ to $\rho_{NF,\boldsymbol{\theta}}(t_m,\cdot))$ is defined as
\begin{equation}\label{KL_Divergence}
D_{\text{KL}}(\hat{\rho}_{t_m}\parallel \rho_{NF,\boldsymbol{\theta}}(t_m,\cdot))=\int_{\mathbb{R}^d}\hat{\rho}_{t_m}(x)\log(\hat{\rho}_{t_m}(x))\,dx-\int_{\mathbb{R}^d}\hat{\rho}_{t_m}(x)\log(\rho_{NF,\boldsymbol{\theta}}(t_m,x))\,dx.
\end{equation}
Since the first term in \eqref{KL_Divergence} is not a function of the parameter $\boldsymbol{\theta}$, we only need to calculate the second term at different time steps and add them together: 
\begin{equation}\label{KL loss}
L_{\text{kl}}\left(\rho_{NF,\boldsymbol{\theta}},\{\hat{\mu}_{t_m}\}_{\{m=0,\dots,M\}}\right)=-\sum_{m=0}^{M}\int_{\mathbb{R}^d}\hat{\rho}_{t_m}(x)\log(\rho_{NF,\boldsymbol{\theta}}(t_m,x))\,dx.
\end{equation}

Suppose the training set $S=\{x^j\}_{\{j=1,\dots,N\}}$ is sampled from the uniform distribution on $\Omega_0\subset \mathbb{R}^n$, then $L_{\text{kl}}$ can be estimated as follows:
\begin{equation}
L_{\text{kl}}\left(\rho_{NF,\boldsymbol{\theta}},\{\hat{\mu}_{t_m}\}_{\{m=0,\dots,M\}}\right)\approx -\sum_{m=0}^{M}\frac{1}{N}\sum_{x\in S}\hat{\rho}_{t_m}(x)\log(\rho_{NF,\boldsymbol{\theta}}(t_m,x)).
\end{equation}
Again we use the gradient of the right-hand side of the above equation with respect to $\theta$ to approximate the gradient of the loss function $L_{\text{kl}}$ with respect to $\theta$.

Another natural way to build loss function is constructing it as a negative logarithm likelihood function:
\begin{align}\label{path loss}
    \begin{aligned}
    L_{\text{path}}\left(\rho_{NF,\boldsymbol{\theta}},\{\hat{\mu}_{t_m}\}_{\{m=0,\dots,M\}}\right)&= -\sum_{m=0}^{M}\int \text{log}(\rho_{NF,\boldsymbol{\theta}}(t_m,x))\hat{\mu}_{t_m}(dx)\\
&=-\sum_{m=0}^{M}\frac{1}{K}\sum_{i=1}^K\log(\rho_{NF,\boldsymbol{\theta}}(t_m,X_{t_m}^i)).
    \end{aligned}
\end{align}
For this loss function, we do not need to sample training points to estimate it. 
More importantly, it does not require mollifying the empirical measure $\hat{\mu}_{t_m}$. 
Therefore using this loss function can reduce the computational cost and it is also more suitable to deal with high-dimensional problems because of its mollification free form. This loss function is also used to develop deep solvers for other types of SDEs~\cite{lu2022learning}.  

% \begin{remark}
% When computing the loss function \eqref{KL_loss}, we need to take the logarithm to the approximate density function $\hat{\rho}_{t_m}$ and $\hat{\rho}_{t_m}$ can equal zero at some training point. Therefore, we add a small positive value $\epsilon=10^{-10}$ to the approximate density function to avoid computing $\log 0$ in our computation.
% \end{remark}
\begin{remark}\label{connection kl and path}
In particular, if the training sets $S_m=\{x_m^j\}_{\{j=1,\dots,N\}}$ in the \eqref{KL loss} are sampled from the density function $\hat{\rho}_{t_m}$, then $L_{\text{kl}}$ can be estimated as 
\begin{equation}\label{simplified KL loss}
L_{\text{kl}}\left(\rho_{NF,\boldsymbol{\theta}},\{\hat{\mu}_{t_m}\}_{\{m=0,\dots,M\}}\right)\approx -\sum_{m=0}^{M}\frac{1}{N}\sum_{x\in S_m}\log(\rho_{NF,\boldsymbol{\theta}}(t_m,x)).
\end{equation}
A straightforward training set which can be used in \eqref{simplified KL loss} is formed by the sampled particles $\left\{X_{t_m}^i\right\}$ at each time $t_m$ (ignoring the bias caused by mollification). In this case, \eqref{simplified KL loss} can be written as
\begin{equation}\label{simplified KL loss with path_2}
L_{\text{kl}}\left(\rho_{NF,\boldsymbol{\theta}},\{\hat{\mu}_{t_m}\}_{\{m=0,\dots,M\}}\right)\approx -\sum_{m=0}^{M}\frac{1}{K}\sum_{i=1}^K\log(\rho_{NF,\boldsymbol{\theta}}(t_m,X_{t_m}^i)).
\end{equation}
%We do not need to calculate the mollified empirical measure $\hat{\rho}_{t_m}$ in this case. 
Comparing \eqref{simplified KL loss with path_2} with \eqref{path loss}, we can see that the value of $L_{\text{kl}}$ is actually close to $L_{\text{path}}$.
\end{remark}
% \begin{remark}
% Another way to obtain the loss in \eqref{simplified KL loss with path} is to treat the whole computation as a large data fitting process. By constructing the logarithm maximum likelihood function and maximize it, we find the optimal parameter $\boldsymbol{\theta}$.
% \end{remark}

\subsubsection{The choose of update rate and the convergence of the algorithm}\label{how to ensure convergence}
We have already discussed some special aspects of the deepSPoC algorithm compared to other deep learning algorithms, one major difference is the different roles of the loss function in these algorithms.
For many other deep learning algorithms, the value of the loss function becoming stable and no longer decreasing is a signal of convergence.
However, things are different when considering deepSPoC's convergence, because our object is not minimizing the loss function. 
In fact, in our numerical experiments, we observe that the value of loss function fluctuates around a certain level for most of time after a rapid descent at the beginning.
Fig.~\ref{train_loss vs L2err} shows this phenomenon when solving the 1D porous medium equation using fully connect neural network, where the orange line is the $L^2$-distance loss function changes with epochs, and the blue line is the relative $L^2$ error between the deepSPoC's numerical solution and the real solution.
We can see that although the numerical solution is becoming increasingly accurate, the loss value has not changed significantly.  

Recall that for the interacting particle system of SPoC (\ref{spoc-particle-system-batchbybatch}), $\mu_t^n$ converges when the update rate $\alpha_n$ converges to zero with a certain rate. 
However the update rate cannot decrease to 0 too fast in order to obtain the enough information from particles.
For deepSPoC, we view the learning rate we set for the optimizer in each epoch plays the similar role as $\alpha_n$ in (\ref{spoc-particle-system-batchbybatch}). 
Since $\alpha_n$ controls the convergence of the SPoC, the learning rate for each epoch (we also denote it by $\alpha_n$) should also be important for deepSPoC to ensure that the algorithm converges and converges to a good result.
In practice, we find that reducing the learning rate by a contraction factor $\gamma<1$ for every certain number of epochs (denoted by $\Gamma$) can strikes a balance between convergence speed and convergence results, i.e., for $n\ge 1$, set
$$
\alpha_n = \alpha_0\gamma^{\lfloor \frac{n}{\Gamma} \rfloor},
$$-
where $\lfloor x \rfloor$ represents rounding down of $x$.

    \begin{figure}[htbp]\label{fig:relative l2 err and loss}
        \centering
        \includegraphics[scale=0.8]{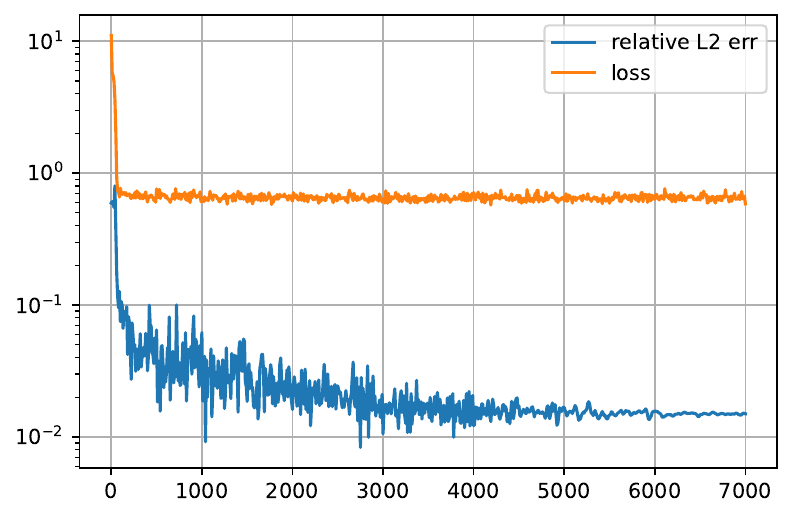}
        \caption{Loss v.s. relative $L^2$ error in the 1D porous medium equation.}
        \label{train_loss vs L2err}
    \end{figure}

However, it should be admitted that although we can ensure the convergence of deepSPoC by setting appropriate update rate $\alpha_n$, we do not know how far the result is from the true solution during the training process. 
Some theoretical posterior error estimation analysis is given in Section \ref{posterior}. 
More general and convenient forms of posterior estimation of the error between the numerical solution and true solution will be our future work.

\subsubsection{Adaptive sampling strategy}\label{adaptive strategy}
In the previous section, for the case that we need to choose a training set $S$ in each epoch to estimate the loss function (e.g. $L_{\text{sq}}$), $S$ is uniformly sampled from a truncated region $\Omega_0$.
According to our experimental results, this uniform sample method for estimating $L_{\text{sq}}$ is effective for low-dimensional problems with a spatial dimension of 1 or 2. 
However, when dealing with higher-dimensional problems, we find that choosing training set uniformly is inefficient, because the main support of the solution (i.e., the region that contains most of the probability for the density function) only takes a low proportion of the total volume of $\Omega_0$ for high-dimensional problems, and most of the training points $x^j$ contribute very little to the loss function.
%$$
%\rho_{FC,\boldsymbol{\theta}}(t_m,x^j)-\hat{\rho}_{t_m}(x^j)\approx 0-0.
%$$
Therefore we need some adaptive strategy for updating our training set $S$ in high-dimensional problems. 

Fortunately, the algorithm itself provides convenient and natural adaptive method. 
We can use the simulated particles in the last epoch with an additional noise as part of our training set, so that our training points can be more concentrated in the region that containing more information about the density function. 
For this adaptive strategy, the training set at $t_m$ which is denoted by $S_m$ consists of two parts
$$
S_m = S_{\text{uniform}}\cup S_{\text{adaptive}}^m,
$$
where the training points in $S_{\text{uniform}}=\{x^j\}_{\{j=1,\dots,N_1\}}$ is uniformly selected in the area $\Omega_0$ like before, and $S_{\text{adaptive}}^m$ is defined as follows:
$$
S_{\text{adaptive}}^m = \left\{ x^j|x^j=\hat{X}^j_{t_m}+\sigma\delta_j ,j=1,\dots,N_2\right\}
$$
where $\hat{X}^j_{t_m}$ is randomly selected from sampled particles at $t_m$ in the last epoch, here we assume $N_2\le K$ where $K$ is the batch size of sampled particles (if it is the first epoch, then $\hat{X}^j_{t_m}$ is just randomly sampled from the given initial distribution or simply set $S_{\text{adaptive}}^m=\emptyset$), and $\{\delta_j\}_{\{j=1,\dots,N_2\}}$ are i.i.d. sampled noise from standard Gaussian distribution $\mathcal{N}(0,1)$, the parameter $\sigma\ge 0$ controls the intensity of the noise. 
The purpose of adding the noise is to make our adaptive training set retain the ability of ``exploration". 
We use this new training set to estimate the loss function. 
For example, the adaptive version of $L^2$-distance loss function is 
\begin{align}\label{adaptive loss}
    \begin{aligned}
        L_{\text{adaptive}}\left( 
 \rho_{FC,\boldsymbol{\theta}},\{\hat{\mu}_{t_m}\}_{\{m=0,\dots,M\}}\right)=&\sum_{m=0}^{M}\frac{1}{N_1+N_2}\sum_{x\in S_m} \left|\rho_{FC,\boldsymbol{\theta}}(t_m,x)-(\hat{\mu}_{t_m}\ast f_\epsilon)(x) \right|^2\\
        =&\frac{1}{N_1+N_2}\sum_{m=0}^{M}\sum_{x\in S_m}\left| \rho_{FC,\boldsymbol{\theta}}(t_m,x)-\frac{1}{K}\sum_{i=1}^K\frac{1}{(2\pi\epsilon^2)^{\frac{d}{2}}}e^{-\frac{| x-X^i_{t_m}|^2}{2\epsilon^2}} \right|^2.
    \end{aligned}
\end{align}

The adaptive sampling strategy for deepSPoC with KL divergence loss is different from that for the $L^2$-distance loss. 
% Suppose $\rho_{t_m}$ is the exact density function at time $t_m$. In \eqref{KL loss semi continuous}, each integral in the loss function is actually an approximation to 
% \begin{equation}\label{approximation error}
% \int_{\Omega}\rho_{t_m}(x)\log(\rho_{NF,\boldsymbol{\theta}}(t_m,x))\,dx\approx \int_{\Omega}\hat{\rho}_{t_m}(x)\log(\rho_{NF,\boldsymbol{\theta}}(t_m,x))\,dx.
% \end{equation}
% The approximation here brings the error(approximation error). On the other hand, the Monte Carlo approximation also introduces error(statistical error), i.e.
% \begin{equation}\label{statistical error}
% \int_{\Omega}\hat{\rho}_{t_m}(x)\log(\rho_{NF,\boldsymbol{\theta}}(t_m,x))\,dx\approx \frac{1}{N}\sum_{j=1}^N\hat{\rho}_{t_m}(x^j)\log(\rho_{NF,\boldsymbol{\theta}}(t_m,x^j)).
% \end{equation}
% If we further assume the exact solution can be represented by the time dependent normalizing flow we used and there is no optimization error, then the error of the approximation solution is closely related to the error 
% $$
% \begin{aligned}
% &\int_{\Omega}\rho_{t_m}(x)\log(\rho_{NF,\boldsymbol{\theta}}(t_m,x))\,dx-\frac{1}{N}\sum_{j=1}^N\hat{\rho}_{t_m}(x^j)\log(\rho_{NF,\boldsymbol{\theta}}(t_m,x^j))\\
% =& \int_{\Omega}(\rho_{t_m}-\hat{\rho}_{t_m}(x))\log(\rho_{NF,\boldsymbol{\theta}}(t_m,x))\,dx\\
% &+\left(\int_{\Omega}\hat{\rho}_{t_m}(x)\log(\rho_{NF,\boldsymbol{\theta}}(t_m,x))\,dx-\frac{1}{N}\sum_{j=1}^N\hat{\rho}_{t_m}(x^j)\log(\rho_{NF,\boldsymbol{\theta}}(t_m,x^j))\right).
% \end{aligned}
% $$
Adding new non-uniform sampled training points to the original uniform training set $S_{\text{uniform}}$ can change the computing measure of the integrals in the $L_{\text{kl}}$. In this case, the estimation is not unbiased which leads to large errors. 
Therefore, the adaptive sampling strategy for $L^2$-distance loss is not suitable here. 

In order to reduce the statistical errors from Monte Carlo approximation of the integrals, we can use importance sampling method. 
Specifically, for any time $t_m$, suppose that $S_m=\{x_m^j\}_{\{j=1,\dots,N\}}$ is sampled from distribution $\eta_m$, then the $L_{\text{kl}}$ can be estimated by
\begin{align}\label{adaptive loss}
    \begin{aligned}
      L_{\text{kl}}\left( 
 \rho_{NF,\boldsymbol{\theta}},\{\hat{\mu}_{t_m}\}_{\{m=0,\dots,M\}}\right)=& -\sum_{m=0}^{M}\int_{\mathbb{R}^d}\hat{\rho}_{t_m}(x)\log(\rho_{NF,\boldsymbol{\theta}}(t_m,x))\,dx \\
 =&-\sum_{m=0}^M\int_{\mathbb{R}^d}\frac{\hat{\rho}_{t_m}(x)\log(\rho_{NF,\boldsymbol{\theta}}(t_m,x))}{\eta_m(x)}\eta_m(x)\,dx\\
       \approx & -\sum_{m=0}^M\frac{1}{N}\sum_{x\in S_m}\frac{\hat{\rho}_{t_m}(x)\log(\rho_{NF,\boldsymbol{\theta}}(t_m,x))}{\eta_m(x)}.
    \end{aligned}
\end{align}
It can be found that if we choose $\eta_m$ as $\eta_m^{\ast}$ with the following density function:
$$
\eta_m^{\ast}(x)=\frac{|\hat{\rho}_{t_m}(x)\log(\rho_{NF,\boldsymbol{\theta}}(t_m,x))|}{\int_{\mathbb{R}^d}|\hat{\rho}_{t_m}(x)\log(\rho_{NF,\boldsymbol{\theta}}(t_m,x))|\,dx},
$$
the variance of the approximation is the smallest.
However, we do not know the integral value in the denominator before we compute it. 
Therefore, our task is to find a computable and suitable $\eta_m(x)$ such that variance of the approximation decays. 
A reasonable choice here is $\eta_m(x)=\hat{\rho}_{t_m}(x)$. 
Although $\hat{\rho}_{t_m}(x)$ is probably not the best density to reduce the statistical error, we notice that in high-dimensional case, this density function can avoid the case that a large number of training points are located close to the boundary even if there are few simulated particles there. 
In this case, we need to use accept-reject to draw samples from $\hat{\rho}_{t_m}(x)$. 
In high-dimensional case, accept-reject sampling is expensive and with low efficiency.
To avoid this, we can take $\eta_m(x)=\rho_{NF,\boldsymbol{\theta}}(t_m,x)$. 
Since $\rho_{NF,\boldsymbol{\theta}}(t_m,x)$ is a normalizing flow, we can directly draw samples from it.

For the loss function $L_{\text{path}}$ \eqref{path loss}, there is no statistical error when using this loss function, since we do not need to sample training sets to estimate it. Hence, we do not design an adaptive method for this loss function.
%For the loss function $L_{\text{path}}$, since we use path data $\left\{X_{t_m}^i\right\}$, $i=1,\cdots,K$, $m=0,\cdots,M$ directly, there is no spatial statistical error in the $L$ function \eqref{simplified KL loss with path}. On the other hand, from the connection in Remark \ref{connection kl and path}, \eqref{simplified KL loss with path} has a built-in application of importance sampling in some sense. Hence, we do not design an adaptive method for the deepSPoC with path data.

\subsubsection{Summary of algorithms: classified by types of neural networks together with loss functions}
The deepSPoC algorithms are summarized as follows. 
In Algorithm \ref{Deep SPoC Algorithm with fcnn}, we present the deepSPoC using fully connected neural network together with $L^2$-distance loss function $L_{\text{sq}}$. 
We present both vanilla and adaptive deepSPoC in the Algorithm. 
By vanilla deepSPoC we mean the deepSPoC algorithm without using any special adaptive strategies introduced in Section \ref{adaptive strategy}. 

%When the Vanilla deepSPoC is applied, we use a uniform training set. 
%When the adaptive deepSPoC is applied, we build a training set by adding path data to the original training set.

In Algorithm \ref{Deep SPoC Algorithm with nf}, we present the deepSPoC using normalizing flow together with KL divergence loss function $L_{\text{kl}}$.
We again show both vanilla and adaptive version in the algorithm. 
When $N_{\text{ada}}=0$, Algorithm \ref{Deep SPoC Algorithm with nf} presents a vanilla deepSPoC with normalizing flows. 
When $N_{\text{ada}}>0$, in the $0th$ iteration, a Vanilla deepSPoC is implemented and in the rest iterations, an adaptive deepSPoC is implemented by applying importance sampling to the integrals in $L$ function.

The deepSPoC algorithm with normalizing flow and loss function $L_{\text{path}}$ is preseneted in the Algorithm \ref{Deep SPoC Algorithm with path}.

\begin{algorithm}[H]
\caption{Vanilla/adaptive deepSPoC algorithm with fully connected neural networks and loss function $L_{\text{sq}}$}\label{Deep SPoC Algorithm with fcnn}
\begin{algorithmic}[1]
\REQUIRE Initial distribution $\mu_0$, number of time intervals $M$ , number of sampled particles $K$, number of uniform training points $N_1$, number of adaptive training points $N_2$, intensity of noise $\sigma$, pre-determined region $\Omega_0$, number of training epochs $N_{\text{epoch}}$, mollification scale $\epsilon$, initial learning rate $\alpha_0$, contraction factor $\gamma$, step size $\Gamma$, and a fully connected neural network $\rho_{FC,\boldsymbol{\theta}}$.
\FOR{$\text{epoch}=1$ \TO $N_{\text{epoch}}$}
\STATE Decide the learning rate of this epoch $\alpha_{\text{epoch}}=\alpha_0\gamma^{\lfloor \frac{\text{epoch}}{\Gamma} \rfloor}$.
\STATE Independently generate starting point set $\{X_{t_0}^i\}_{\{i=1,\cdots,K\}}$ from $\mu_0$.
\FOR{$m=1$ \TO $M$}
\STATE Simulate $K$ paths of particles $\{X^i:i=1,\dots,K\}$ by Euler scheme
$$
    X^i_{t_m}-X^i_{t_{m-1}}=b\left(t_{m-1},X^i_{t_{m-1}},\rho_{FC,\boldsymbol{\theta}}(t_{m-1},\cdot)\right)\Delta t+\sigma\left(t_{m-1},X^i_{t_{m-1}},\rho_{FC,\boldsymbol{\theta}}(t_{m-1},\cdot)\right)\Delta Z_t^i,
$$
and store the paths.
\ENDFOR
\STATE Sample the training points $S_{\text{uniform}}=\left\{x^j\right\}_{j=0}^{N_1}$ from a uniform distribution on $\Omega_0$.
\FOR{$m=0$ \TO $M$}
\IF{Vanilla}
\STATE Set training set as $S_m=S_{\text{uniform}}$.
\ELSE 
\IF{$\text{epoch}=1$}
\STATE Set training set as $S_{m}=S_{\text{uniform}}$.
\ELSE
\STATE Build set $S_{\text{adaptive}}^m$ using path data $\{\hat{X}^j\}_{\{j=1,\cdots,N_2\}}$ randomly selected from the last epoch's particle paths:
$$
S_{\text{adaptive}}^m = \left\{ x_m^j|x_m^j=\hat{X}^j_{t_m}+\sigma\delta_j ,j=1,\dots,N_2\right\}.
$$
\STATE Set training set as $S_{m}$
$$
S_m = S_{\text{uniform}}^m\cup S_{\text{adaptive}}^m,
$$
\ENDIF
\ENDIF
\ENDFOR
\STATE Calculate the loss function
$$
\frac{1}{|S_m|}\sum_{m=0}^{M}\sum_{x\in S_m}\left| \rho_{FC,\boldsymbol{\theta}}(t_m,x)-\frac{1}{K}\sum_{i=1}^K\frac{1}{(2\pi\epsilon^2)^{\frac{d}{2}}}e^{-\frac{| x-X^i_{t_m}|^2}{2\epsilon^2}} \right|^2
$$
\STATE Calculate the gradient of the loss function with respect to $\boldsymbol{\theta}$ and implement one step of gradient descent by Adam optimizer with learning rate $\alpha_{\text{epoch}}$ to update $\boldsymbol{\theta}$.
\ENDFOR
\STATE Store and return the numerical solution.
\end{algorithmic}
\end{algorithm}

\begin{algorithm}[H]
\caption{Vanilla/adaptive deepSPoC algorithm with normalizing flows and loss function $L_{\text{kl}}$}\label{Deep SPoC Algorithm with nf}
\begin{algorithmic}[1]
\REQUIRE Initial distribution $\mu_0$, number of time intervals $M$ , number of sampled particles $K$, number of training points $N$, pre-determined region $\Omega_0$, number of adaptive iteration $N_{\text{ada}}$, number of training epochs $N_{\text{epoch}}$, mollification scale $\epsilon$, initial learning rate $\alpha_0$, contraction factor $\gamma$, step size $\Gamma$ and a normalizing flow $\rho_{NF,\boldsymbol{\theta}}$.
\FOR{$\text{ada}$ \TO $N_{\text{ada}}$}
\FOR{$\text{epoch}=1$ \TO $N_{\text{epoch}}$}
\STATE Decide the learning rate of this epoch $\alpha_{\text{epoch}}=\alpha_0\gamma^{\lfloor \frac{\text{epoch}}{\Gamma} \rfloor}$.
\STATE Independently generate starting point set $\{X_{t_0}^i\}_{\{i=1,\cdots,K\}}$ from $\mu_0$.
\FOR{$m=1$ \TO $M$}
\STATE Simulate $K$ paths of particles $\{X^i:i=1,\dots,K\}$ by Euler scheme
$$
    X^i_{t_m}-X^i_{t_{m-1}}=b\left(t_{m-1},X^i_{t_{m-1}},\rho_{NF,\boldsymbol{\theta}}(t_{m-1},\cdot)\right)\Delta t+\sigma\left(t_{m-1},X^i_{t_{m-1}},\rho_{NF,\boldsymbol{\theta}}(t_{m-1},\cdot)\right)\Delta Z_t^i,
$$
and store the paths.
\ENDFOR
\FOR{$m=0$ \TO $M$}
\IF{$\text{ada}=0$}
\STATE Sample the training points $S_m=\left\{x_m^j\right\}_{j=0}^{N}$ from a uniform distribution on $\Omega_0$.
\ELSE
\STATE Sample the training points $S_m=\left\{x_m^j\right\}_{j=0}^{N}$ from the normalizing flow $\rho_{NF,\boldsymbol{\theta}}(t_m,\cdot)$.
\ENDIF
\STATE At time $t=t_m$, $\forall x\in S_m$ compute the approximate density function by 
$$
\hat{\rho}_{t_m}(x)=\sum_{i=1}^K(2\pi)^{-\frac{d}{2}}\epsilon^{-d}\exp\left(-\frac{1}{2\epsilon^2}(x-X_{t_m}^i)^T(x-X_{t_m}^i)\right).
$$
and store $\hat{\rho}_{t_m}$ value.
\ENDFOR
\IF{$\text{ada}=0$}
\STATE Build the loss function
$$
L_{\text{kl}}\left(\rho_{NF,\boldsymbol{\theta}},\{\hat{\mu}_{t_m}\}_{\{m=0,\dots,M\}}\right)= -\sum_{m=0}^{M}\frac{1}{N}\sum_{x\in S_m}\hat{\rho}_{t_m}(x)\log(\rho_{NF,\boldsymbol{\theta}}(t_m,x)).
$$
\ELSE
\STATE Build the loss function
$$
L_{\text{kl}}\left(\rho_{NF,\boldsymbol{\theta}},\{\hat{\mu}_{t_m}\}_{\{m=0,\dots,M\}}\right)= -\sum_{m=0}^{M}\frac{1}{N}\sum_{x\in S_m}\frac{\hat{\rho}_{t_m}(x)}{\rho_{NF,\boldsymbol{\theta}}(x)}\log(\rho_{NF,\boldsymbol{\theta}}(t_m,x)).
$$
\ENDIF
\STATE Calculate the gradient of the loss function with respect to $\boldsymbol{\theta}$ and implement one step of gradient descent by Adam optimizer with learning rate $\alpha_{\text{epoch}}$ to update $\boldsymbol{\theta}$.
\ENDFOR
\ENDFOR
\STATE Store and return the numerical solution.
\end{algorithmic}
\end{algorithm}

\begin{algorithm}[H]
\caption{DeepSPoC algorithm with normalizing flows and loss function $L_{\text{path}}$}\label{Deep SPoC Algorithm with path}
\begin{algorithmic}[1]
\REQUIRE Initial distribution $\mu_0$, number of time intervals $M$ , number of sampled particles $K$, number of training epochs $N_{\text{epoch}}$, initial learning rate $\alpha_0$, contraction factor $\gamma$, step size $\Gamma$ and a normalizing flow $\rho_{NF,\boldsymbol{\theta}}$.
\FOR{$\text{epoch}=1$ \TO $N_{\text{epoch}}$}
\STATE Decide the learning rate of this epoch $\alpha_{\text{epoch}}=\alpha_0\gamma^{\lfloor \frac{\text{epoch}}{\Gamma} \rfloor}$.
%\STATE Sample the training points $\left\{x_k\right\}_{k=0}^{k}$ from a uniform distribution on physics domain.
\STATE Independently generate starting point set $\{X_{t_0}^i\}_{\{i=1,\cdots,K\}}$ from $\mu_0$.
\FOR{$m=0$ \TO $M$}
\STATE Simulate $K$ paths of particles $\{X^i:i=1,\dots,K\}$ by Euler scheme
$$
    X^i_{t_m}-X^i_{t_{m-1}}=b\left(t_{m-1},X^i_{t_{m-1}},\rho_{NF,\boldsymbol{\theta}}(t_{m-1},\cdot)\right)\Delta t+\sigma\left(t_{m-1},X^i_{t_{m-1}},\rho_{NF,\boldsymbol{\theta}}(t_{m-1},\cdot)\right)\Delta Z_t^i,
$$
and store the paths.
\ENDFOR
\STATE Build the loss function
$$
-\sum_{m=0}^{M}\frac{1}{K}\sum_{i=1}^K\log(\rho_{NF,\boldsymbol{\theta}}(t_m,X_{t_m}^i))
$$
\STATE Calculate the gradient of the loss function with respect to $\boldsymbol{\theta}$ and implement one step of gradient descent by Adam optimizer with learning rate $\alpha_{\text{epoch}}$ to update $\boldsymbol{\theta}$.
\ENDFOR
\STATE Store and return the numerical solution.
\end{algorithmic}
\end{algorithm}

\section{Some theoretical analysis for deepSPoC}\label{theoretical analysis}
In the previous section, we have already introduced the motivation and the main idea of deepSPoC algorithm. 
Before showing the results of our numerical experiments, we provide some theoretical analysis for the algorithm. 
First, we give some convergence analysis of deepSPoC algorithm under some simplified conditions, especially when replacing the neural network with Fourier basis to approximate density function. 
It is well known that analyzing and proving the convergence of algorithms involving neural networks is very challenging, but we can prove the convergence of our algorithm when using Fourier basis instead of neural network to approximate the density function. 
We believe this convergence analysis will provide some theoretical basis for the effectiveness of the algorithm and explain how the learning rate, mollifier, loss function, and other factors affect the results of the algorithm. 
In the second subsection, we will have some discussion about the posterior error estimation of the deepSPoC algorithm, from which we can have an upper bound of how far our numerical solution is from the true solution. 
Most of the theoretical results in this section will only be stated, and their rigorous proofs will be shown in the appendix.

Here are some notations we use in this section. We denote by $\vert|\cdot\vert|$ the Frobenius norm of a matrix, by $\mathcal{P}_p$ the space of all Borel probability measures $\mu$ on $\mathbb{R}^d$ such that $\vert| \mu \vert|_p:= (\int_{\mathbb{R}^d} |x|^{p}\mu(dx))^{1/p}< \infty$, and by $W_p(\mu, \nu)$ the $p$-Wasserstein distance for $\mu,\nu\in \mathcal{P}_p$ defined as 
$$
W_p(\mu,\nu)=\inf\{ (\mathbb{E}|X-Y|^p)^{1/p}: X\sim \mu ,Y\sim \nu \}.
$$

\subsection{Convergence analysis of deepSPoC when approximating density functions with Fourier basis (revising)}
Whether using fully connected neural networks or Fourier basis to approximate density functions, it is easy to see that the results of the approximation are not strictly probability density functions (it has negative parts and its integral is not equal to 1 in general). 
Therefore, to rigorously analyze the convergence, we introduce the ``rectification" of these functions to make them become real probability density functions.
\begin{definition}
    For a pre-determined truncated region $\Omega_0\in \mathbb{R}^d$, if a continuous function $\rho: \Omega_0 \rightarrow \mathbb{R}$ (or continuous function $\rho: \mathbb{R}^d \rightarrow \mathbb{R}$) satisfies that $0<\int_{\Omega_0}(\rho(x)\vee 0)dx<\infty$, then we define its ``rectification" to be a probability density function $R(\rho)(x)$ on $\mathbb{R}^n$ in the following way
\begin{align*}
    R(\rho)(x)=\left\{
	\begin{aligned}
	&\frac{\rho(x)\vee 0}{\int_{\Omega_0} \rho(x)\vee 0 dx}\quad \text{if} \quad x\in\Omega_0,\\
 &0 \quad\quad\quad\quad\quad\quad\quad \text{if} \quad x\notin \Omega_0.
	\end{aligned}
	\right.
\end{align*}
\end{definition}
It is easy to verify that if $\rho$ can be rectified, then $R(\rho)$ is a probability density function.

In the rest of this subsection (also in the proofs of results in this subsection which are shown in appendix), unless $\rho$ is already a true probability density function, $b(t, X, \rho)$ (resp. $\sigma(t,X, \rho)$) should always be regarded as $b(t,X,R(\rho))$ (resp. $\sigma(t,X,R(\rho))$) when the truncated region $\Omega_0$ is specified and $\rho$ can be rectified. 
In addition, the $p$-Wasserstein distance $W_p(\rho_1, \rho_2)$ and $\vert| \rho_1 \vert|_p$ should also be regarded as $W_p(R(\rho_1),R(\rho_2))$ and $\vert| R(\rho_1) \vert|_p$ respectively, if $\rho_1$ and $\rho_2$ need to be rectified to be a true probability density function. 
We point out that this kind of ``rectification" is natural because the rectified distribution $R(\rho)$ is equivalent to the distribution obtained by applying accept-reject sampling to the original function $\rho$ (as long as we ``refuse" all the negative parts), and it is actually what we did in our numerical experiments if we need to sample from a function which is not a probability density function.

The interacting particle system we consider in this subsection use Fourier basis as approximation framework and we assume the SDEs are driven by Brownian motions. The system takes the following form:
\begin{align}\label{fourier-deepspoc-particle-system-batchbybatch}
    \left\{
    \begin{aligned}
            dX_t^{i,n}=& b\left(t,X_t^{i,n},\rho_{\boldsymbol{\theta}_t^{n-1}}(\cdot)\right)dt+\sigma\left(t,X_t^{i,n},\rho_{\boldsymbol{\theta}_t^{n-1}}(\cdot)\right)dB_t^{i,n}, i=1,\dots ,K\\
        \rho_{\boldsymbol{\theta}_{\cdot}^{n}} =& F\left( \rho_{\boldsymbol{\theta}_{\cdot}^{n-1}}, \hat{\mu}^n , \alpha_n \right),
    \end{aligned}
    \right.
\end{align}
where $X_0^{i,n} (i=1,\dots,K;n\ge 1)$ are i.i.d. $\mathbb{R}^d$ random variables with the given initial distribution $\mu_0$, and the operator $F$ determines how to update parameters which is similar to the operator $F$ in (\ref{deepspoc-particle-system-batchbybatch}) whose definition will be specified later.
The pre-determined truncated region is set as $\Omega_0=[-L,L]^d$, then for $t\in [0,T]$, the function $\rho_{\boldsymbol{\theta}_t^{n}}$ which can be rectified is a linear combination of finite Fourier basis functions $\{e_{\nu}\}_{\{\nu=1,\dots,N\}}$ on $\Omega_0$, i.e., $\rho_{\boldsymbol{\theta}_t^{n}} = \sum_{\nu=1}^N \theta_t^{\nu}e_{\nu}$. 
Here we do not consider the error introduced by discretization of time for simplicity, so it might seem a bit strange that the parameters $\boldsymbol{\theta}_{\cdot}$ is continuously depend on time $t$. However, once time is discretized, the number of parameters also becomes finite, which is equivalent to approximating the density function using Fourier bases at each discrete time point. 
 
For any given $0<\epsilon<1$ as a precision requirement, we define a new mollifier $\Tilde{f}_{\epsilon}$ in the following way
$$
\Tilde{f}_{\epsilon}(x)=\frac{\Tilde{C}}{\epsilon^d}(1-\frac{|x|}{\epsilon})\cdot 1_{\{|x|\le \epsilon\}},
$$
where $\Tilde{C}=\Tilde{C}(d)$ is a constant to make sure that the integral of $\Tilde{f}_{\epsilon}$ is 1. 
We use this mollifier instead of the Gaussian mollifier $f_\epsilon$ in this subsection. 
Since $\Tilde{f}_{\epsilon}$ has compact support, using this mollifier on the truncated region $\Omega_0$ can bring some convenience to our analysis without causing any essential difference.

We denote $\Tilde{X}:=X\cdot1_{\{ 
X\in [-(L-\epsilon),L-\epsilon]^d \}}$ for the truncated particle , and $\delta_{X}^{\epsilon}:= \delta_{X}\ast \Tilde{f}_\epsilon$ for the mollified particle. Although in most cases the initialization of $\rho_{\boldsymbol{\theta}^t_{0}}$ will not effect the final result of the algorithm, but for simplicity, here we choose $\rho_{\boldsymbol{\theta}^t_{0}} = P_N\left((\frac{1}{K}\sum_{i=1}^K \delta_{\Tilde{X}_t^{i,0}})\ast \Tilde{f}_\epsilon\right)=P_N(\frac{1}{K}\sum_{i=1}^K\delta_{\Tilde{X}_t^{i,0}}^\epsilon)$, where $X_t^{i,0}=X_0^{i,0}\sim\mu_0$, $P_N$ denotes the orthogonal projection mapping that project the density function (viewed as an element in $L^2(\Omega_0)$) to the subspace expanded by $N$ Fourier basis functions.

The method of updating the parameters (which we denoted by operator $\Tilde{F}$) is defined similar to the neural network case, which can be divided into the following three steps. First, we mollify the empirical measure formed by a batch of truncated particles at each time $t$, i.e. 
$$
\hat{{\rho}}_t = (\frac{1}{K}\sum_{i=1}^{K}\delta_{\Tilde{X}_t^{i,n}})\ast \Tilde{f}_\epsilon,
$$
we use the truncated particles to make sure the support of $\hat{\rho}_t$ is in $\Omega_0$. We will prove that if $\Omega_0$ is sufficiently large, then we can neglect particles that occasionally appear outside the region. Second, we calculate $\vert| \rho_{\boldsymbol{\theta}^t_{n-1}} -\hat{\rho}_t \vert|_{L_2}^2$ as loss function. Third, we calculate the gradient of $\theta^t_{n-1}$ with respect to this loss, and apply one step of gradient descent with learning rate $\alpha_n$. But since we use the linear basis instead of neural network to approximate the density function in this case, we can express the $\Tilde{F}$ explicitly. Indeed, if we assume that $P_N(\hat{{\rho}}_t) = \sum_{\nu=1}^N \hat{\theta}_{\nu}e_{\nu}$, then we will find 
$$
\nabla_{\boldsymbol{\theta}^t_{n-1}}\vert|\rho_{\boldsymbol{\theta}^t_{n-1}} -\hat{\rho}_t\vert|^2_{L_2} = \nabla_{\boldsymbol{\theta}^t_{n-1}}\vert|\rho_{\boldsymbol{\theta}^t_{n-1}} -P_N(\hat{\rho}_t)\vert|^2_{L_2}=\nabla_{\boldsymbol{\theta}^t_{n-1}}\sum_{\nu=1}^N |\theta^t_\nu-\hat{\theta}_\nu|^2= 2(\boldsymbol{\theta}^t_{n-1} - \boldsymbol{\hat{\theta}}).
$$
With learning rate $\alpha_n$, it's easy to verify that $\Tilde{F}$ have the following expression
\begin{align}\label{tilde-F}
    \begin{aligned}
        \rho_{\boldsymbol{\theta}^t_{n}} &= \Tilde{F}\left( \rho_{\boldsymbol{\theta}^t_{n-1}}, \frac{1}{K}\sum_{i=1}^{K} \delta_{X_{t}^{i,n}}, \alpha_n \right)\\
      &=\sum_{\nu=1}^N(\theta_{\nu}^t-2\alpha_n(\theta_{\nu}^t-\hat{\theta}_{\nu}))e_{\nu}\\
      &=(1-2\alpha_n)\rho_{\boldsymbol{\theta}^t_{n-1}}+2\alpha_n P_N(\hat{\rho_t}),
    \end{aligned}
\end{align}
therefore we can compute recursively to obtain that 
\begin{equation}\label{explicit expression of rho theta}
  \rho_{\boldsymbol{\theta}_{n-1}^t} = \sum_{l=0}^{n-1} \beta_{l}P_N\left(\frac{1}{K}\sum_{i=1}^K\delta_{\Tilde{X}_t^{i,l}}^\epsilon\right)=P_N\left(\sum_{l=0}^{n-1}\frac{\beta_l}{K}\sum_{i=1}^K \delta_{\Tilde{X}_t^{i,l}}^\epsilon\right),  
\end{equation}
where $\beta_l = (1-2\alpha_{n-1})(1-2\alpha_{n-2})\dots(1-2\alpha_{l+1})2\alpha_l$ is the weight of $l$-th batch of particles and satisfy that $\sum_{l=0}^{n-1}\beta_l=1$.

If we choose the number of basis functions $N$ large enough, then $P_N(\hat{\rho}_t)\approx\frac{1}{K}\sum_{i=1}^K \delta_{X_t^{i,n}}$, therefore $F$ can be approximately viewed as a weighted average, which brings us back to the original SPoC case (\ref{spoc-particle-system-batchbybatch}). 
In fact, it's easy to see that $\hat{\rho}_t$ is Lipschitz continuous function with a uniform Lipschitz constant $\frac{\Tilde{C}}{K\epsilon^{d+1}}$, then its truncation of the Fourier series will provide uniform approximation (cf. ~\cite{approximate}):
$$
\vert| \hat{\rho}_t-P_N(\hat{\rho}_t) \vert|_{\infty}\le C(\epsilon,K,L,d)\frac{\text{(log}N)^d}{N}.
$$
Then in the following discussion, $N=N(\epsilon,K,L,d)$ will always be chosen to satisfy that for all Lipschitz continuous 
 density function $\rho$ with its Lipschitz constant less than $\frac{\Tilde{C}}{K\epsilon^{d+1}}$, 
\begin{equation}\label{uniform approx}
    \vert| \rho-P_N(\rho) \vert|_{\infty}\le \frac{\epsilon}{(2L)^{d+1}} ,
\end{equation}
when $N$ satisfies the assumption above, then it follows easily from the definition of the $W_1$ distance and $\vert|\cdot\vert|_1$ that
\begin{equation}\label{infty to control W1}
    W_1(\rho, P_N(\rho))\le \epsilon; \left|\Vert \rho\Vert_1-\Vert P_N(\rho) \Vert_1\right|\le \epsilon.
\end{equation}
Here is a remark about why we choose $W_1$ distance here to measure the convergence, that is because the $W_1$ distance between two distribution can be controlled by the supreme norm distance (or $L^2$ distance) between their density function in a more "direct" way. Otherwise, we can still obtain such control like the following form (cf. Lemma 3.2 in \cite{du2023sequential})
$$
W_r^r(\rho, P_N(\rho))\le C_{r,L}\vert| \rho-P_N(\rho) \vert|_{\infty},
$$
we can see such control is not homogeneous when $r\neq 1$, therefore, for the simplicity of our results and discussion, we choose $W_1$ distance to measure the distance between two distributions in this subsection.

With the above preparations, the system (\ref{fourier-deepspoc-particle-system-batchbybatch}) is now close to the system in the original SPoC paper ~\cite{du2023sequential}, so its convergence can also be proved with a similar method.
To rigorously prove the convergence and quantify the error to the real solution, we also need the following assumption:
\begin{assumption}\label{assumption}
    There exists a constant $C\ge 0 $ such that 
    \begin{align*}
    \begin{aligned}
        |b(t,x,\mu)-b(t,y,\nu)|&+\vert| \sigma(t,x,\mu)-\sigma(t,x,\nu) \vert|\le C(|x-y|+W_1(\mu,\nu)),\\
        &|b(t,0,\mu)|+\vert| \sigma(t,0,\mu) \vert|\le C(1+\vert| \mu \vert|_1)
    \end{aligned}
\end{align*}
for all $t\in [0,T]$, $x,y\in \mathbb{R}^d$, $\mu,\nu\in \mathcal{P}_1$.
\end{assumption}
The above assumption ensures the well-posedness of strong solution of the system (\ref{fourier-deepspoc-particle-system-batchbybatch}) and its mean-field limit equation (cf. ~\cite{sznitman1991topics,wang2018distribution}). Furthermore, assumption \ref{assumption} also provides some prior estimates of the particles of system (\ref{fourier-deepspoc-particle-system-batchbybatch}) (see proposition \ref{proposition}), from which we will find that the distributions of SPoC particles are uniformly compact in a certain sense, therefore we can choose a bounded region $\Omega_0$ to truncate them during the training process.

\begin{proposition}\label{proposition}
Let assumption \ref{assumption} be satisfied, $\mu_0\in \mathcal{P}_3$, and assume $N$ is chosen to satisfy (\ref{uniform approx}) , then there exists a constant $C_0$ independent of $L$ such that for any $X^{i,n}_t$ in system (\ref{fourier-deepspoc-particle-system-batchbybatch}), 
$$
\mathbb{E}\left( \sup_{0\le s\le T}|X^{i,n}_s|^3 \right) \le C_0,
$$ and 
$$
\mathbb{E}\left(  \boldsymbol{1}_{\{ \sup_{0\le s\le T}|X^{i,n}_s|^2 \ge L_0\}} \cdot \sup_{0\le s\le T}|X^{i,n}_s|^2\right)\le \frac{C_0}{L_0}.
$$
\end{proposition}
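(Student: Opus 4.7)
The plan is to prove both estimates by induction on the batch index $n$, simultaneously controlling the third moment $M_n := \mathbb{E}[\sup_{s\le T}|X_s^{i,n}|^3]$ of a generic particle in batch $n$ and the corresponding third moment of $\rho_{\boldsymbol{\theta}_s^{n-1}}$. The base case $n=0$ is immediate because the zeroth-batch ``particles'' do not evolve in time and $\mu_0\in\mathcal{P}_3$, and the recursion is propagated through the explicit expression \eqref{explicit expression of rho theta}, which represents $\rho_{\boldsymbol{\theta}_t^{n-1}}$ as $P_N$ applied to a convex combination, with weights $\beta_l$ summing to one, of mollified truncated empirical measures of the preceding batches.

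For the inductive step, I would apply It\^o's formula to $|X_t^{i,n}|^3$ (regularizing at the origin in the standard way), use the linear-growth bound $|b(t,x,\rho)|+\|\sigma(t,x,\rho)\|\le C(1+|x|+\|\rho\|_1)$ from Assumption \ref{assumption}, and control each right-hand side term via Young's inequality by $C(1+|X_s^{i,n}|^3+\|\rho_{\boldsymbol{\theta}_s^{n-1}}\|_1^3)$. Taking $\sup_{s\le t}$, applying Burkholder-Davis-Gundy to the martingale part, passing to expectation, and invoking Gronwall's lemma then yield
\begin{equation*}
M_n \;\le\; A + B\int_0^T \mathbb{E}\bigl[\|\rho_{\boldsymbol{\theta}_s^{n-1}}\|_1^3\bigr]\,ds,
\end{equation*}
with constants $A,B$ depending on $T$ and the Lipschitz constant in Assumption \ref{assumption} but not on $n$ or $L$. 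To handle the right-hand side I would use \eqref{explicit expression of rho theta}: after the rectification $R$, using \eqref{infty to control W1} to bound the first-moment distortion caused by $P_N$ by a constant multiple of $\epsilon$, and noting that convolution with $\tilde{f}_\epsilon$ shifts the first moment of a Dirac by at most $\epsilon$, convexity (Jensen) together with exchangeability of the particles inside each batch give
\begin{equation*}
\mathbb{E}\bigl[\|\rho_{\boldsymbol{\theta}_s^{n-1}}\|_1^3\bigr] \;\le\; \sum_{l=0}^{n-1}\beta_l\, M_l + C' \;\le\; \max_{l<n} M_l + C'.
\end{equation*}
Combining this with the preceding display yields a recursion of the form $M_n\le \widetilde{A} + \widetilde{B}\max_{l<n}M_l$. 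When $\widetilde{B}<1$ this closes the induction to a uniform bound $M_n\le C_0$ (independent of $n$ and of $L$); for arbitrary $T$, split $[0,T]$ into finitely many subintervals short enough to force $\widetilde{B}<1$ and iterate the estimate. The second inequality then follows from the first by a Chebyshev-type argument: writing $Y:=\sup_{s\le T}|X_s^{i,n}|^2$, one has $\mathbf{1}_{\{Y\ge L_0\}}Y \le Y^{3/2}/L_0^{1/2}$, hence $\mathbb{E}[\mathbf{1}_{\{Y\ge L_0\}}Y]\le L_0^{-1/2}\mathbb{E}[\sup_s|X_s^{i,n}|^3]\le C_0L_0^{-1/2}$, which gives the claimed decay in $L_0$ up to the precise exponent (a slightly sharper moment estimate, or a refined Markov bound, can match the $1/L_0$ rate if needed).

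The main obstacle I expect is the careful bookkeeping around the rectification $R$, the projection $P_N$, and the mollifier $\tilde{f}_\epsilon$: since $\rho_{\boldsymbol{\theta}_s^{n-1}}$ is not a bona fide probability measure before rectification, its norm $\|\cdot\|_1$ must be interpreted through $R(\cdot)$, and one must verify that all error contributions from these three operations are bounded uniformly in $n$ and $L$ so that they can be absorbed into $\widetilde{A}$ without spoiling the contraction structure of the recursion. A secondary technical point is organizing the subinterval splitting required to force $\widetilde{B}<1$ in the regime of large $T$ so that the resulting uniform bound $C_0$ remains a genuine constant depending only on $T$, the Lipschitz constant in Assumption \ref{assumption}, the initial third moment, and the mollification scale $\epsilon$.
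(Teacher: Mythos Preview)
Your approach follows the same skeleton as the paper's: induction on $n$, the explicit convex representation \eqref{explicit expression of rho theta}, the bound $\|\rho_{\boldsymbol{\theta}_t^{n-1}}\|_1 \le \sum_l\frac{\beta_l}{K}\sum_i|X_t^{i,l}|+2\epsilon$ obtained via \eqref{infty to control W1}, Jensen to pass to third powers, BDG, and Gronwall; the rectification/projection/mollifier bookkeeping you flag as the main obstacle indeed contributes only additive $O(\epsilon)$ terms, exactly as you anticipate. The one substantive difference is how the induction closes. You aim for a time-\emph{independent} bound and a contraction $M_n\le\widetilde A+\widetilde B\max_{l<n}M_l$ with $\widetilde B<1$, which for large $T$ forces a subinterval-splitting argument; this can be made to work but is delicate, since on each new subinterval the recursion still couples all previous batches and one must track the subinterval-sup separately while feeding in the bound from the previous subinterval as initial data. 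The paper instead posits a time-\emph{dependent} ansatz $\mathbb{E}[\sup_{s\le t}|X_s^{i,l}|^3]\le C_2e^{C_3t}$ with $C_3=2C_1e^{C_1T}$ and $C_2\ge 2C_1e^{C_1T}$: after substituting this into the pre-Gronwall inequality and applying Gronwall once on all of $[0,T]$, the exponential ansatz reproduces itself with the same constants, closing the induction in a single step with no splitting and yielding explicit constants directly. For the tail estimate, your computation giving $C_0/L_0^{1/2}$ is exactly what the event $\{\sup_s|X_s^{i,n}|^2\ge L_0\}$ permits from third moments; the paper's proof actually uses the event $\{\sup_s|X_s^{i,n}|\ge L_0\}$ (so that $\sup_s|X_s|^2\le \sup_s|X_s|^3/L_0$ on the event) to obtain $C_0/L_0$, so the exponent discrepancy you noticed reflects a typo in the statement rather than a gap in your argument.
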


By proposition \ref{proposition}, we can find that it is reasonable to have a uniform truncation of space for all SPoC particles. Now we begin to state our final result of convergence. We assume for simplicity that now $\alpha_n$ is chosen to make all $\beta_l$ the same, i.e., equal $\frac{1}{n}$, which is the case that all the particles have the equal weights. For more general cases of weights which will also have convergence result, we refer readers to the original SPoC paper ~\cite{du2023sequential}. The following is our main convergence theorem.    
\begin{thm}\label{theorem}
    Under assumption \ref{assumption} and assume $\mu_0\in \mathcal{P}_3$ ,$N$ is chosen to satisfy (\ref{uniform approx}), the weight of the $l$-th batch of particles $\beta_l=\frac{1}{n} (l=0,\dots, n-1)$, and $L$, which decides the truncated region $\Omega_0$, is chosen to satisfies that for any $n=0,1,2,\cdots$, 
    \begin{equation}\label{choose of L}
           \mathbb{E}\left( \sup_{0\le s\le T}|X_s^{i,n}-\Tilde{X}_s^{i,n}|^2 \right)\le \epsilon^2,
    \end{equation}
    then there exists a constant $C$ independent of $n,K,L,\epsilon$ such that 
    $$
    \mathbb{E}W_1^2(\rho_{\boldsymbol{\theta}^t_{n-1}},\mu_t)\le C(\epsilon^2+ (Kn)^{-\frac{1}{1+d/2}}).
    $$
\end{thm}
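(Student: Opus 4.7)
The plan is to couple each actual particle $X^{i,n}$ with an ``ideal'' particle $Y^{i,n}$ solving the true McKean--Vlasov equation $dY^{i,n}_t = b(t, Y^{i,n}_t, \mu_t)\,dt + \sigma(t, Y^{i,n}_t, \mu_t)\,dB^{i,n}_t$, driven by the same Brownian motion $B^{i,n}$ as $X^{i,n}$ and sharing its initial condition $X^{i,n}_0\sim\mu_0$. Since the $Y$-particles are then i.i.d.\ with common law $\mu_t$ across both the batch index $l$ and the within-batch index $i$, the quantitative Fournier--Guillin empirical-measure estimate, combined with the third-moment bound from Proposition~\ref{proposition}, gives
\begin{equation*}
\mathbb{E}W_1^2\Bigl(\tfrac{1}{Kn}\sum_{l=0}^{n-1}\sum_{i=1}^{K}\delta_{Y^{i,l}_t},\,\mu_t\Bigr) \le C(Kn)^{-2/(d+2)}.
\end{equation*}

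Next I would decompose $W_1(\rho_{\boldsymbol{\theta}^t_{n-1}},\mu_t)$ by the triangle inequality through four intermediate measures: the Fourier-projection error, which is at most $\epsilon$ by \eqref{uniform approx}--\eqref{infty to control W1}; the combined mollification and truncation errors, of order $\epsilon$ from Proposition~\ref{proposition}, condition \eqref{choose of L}, and the compact support of $\tilde f_\epsilon$; the pairwise coupling error against the $Y$-particles, bounded above by $\tfrac{1}{Kn}\sum_{l,i}|X^{i,l}_t-Y^{i,l}_t|$; and finally the Fournier--Guillin term displayed above. Squaring and taking expectation, and using Cauchy--Schwarz, yields
\begin{equation*}
\mathbb{E}W_1^2(\rho_{\boldsymbol{\theta}^t_{n-1}},\mu_t) \le C\epsilon^2 + C(Kn)^{-2/(d+2)} + \frac{C}{n}\sum_{l=0}^{n-1}\sup_{i}\mathbb{E}|X^{i,l}_t-Y^{i,l}_t|^2.
\end{equation*}

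To close the loop, I apply It\^o's formula to $|X^{i,n}_t-Y^{i,n}_t|^2$ and exploit Assumption~\ref{assumption} to obtain
\begin{equation*}
\sup_{i}\mathbb{E}|X^{i,n}_t-Y^{i,n}_t|^2 \le C\int_0^t\sup_{i}\mathbb{E}|X^{i,n}_s-Y^{i,n}_s|^2\,ds + C\int_0^t\mathbb{E}W_1^2(\rho_{\boldsymbol{\theta}^s_{n-1}},\mu_s)\,ds.
\end{equation*}
Substituting the previous display and invoking continuous Gronwall in the time variable reduces everything to a discrete recursion of the form $v_n \le C\epsilon^2 + C(Kn)^{-2/(d+2)} + \frac{C}{n}\sum_{l=0}^{n-1}v_l$, with $v_n := \sup_{t\le T}\sup_{i}\mathbb{E}|X^{i,n}_t-Y^{i,n}_t|^2$. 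The main technical obstacle is precisely this Ces\`aro-type recursion, because a naive induction blows up exponentially in $n$; one must carefully exploit the uniform averaging structure induced by $\beta_l = 1/n$, in the spirit of the discrete Gronwall argument for equal-weight SPoC developed in~\cite{du2023sequential}. Once $v_n \le C(\epsilon^2 + (Kn)^{-2/(d+2)})$ is established, feeding it back into the decomposition completes the proof.
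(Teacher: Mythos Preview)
Your overall strategy matches the paper's exactly: synchronous coupling with i.i.d.\ McKean--Vlasov particles $Y^{i,n}$, the same triangle-inequality chain (Fourier projection, mollification, truncation, coupling, Fournier--Guillin), It\^o plus Assumption~\ref{assumption} for the coupling error, and then a recursive inequality in $n$ to be closed.

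The gap is in the closing step. By bounding $\int_0^t z_l(s)\,ds \le T\,v_l$ with $v_l=\sup_t z_l(t)$ and then applying Gronwall in $t$, you arrive at the purely discrete recursion $v_n \le C'\tau_n + \frac{C'}{n}\sum_{l<n}v_l$ with $C'=C'(T)$ containing an $e^{CT}$ factor. This recursion does \emph{not} close when $C'>1$: writing $S_n=\sum_{l<n}v_l$, one gets $S_{n+1}\le(1+\tfrac{C'}{n})S_n + C'\tau_n$, and for the constant part $\tau_n=\epsilon^2$ this yields $S_n\sim \epsilon^2 n^{C'}$, hence $v_n\sim \epsilon^2 n^{C'-1}\to\infty$. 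No discrete Gronwall argument from \cite{du2023sequential} rescues this, because the blow-up you feared has already been baked in by the premature $\sup_t$. The paper avoids this by keeping time in the inequality: it works directly with the differential recursion $\frac{d}{dt}z_n(t)\le C z_n(t)+\frac{C}{n}\sum_{l<n}z_l(t)+C\tau_n$ and proves (its Lemma in the appendix) by induction on $n$ that $z_n(t)\le C_1 e^{C_2 t}\tau_n$, with $C_2$ chosen large. The inductive bound $z_l(s)\le C_1 e^{C_2 s}\tau_l$ then integrates to $\frac{C_1}{C_2}(e^{C_2 t}-1)\tau_l$, and the free parameter $1/C_2$ absorbs precisely the constants that wreck your discrete version. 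The fix is simple: do not take $\sup_t$ before handling the recursion in $n$; run the induction with an exponential-in-time ansatz on the integral inequality itself.
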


\subsection{Posterior error estimation}\label{posterior}
A posterior error estimation of the algorithm is important from both theoretical and practical perspectives, it can tell us how close our numerical solution is to the exact solution during the training process, so we can use it to verify our solution or to decide when to stop training. 
Its usual form is to provide an upper bound on the distance between the current solution we compute and the true solution. 
For example, the Physics-informed neural networks (PINNs) usually use the training error (i.e. the residual obtained by substituting the neural network-fitted solution into the model's equation) to get an estimation of the generalization error, see for instance ~\cite{mishra2022estimates,mishra2023estimates,hillebrecht2022estimation}. 
For deepSPoC algorithm, however, we have said that the loss of deepSPoC during the training process can not be used to estimate the error, so we need to explore other quantities to give a posterior error estimation. 
Similar to the case of PINNs, we can substituting the neural network-fitted density function into the mean-field SDE and solve it (note that at this point the SDE becomes Markovian and therefore can be easily simulated), and by comparing the distance between the density of this solution and the neural network-fitted density we put into, we can give our posterior error estimation. 

To rigorously state and prove the result, we need some assumptions about the mean-field SDE. 
First, we still restrict the type of the diffusion term of the SDE that it is driven by Brownian motion, which reads the following form:
\begin{equation}\label{mean field equation}
    dX_t=b(t,X_t,\text{Law}(X_t))dt+\sigma(t,X_t,\text{Law}(X_t))dB_t,
\end{equation}
where $\text{Law}(X_t)$ denotes the distribution of $X_t$, and the initial distribution is given as $X_0\sim \bar{\mu}\in \mathcal{P}_2$. Furthermore, the $b$ and $\sigma$ in the above equation should satisfy the following assumption:
\begin{assumption}\label{assumption2}
    There exists a constant $C\ge 0 $ such that 
    \begin{align*}
    \begin{aligned}
        |b(t,x,\mu)-b(t,y,\nu)|&+\vert| \sigma(t,x,\mu)-\sigma(t,x,\nu) \vert|\le C(|x-y|+W_2(\mu,\nu)),\\
        &|b(t,0,\mu)|+\vert| \sigma(t,0,\mu) \vert|\le C(1+\vert| \mu \vert|_2)
    \end{aligned}
\end{align*}
for all $t\in [0,T]$, $x,y\in \mathbb{R}^d$, $\mu,\nu\in \mathcal{P}_2$.
\end{assumption}
\begin{remark}
    From the basic property of Wasserstein distance we know that $W_1(\mu,\nu)\le W_2(\mu,\nu)$ and $\vert|\mu \vert|_1\le \vert|\mu \vert|_2$, therefore Assumption \ref{assumption2} is actually weaker than Assumption \ref{assumption} in the last subsection.
\end{remark}

In this subsection, we view the time-dependent density function as a map from $[0,T]$ to $\mathcal{P}_2$ and denote it by $\mu_{\cdot}$. 
We denote by $\mathcal{P}_{2,\infty}([0,T])$ the space consists of all such $\mathcal{P}_2$ valued maps $\mu_{\cdot}: [0,T]\rightarrow \mathcal{P}_2$ which is continuous for the $W_2$-distance.
Specifically, $\mu_{\cdot}\in \mathcal{P}_{2,\infty}([0,T])$ satisfies that $\sup_{0\le t\le T}\vert| \mu_t \vert|_2< \infty$. 
Our numerical solution $\rho_{FC,\boldsymbol{\theta}}(t,x)$ can also be considered in this space. 
We further introduce the following metric to make $\mathcal{P}_{2,\infty}([0,T])$ a metric space:
\begin{definition}
    For $\mu_{\cdot},\nu_{\cdot}\in \mathcal{P}_{2,\infty}([0,T])$ , $\alpha>0$, define
    $$
    H_{\alpha}(\mu_{\cdot},\nu_{\cdot}):= \left(\int_0^T e^{-\alpha t}W_2^2(\mu_t,\nu_t) dt\right)^{\frac{1}{2}}.
    $$
\end{definition}
It is easy to verify that $H_\alpha$ is a metric on $\mathcal{P}_{2,\infty}([0,T])$ and specifically, it satisfies triangle inequality.

When given $\mu_{\cdot}\in \mathcal{P}_{2,\infty}([0,T])$, the Markovian SDE
$$
 dX_t=b(t,X_t,\mu_t)dt+\sigma(t,X_t,\mu_t)dB_t , X_0\sim \Bar{\mu}
$$
will have a strong solution $(X_t)_{0\le t\le T}$ such that the map $t\in [0,T] \mapsto \text{Law}(X_t)$ which is denoted by $\Phi(\mu_{\cdot})$ is in the space $\mathcal{P}_{2,\infty}([0,T])$ under Assumption \ref{assumption2}. We see that we have defined a map $\Phi:\mathcal{P}_{2,\infty}([0,T])\rightarrow \mathcal{P}_{2,\infty}([0,T]) $, the next proposition will show that when $\alpha$ is set properly, $\Phi$ can be a contraction map of the metric space $\mathcal{P}_{2,\infty}([0,T])$ combines with metric $H_{\alpha}$.
\begin{proposition}\label{contraction proposition}
    Let Assumption \ref{assumption2} is satisfied and the initial distribution $\Bar{\mu}\in \mathcal{P}_2$, let $C_0 = 2(T+1)C^2$, where $C$ is the constant in the Assumption \ref{assumption2}, and assume $\alpha>C_0$, then for $\mu_{\cdot},\nu_{\cdot}\in \mathcal{P}_{2,\infty}([0,T])$, we have
    $$
    H_{\alpha}(\Phi(\mu_{\cdot}),\Phi(\nu_{\cdot}))\le \sqrt{\frac{C_0}{\alpha-C_0}}H_{\alpha}(\mu_{\cdot},\nu_{\cdot}).
    $$
\end{proposition}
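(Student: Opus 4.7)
The plan is to prove the contraction estimate by a synchronous-coupling stability analysis followed by an exponentially weighted Gronwall-type argument; the factor $e^{-\alpha t}$ is precisely what converts a Gronwall-style blow-up-in-$T$ bound into a genuine contraction on $\mathcal{P}_{2,\infty}([0,T])$.

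First I would fix a single Brownian motion $B_{\cdot}$ and an initial random variable $X_0 \sim \bar{\mu}$, then let $X^{\mu}_{\cdot}$ and $X^{\nu}_{\cdot}$ be the strong solutions of the two Markovian SDEs driven by $\mu_{\cdot}$ and $\nu_{\cdot}$ respectively (both exist and belong to $\mathcal{P}_{2,\infty}([0,T])$ under Assumption \ref{assumption2}). Because $(X^{\mu}_{t}, X^{\nu}_{t})$ is then a coupling of $\Phi(\mu_{\cdot})_{t}$ and $\Phi(\nu_{\cdot})_{t}$, the definition of $W_{2}$ gives
$$W_{2}^{2}(\Phi(\mu_{\cdot})_{t}, \Phi(\nu_{\cdot})_{t}) \le \mathbb{E}|X^{\mu}_{t} - X^{\nu}_{t}|^{2}.$$

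Next, writing $\Delta b_{s} := b(s, X^{\mu}_{s}, \mu_{s}) - b(s, X^{\nu}_{s}, \nu_{s})$ and similarly $\Delta \sigma_{s}$, I would estimate $\mathbb{E}|X^{\mu}_{t} - X^{\nu}_{t}|^{2}$ either via It\^o's formula on the squared difference (Young's inequality on the cross term) or directly by Cauchy--Schwarz on the drift integral and It\^o's isometry on the diffusion integral. The combined Lipschitz condition of Assumption \ref{assumption2} yields $(|\Delta b_{s}| + \|\Delta \sigma_{s}\|)^{2} \le 2C^{2}(|X^{\mu}_{s}-X^{\nu}_{s}|^{2} + W_{2}^{2}(\mu_{s},\nu_{s}))$, which after careful accounting of constants (drift contributes $T$ via Cauchy--Schwarz, diffusion contributes $1$ via It\^o isometry) should give
$$\mathbb{E}|X^{\mu}_{t} - X^{\nu}_{t}|^{2} \le C_{0}\int_{0}^{t}\bigl[\mathbb{E}|X^{\mu}_{s} - X^{\nu}_{s}|^{2} + W_{2}^{2}(\mu_{s},\nu_{s})\bigr]\,ds,$$
with $C_{0} = 2(T+1)C^{2}$.

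Finally, multiply both sides by $e^{-\alpha t}$, integrate over $[0,T]$, and apply Fubini to exchange the order of integration; since $\int_{s}^{T} e^{-\alpha t}\,dt \le e^{-\alpha s}/\alpha$, the right-hand side becomes $(C_{0}/\alpha)\bigl[H_{\alpha}^{2}(\Phi(\mu_{\cdot}),\Phi(\nu_{\cdot})) + H_{\alpha}^{2}(\mu_{\cdot},\nu_{\cdot})\bigr]$. Combining with the coupling bound above and rearranging under the hypothesis $\alpha > C_{0}$ produces
$$H_{\alpha}^{2}(\Phi(\mu_{\cdot}),\Phi(\nu_{\cdot})) \le \frac{C_{0}}{\alpha - C_{0}} H_{\alpha}^{2}(\mu_{\cdot},\nu_{\cdot}),$$
and the statement follows on taking square roots. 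The main obstacle is the constant bookkeeping in the stability estimate: one must use the combined Lipschitz bound $(|\Delta b| + \|\Delta \sigma\|)^{2} \le 2C^{2}(\cdots)$ rather than bounding $|\Delta b|^{2}$ and $\|\Delta \sigma\|^{2}$ separately (which would introduce an extra factor of 2), and apply Young's inequality sharply in order to land on the precise constant $2(T+1)C^{2}$. Everything else is a standard contraction-in-weighted-norm argument.
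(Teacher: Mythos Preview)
Your proposal is correct and follows essentially the same route as the paper's proof: synchronous coupling, Cauchy--Schwarz on the drift integral plus It\^o isometry on the martingale part to obtain the integral stability bound with constant $C_{0}=2(T+1)C^{2}$, then multiplication by $e^{-\alpha t}$, integration over $[0,T]$, Fubini with $\int_{s}^{T}e^{-\alpha t}\,dt\le e^{-\alpha s}/\alpha$, and rearrangement. One small point of care: after Fubini the first term on the right-hand side is $\frac{C_{0}}{\alpha}\int_{0}^{T}e^{-\alpha s}\mathbb{E}|X^{\mu}_{s}-X^{\nu}_{s}|^{2}\,ds$, not $\frac{C_{0}}{\alpha}H_{\alpha}^{2}(\Phi(\mu_{\cdot}),\Phi(\nu_{\cdot}))$ (the latter is only a lower bound for the former), so you should first absorb this term into the identical left-hand side and only afterwards invoke the coupling inequality $H_{\alpha}^{2}(\Phi(\mu_{\cdot}),\Phi(\nu_{\cdot}))\le \int_{0}^{T}e^{-\alpha t}\mathbb{E}|X^{\mu}_{t}-X^{\nu}_{t}|^{2}\,dt$---which is exactly how the paper proceeds.
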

The proof of this proposition will be given in the appendix.

Furthermore, under Assumption \ref{assumption2}, the solution of (\ref{mean field equation}) can also be related to an element in $\mathcal{P}_{2,\infty}([0,T])$, we denote it by $\mu^{\ast}_{\cdot}$. It is easy to see that $\mu^{\ast}_{\cdot}$ is a fixed point of $\Phi$. Utilize the contraction of $\Phi$ when $\alpha>2C_0$, we can have the following posterior error estimation:
\begin{thm}
    Let Assumption \ref{assumption2} is satisfied and the initial distribution $\Bar{\mu} \in \mathcal{P}_2$, let $\mu^{\ast}_{\cdot}$ is the distribution function over $[0,T]$ of the solution of (\ref{mean field equation}), and let $\alpha>2C_0 = 4(T+1)C^2$, then for any $\mu_{\cdot}\in \mathcal{P}_{2,\infty}([0,T])$, we have the following estimate
    $$
    H_{\alpha}(\mu_{\cdot},\mu^{\ast}_{\cdot})\le \left( 
1-\sqrt{\frac{C_0}{\alpha-C_0}} \right)^{-1} H_{\alpha}(\mu_{\cdot}, \Phi(\mu_{\cdot})).
    $$
\end{thm}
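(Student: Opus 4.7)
The plan is to combine the triangle inequality for the metric $H_\alpha$ with the contraction property of $\Phi$ established in Proposition~\ref{contraction proposition}, exploiting the fact that $\mu^*_\cdot$ is a fixed point of $\Phi$. The condition $\alpha>2C_0$ is exactly what ensures the contraction constant is strictly less than $1$, so that the standard a posteriori estimate for a Banach fixed point is applicable.

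Concretely, I would first note that since $\mu^*_\cdot$ solves \eqref{mean field equation}, it satisfies $\Phi(\mu^*_\cdot)=\mu^*_\cdot$. Next, apply the triangle inequality for $H_\alpha$:
\begin{equation*}
H_\alpha(\mu_\cdot,\mu^*_\cdot)\;\le\;H_\alpha(\mu_\cdot,\Phi(\mu_\cdot))+H_\alpha(\Phi(\mu_\cdot),\mu^*_\cdot).
\end{equation*}
For the second term, rewrite $\mu^*_\cdot=\Phi(\mu^*_\cdot)$ and invoke Proposition~\ref{contraction proposition} (which requires only $\alpha>C_0$) to obtain
\begin{equation*}
H_\alpha(\Phi(\mu_\cdot),\Phi(\mu^*_\cdot))\;\le\;\sqrt{\tfrac{C_0}{\alpha-C_0}}\,H_\alpha(\mu_\cdot,\mu^*_\cdot).
\end{equation*}

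Substituting back and rearranging yields
\begin{equation*}
\Bigl(1-\sqrt{\tfrac{C_0}{\alpha-C_0}}\Bigr)H_\alpha(\mu_\cdot,\mu^*_\cdot)\;\le\;H_\alpha(\mu_\cdot,\Phi(\mu_\cdot)),
\end{equation*}
and the stronger hypothesis $\alpha>2C_0$ guarantees $\sqrt{C_0/(\alpha-C_0)}<1$, so the prefactor is strictly positive and may be divided out to produce the claimed bound. I should also briefly justify that $H_\alpha(\mu_\cdot,\mu^*_\cdot)$ and $H_\alpha(\mu_\cdot,\Phi(\mu_\cdot))$ are finite — this follows from $\mu_\cdot,\mu^*_\cdot,\Phi(\mu_\cdot)\in\mathcal{P}_{2,\infty}([0,T])$ together with the uniform second-moment bound on elements of this space, so the integrals defining $H_\alpha$ converge.

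There is no real obstacle here — the work is already done in Proposition~\ref{contraction proposition}; the theorem is the standard a posteriori corollary of a contraction mapping estimate. The only subtlety worth flagging is the quantitative threshold: one must verify that $\sqrt{C_0/(\alpha-C_0)}<1 \iff \alpha>2C_0$, which makes transparent why the assumption on $\alpha$ is strengthened from $\alpha>C_0$ (sufficient for contraction) to $\alpha>2C_0$ (needed to solve for $H_\alpha(\mu_\cdot,\mu^*_\cdot)$ with a positive prefactor).
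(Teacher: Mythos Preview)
Your proposal is correct and follows essentially the same argument as the paper: triangle inequality for $H_\alpha$, the fixed-point identity $\Phi(\mu^*_\cdot)=\mu^*_\cdot$, Proposition~\ref{contraction proposition}, and rearrangement using $\alpha>2C_0$ to ensure the contraction constant is strictly less than $1$. Your additional remarks on finiteness and on the threshold $\alpha>2C_0$ are helpful elaborations but do not alter the route.
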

\begin{proof}
    By Proposition $\ref{contraction proposition}$ and triangle inequality of metric $H_\alpha$, we have
    \begin{align*}
    \begin{aligned}
    H_\alpha(\mu_\cdot,\mu^\ast_\cdot)\le & H_\alpha(\mu_\cdot,\Phi(\mu_{\cdot}))+H_\alpha(\Phi(\mu_\cdot),\mu^{\ast}_{\cdot})\\
    = & H_\alpha(\mu_\cdot,\Phi(\mu_{\cdot}))+H_\alpha(\Phi(\mu_\cdot),\Phi(\mu^{\ast}_{\cdot}))\\
    \le & H_\alpha(\mu_\cdot,\Phi(\mu_{\cdot}))+\sqrt{\frac{C_0}{\alpha-C_0}}H_\alpha(\mu_\cdot,\mu^{\ast}_{\cdot}),
    \end{aligned}
\end{align*}
then the conclusion follows from this.
\end{proof}

\section{Numerical experiments}\label{neumerical results}
In this section, we apply deepSPoC algorithm to different types of equations to test our algorithm. The numerical experiments we choose for testing cover mean-field SDEs with different spatial dimensions and different forms of $b$ and $\sigma$, and the corresponding degenerate or non-local nonlinear PDEs. Notice that these characteristics often lead to difficulties when we consider solving them numerically, therefore we believe good performance of deepSPoC over these problems strongly demonstrates its effectiveness, extensiveness and strength.  

Unless otherwise specified, in all the following experiments the fully connected neural network $\rho_{FC,\theta}$ we use is always with $6$ hidden layers and each hidden layers consists of 512 neurons. Recall that this neural network is to fit the density function changes over time, and we have introduced its structure and activation function (without specification, we choose ReLU as the activation function) in section \ref{detailed explanation of deepSPoC}. For the temporal KRnet, we use 10 affine coupling blocks. Each affine coupling block has two fully connected hidden layers with each hidden layer consists of 512 neurons. We also use ReLU as the activation function in each hidden layers in temproal KRnet. Here is a remark about the depth and the number of neurons we choose, that is we do not want our neural network to have too few parameters, such as using 32 neurons per layer instead of 512 neurons, though in many experiments we find that deepSPoC with such smaller network can also perform well. It is because in deepSPoC the function of neural network is to store the position of particles, so if the number of parameters in the network is too small, it seems hard to convince that the network can retain all the necessary information carried by the numerous particles we simulate. 

We apply Adam method as our optimizer for updating the network's parameters, and all the numerical experiments are implemented with Pytorch.

\subsection{Porous medium equations}
Our first numerical experiment is to compute the following porous medium equation (PME)
$$
\partial_t \rho =  \Delta\rho^m.
$$
It is a quasilinear degenerate parabolic equation, which degenerates when $\rho=0$. The classical smooth solution may not exist for this equation in general, therefore we consider the weak energy solution instead.
A famous weak solution to PME is Barenblatt solution (\ref{barenblatt}), from which we can find that this weak solution loses classical derivative when the solution met the interface of $\rho=0$. For $m>1$, the Barenblatt solution is defined by
%To show the effectiveness of deepSPoC algorithm, we simulate the famous Barenblatt solution of the PME
\begin{equation}\label{barenblatt}
U_{m,C}(t,x)=\frac{1}{t^\alpha}\left\{ \left(C-\frac{m-1}{2m}\cdot\frac{\beta |x|^2}{t^{2\beta}}\right)^{+} \right\}^{\frac{1}{m-1}}
\end{equation}
where $x\in \mathbb{R}^d, \alpha=\frac{d}{d(m-1)+2}, \beta=\frac{\alpha}{d}$, and we set $m=3$, $C=\frac{\sqrt{3}}{15}$ for our experiments. We choose an initial time $t_0>0$, and use the $U_{m,C}(t_0,\cdot)$ as the initial condition of PME, then apply deepSPoC to solve the PME with this initial condition during time interval $[t_0,t_0+T]$. We compare our numerical solution at the terminal time $t_0+T$ with the true solution $U_{m,C}(t_0+T,\cdot)$, and show the comparison results by intuitive image of the solutions and quantitative error curve changes with epochs. 

It can be noticed that the integration of Barenblatt solution (\ref{barenblatt}) with respect to $x$ which is independent of $t$ does not equals 1, therefore we need to normalize it in order to make it a probability density. 
 We denote the normalization constant by $c_0=c_0(m,C,d)=\int_{\mathbb{R}^d}U_{m,C}(t,x)dx$. It can be easily verified that the normalized Barenblatt solution $\frac{1}{c_0}U_{m,C}(t,x)$ satisfies the following PDE
\begin{equation}\label{normalize}
\partial_t \rho =  \nu\Delta\rho^m,
\end{equation}
where $\nu = c_0^{m-1}$. To apply deepSPoC algorithm to this equation, in the following experiments we always associate the above PDE (\ref{normalize}) with the following density dependent SDE (except for section \ref{ode method for pme} where we associate (\ref{normalize}) with an ODE)
\begin{equation}\label{pme sde}
dX_t=\sqrt{2\nu}\rho^{\frac{m-1}{2}}(t,X_t)dB_t
\end{equation}
with given (normalized) initial distribution $\rho_{NN,\boldsymbol{\theta}}(0,\cdot)=\frac{1}{c_0}U_{m,C}(t_0,\cdot)$. Then We use deepSPoC to compute (\ref{pme sde}) during a time span of length $T$. We use relative $L^2$ error to quantify the distance between the numerical solution and the real solution at the terminal time and estimate it by Monte Carlo integration, i.e.,
\begin{equation}\label{relative l2 err}
\frac{\Vert c_0\rho_{NN,\boldsymbol{\theta}}(T,\cdot)-U_{m,C}(t_0+T,\cdot) \Vert_2}{\Vert U_{m,C}(t_0+T,\cdot) \Vert_2}\approx \frac{\sqrt{\sum_{i=1}^{N_e}\left(c_0\rho_{NN,\boldsymbol{\theta}}(T,x_i)-U_{m,C}(t_0+T,x_i)\right)^2}}{\sqrt{\sum_{i=1}^{N_e}U_{m,C}(t_0+T,x_i)^2}},    
\end{equation}
where $\{x_i\},i=1,\cdots,N_e$ are uniformly sampled from a predetermined region $\Omega_0$, and we set $N_e=10^5$. 

We first apply deepSPoC to compute PMEs with different dimensions according to the SDE (\ref{pme sde}), then we also try the deterministic particle method together with deepSPoC to solve PMEs. Detailed experimental parameters and the corresponding results for every experiment is shown respectively below.

\subsubsection{1D PME}
In this case we solve PME with one-dimensional spatial variables by deepSPoC with fully connected neural network(Algorithm \ref{Deep SPoC Algorithm with fcnn} uniform version). We choose initial time $t_0=1$ and the total time span $T=1$. The time discretization scale is chosen as $\Delta t=0.01$. The number of training points $N$ for each epoch is $1000$ and it is sampled uniformly in a truncated region $\Omega_0=[-2,2]$. The number of sampled particles (or batch size) $K$ is $1000$. It can be refered back to section \ref{detailed explanation of deepSPoC} for the specific meanings of $K$ and $N$.  For optimizer's setting, the initial learning rate is $0.001$ and it is reduced to half every $500$ epochs. 
%contraction coefficient $\gamma$ is 0.5, which is multiplied to the leaning rate for every $\Gamma = 500$ epochs.
The parameter $\epsilon$ for the Gaussian mollifier in (\ref{gaussian-kernel}) is $0.01$. 

Fig. \ref{fig:1dpme} shows the image of numerical solution and real solution at the terminal time after 7000 epochs. The green line is the given initial condition $U_{m,C}(t_0,\cdot)$, the orange line is the numerical solution $c_0\rho_{FC,\boldsymbol{\theta}}(T,\cdot)$ (multiplied by the normalization constant $c_0$), and the blue line is the Barenblatt solution at terminal time $U_{m,C}(t_0+T, \cdot)$ which is our target. We can see that our numerical solution is already very close to the true solution, except for some parts in the middle that do not completely overlap.  
 The relative $L^2$ error of terminal time (see (\ref{relative l2 err}) for definition) changes with epochs is shown in Fig.~\ref{fig:1dpmeerr}, from which we can see the error shows a fluctuating downward trend with epochs and becomes stable after about 5000 epochs.

\begin{figure}[htbp]
    \centering
    \subfigure[Comparison of exact and numerical solution to the 1D porous medium equation.]{
        \includegraphics[scale=0.5]{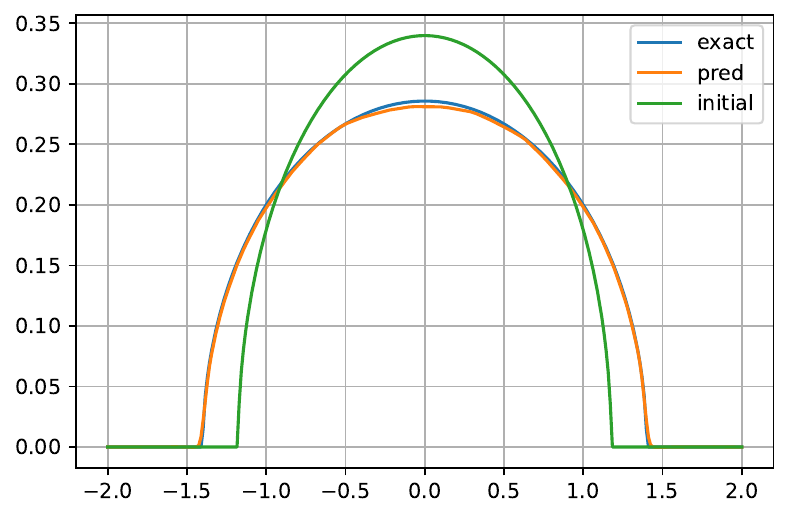}
        \label{fig:1dpme}
    }
    \subfigure[The change of relative $L^2$ error with epochs.]{
        \includegraphics[scale=0.5]{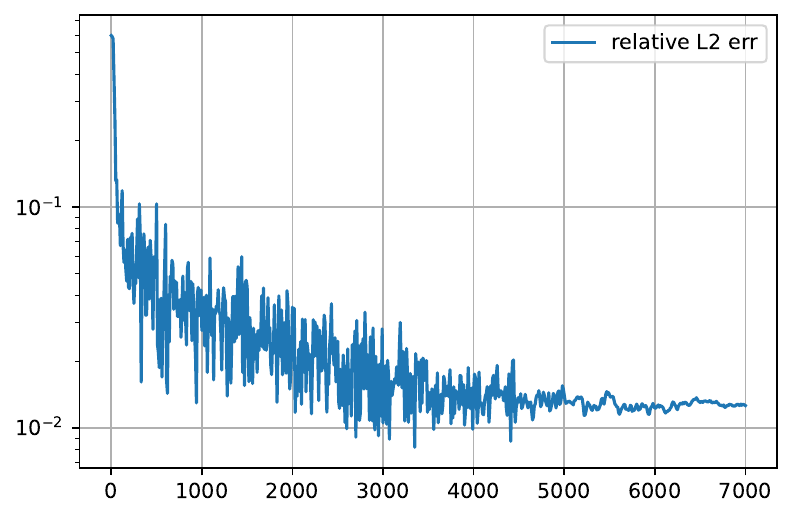}
        \label{fig:1dpmeerr}
    }
    \caption{Result of 1D porous medium equation computed by deepSPoC with fully connected neural networks and loss function $L_{sq}$.}
\end{figure}

\subsubsection{3D PME}
When solving PME with three-dimensional spatial variables by deepSPoC with fully connected neural network(Algorithm \ref{Deep SPoC Algorithm with fcnn} adaptive version), we set initial time $t_0=0.1$, time span $T=0.2$, time discretization size $\Delta t=0.005$. Notice that the Barenblatt solution changes dramatically as $t$ approaches 0, so here we need to shorten the time span and decrease the time discretization size accordingly.

One major difference of 3D case compared with 1D case is now we need self-adaptive strategy to choose training set. Recall the notation in section \ref{adaptive strategy}, here we set $N_1=2000, N_2=2000$, and the intensity of the noise $\sigma$ added to the adaptive training set is set to $0.2$, and the region $\Omega_0$ from which we sample training points is $[-2,2]^3$. Other settings includes the sample number $K=4000$  and the initial learning rate 0.001 which will be reduced to half every 500 epochs. The parameter $\epsilon$ for the Gaussian mollifier is set to $0.02$. Typically, as the dimensionality increases, $\epsilon$ needs to be appropriately increased; otherwise, the height of the peaks obtained after mollifying each particle will become too high, making it difficult for the neural network to learn from the particles.

Fig.~\ref{fig:3dpme} shows the cut view with respect to the first spatial variable of numerical solution and real solution after 8000 epochs. Specifically, the green line is the cut of the initial condition, which is $U_{m,C}(0.1,\cdot,0,0)$, where the second and the third spacial variables are fixed to 0. Similarly, the blue line is the image of function $U_{m,C}(0.3,\cdot,0,0)$ and the orange line is the cut of our numerical solution which is $c_0\rho_{FC,\boldsymbol{\theta}}(0.2,\cdot,0,0)$.
The relative $L^2$ error with epochs is shown in Fig.~\ref{fig:3dpmeerr}.
\begin{figure}[htbp]
    \centering
    \subfigure[Comparison of exact and numerical solution at the cut of the first spatial variable to the 3D porous medium equation.]{
        \includegraphics[scale=0.5]{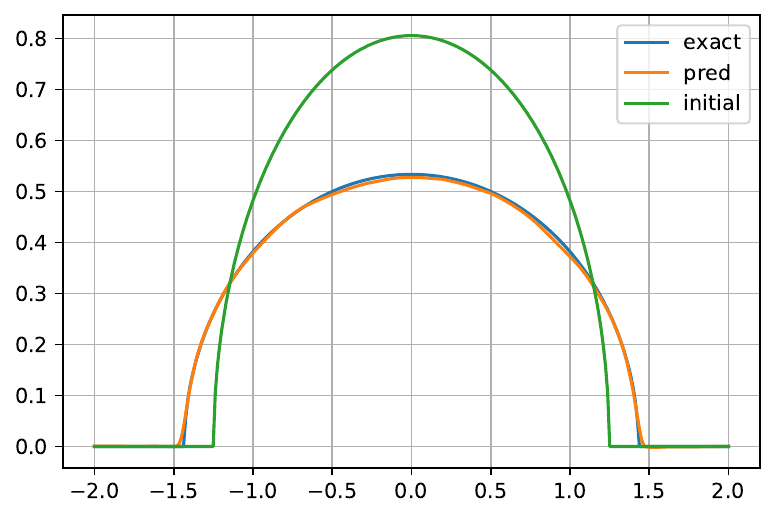}
        \label{fig:3dpme}
    }
    \subfigure[The change of relative $L^2$ error with epochs.]{
        \includegraphics[scale=0.5]{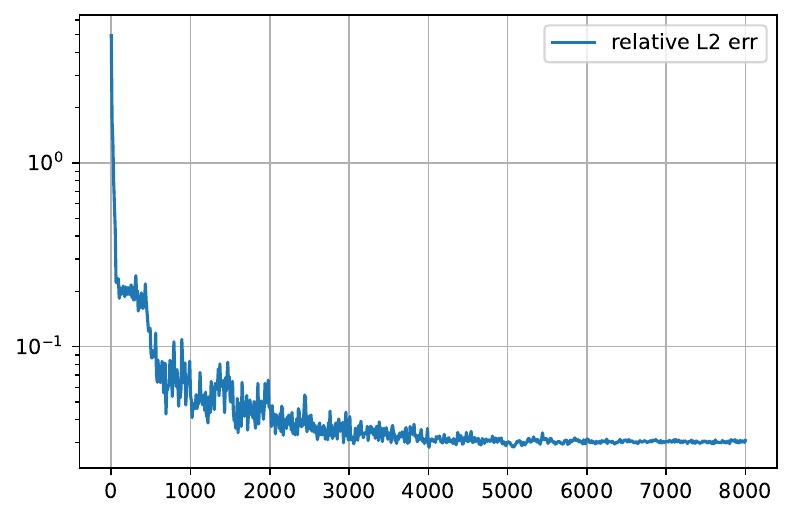}
        \label{fig:3dpmeerr}
    }
    \caption{Result of 3D porous medium equation by deepSPoC with fully connected neural network and loss function $L_{sq}$.}
\end{figure}

\subsubsection{5D PME}
For the experiment of solving PME with five-dimensional spatial variables, we consider both deepSPoC with fully connected neural network(Algorithm \ref{Deep SPoC Algorithm with fcnn} adaptive version) and deepSPoC with path data(Algorithm \ref{Deep SPoC Algorithm with path}). For both algorithms, we set initial time $t_0=1$, time span $T=1$ and time discrete size $\Delta t=0.02$. In Algorithm \ref{Deep SPoC Algorithm with fcnn}, the total number of training points is 6000 consists of $2000$ uniform training points and $4000$ self-adaptive training points, and the noise added to the self-adaptive training points has intensity $\sigma=0.3$. $\epsilon$ for the mollifier is set to $0.05$. For both algorithms, the sample number $K$ is 8000 and the predetermined region is $\Omega_0=[-3,3]^5$. The initial learning rate of optimizer is still 0.001, but now it is multiplied by a bigger contraction coefficient $\gamma = 0.7$ (which is 0.5 for 1D and 3D case) for every $\Gamma=500$ epochs (recall section \ref{how to ensure convergence} for the roles $\gamma$ and $\Gamma$ play). 

The computing result of Algorithm \ref{Deep SPoC Algorithm with fcnn} after $12000$ epochs and the relative $L^2$ error changes with epochs is shown in Fig.~\ref{fig:5dpme} and Fig.~\ref{fig:5dpmeerr} respectively. Fig.~\ref{fig:5dpme} is still the cut of the first spacial variable, we can see that the numerical solution (orange line) and the real solution (blue line) almost coincide, both at the non-smooth interface and the smooth region in the middle.

The computing result of Algorithm \ref{Deep SPoC Algorithm with path} after $5000$ epochs and the relative $L^2$ error changes with epochs is shown in Fig.~\ref{fig:5dpmeKR} and Fig.~\ref{fig:5dpmeerrKR} respectively. Comparing the numerical results and $L_2$ errors, we find that for this 5D PME,  both Algorithm \ref{Deep SPoC Algorithm with path} and Algorithm \ref{Deep SPoC Algorithm with fcnn} can solve this problem and the numerical solution of Algorithm \ref{Deep SPoC Algorithm with fcnn} is more accurate than that of Algorithm \ref{Deep SPoC Algorithm with path}.

\begin{figure}[htbp]
    \centering
    \subfigure[Comparison of exact and numerical solution at the cut of the first spatial variable to the 5D porous medium equation.]{
        \includegraphics[scale=0.5]{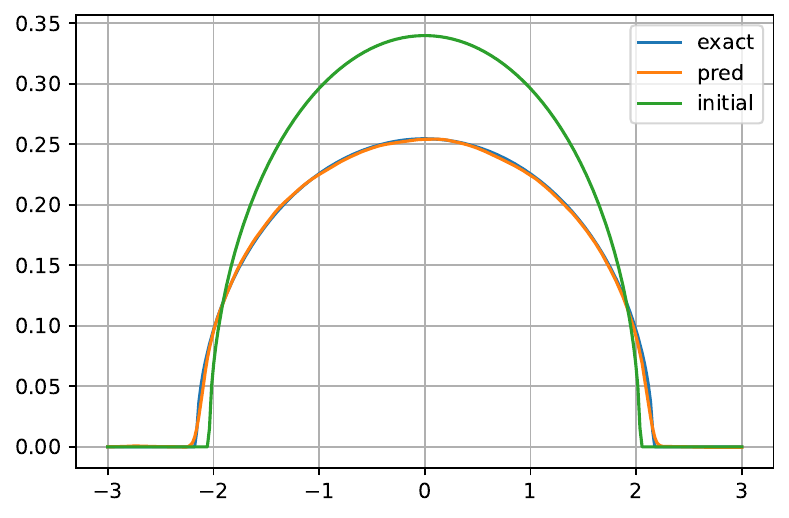}
        \label{fig:5dpme}
    }
    \subfigure[The change of relative $L^2$ error with epochs.]{
        \includegraphics[scale=0.5]{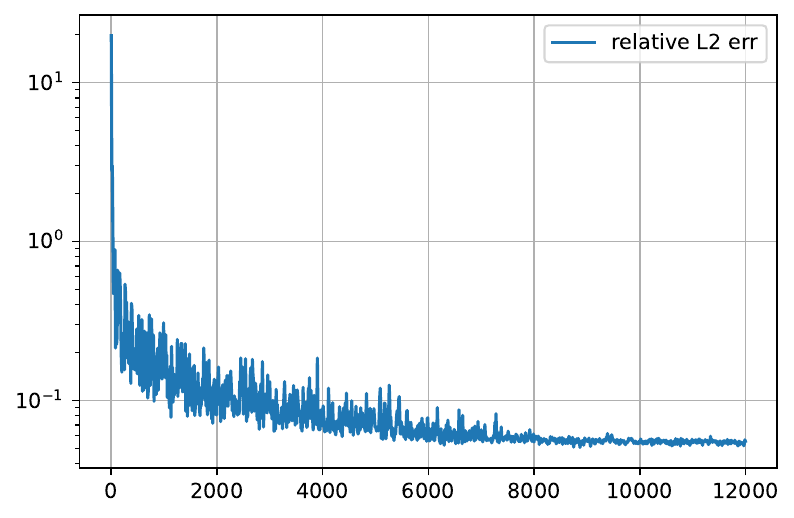}
        \label{fig:5dpmeerr}
    }
    \caption{Result of 5D porous medium equation computed by deepSPoC with fully connected neural networks.}
\end{figure}

\begin{figure}[htbp]
    \centering
    \subfigure[Comparison of exact and numerical solution at the cut of the first spatial variable to the 5D porous medium equation.]{
        \includegraphics[scale=0.5]{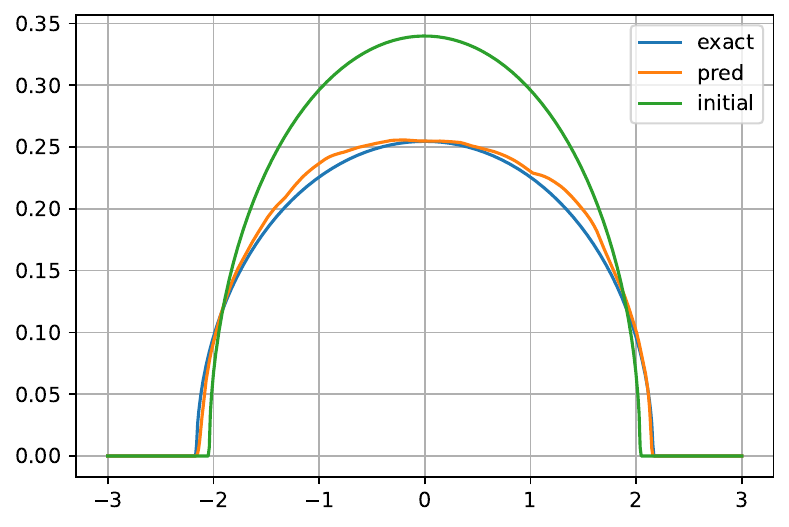}
        \label{fig:5dpmeKR}
    }
    \subfigure[The change of relative $L^2$ error with epochs.]{
        \includegraphics[scale=0.5]{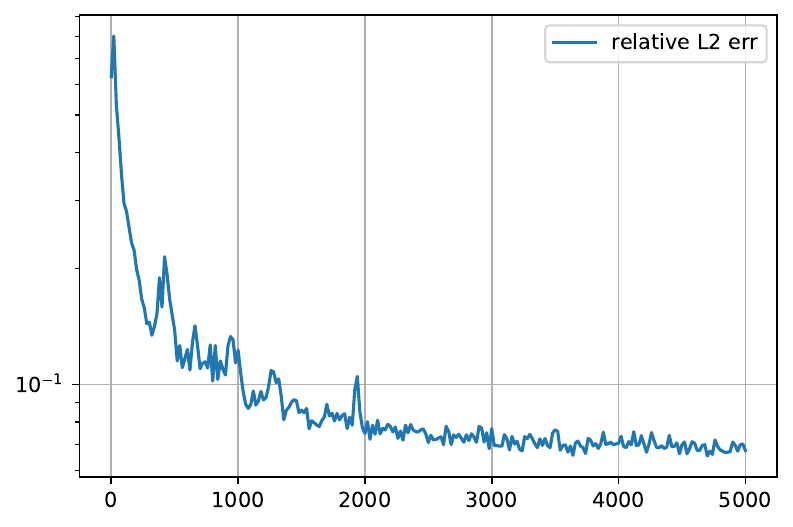}
        \label{fig:5dpmeerrKR}
    }
    \caption{Result of 5D porous medium equation computed by deepSPoC with normalizing flows and loss function $L_{\text{path}}$ defined by \eqref{path loss}.}
\end{figure}

\subsubsection{6D and 8D PME}
For the experiment of solving PME with six and eight-dimensional spatial variables, we present the numerical result of deepSPoC with path data(Algorithm \ref{Deep SPoC Algorithm with path}). We find it is hard to use Algorithm \ref{Deep SPoC Algorithm with path} to address high-dimensional problem such as 6D and 8D PME. The reason is that high-dimensional mollification can be inaccurate. Since there is no mollification in Algorithm \ref{Deep SPoC Algorithm with path}, it is more suitable to be used to solve high-dimensional problem. For both 6D PME and 8D PME, we set initial time $t_0=1$, time span $T=1.5$ and time discrete size $\delta t=0.025$. The total number of samples is $10000$. and the predetermined region is $\Omega_0=[-3,3]^6$ and $\Omega_0=[-3,3]^8$ respectively. The initial learning rat of optimizer is $0.001$ and contraction coefficient $\gamma=0.5$ for every $\Gamma=500$ epochs. 

We again run $5000$ epochs for both 6D and 8D PME. The numerical solutions and the relative $L^2$ errors are presented in Fig \ref{fig:6dpmeKR}, Fig \ref{fig:6dpmeerrKR}, Fig \ref{fig:8dpmeKR} and Fig \ref{fig:8dpmeerrKR} respectively. The numerical results suggest Algorithm \ref{Deep SPoC Algorithm with path} can address high-dimensional problems with reasonable accuracy.
\begin{figure}[htbp]
    \centering
    \subfigure[Comparison of exact and numerical solution at the cut of the first spatial variable to the 6D porous medium equation.]{
        \includegraphics[scale=0.5]{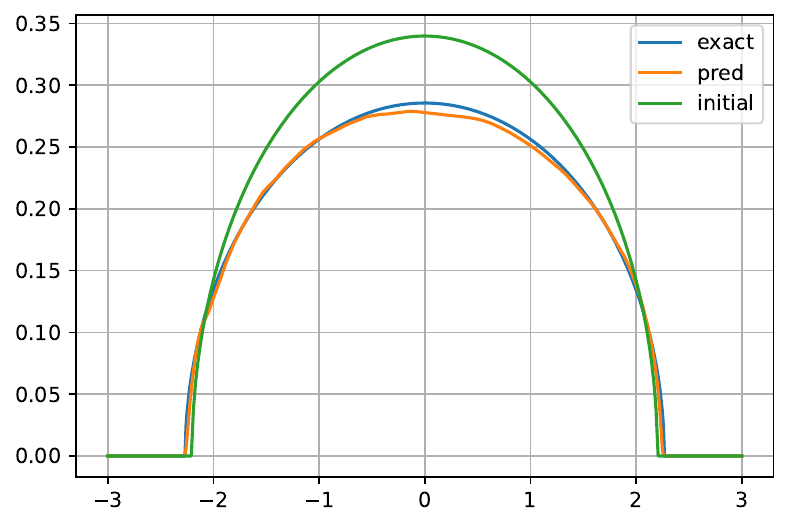}
        \label{fig:6dpmeKR}
    }
    \subfigure[The change of relative $L^2$ error with epochs.]{
        \includegraphics[scale=0.5]{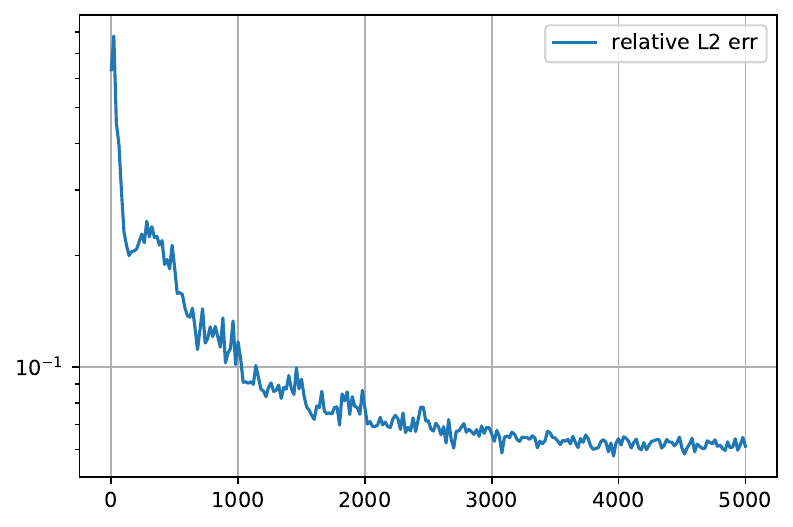}
        \label{fig:6dpmeerrKR}
    }
    \caption{Result of 6D porous medium equation computed by deepSPoC with normalizing flows and loss function $L_{\text{path}}$ .}
\end{figure}

\begin{figure}[htbp]
    \centering
    \subfigure[Comparison of exact and numerical solution at the cut of the first spatial variable to the 8D porous medium equation.]{
        \includegraphics[scale=0.5]{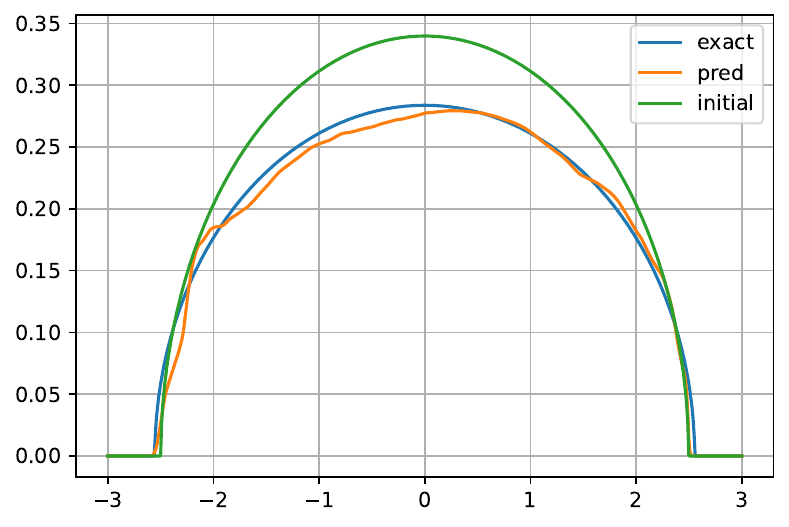}
        \label{fig:8dpmeKR}
    }
    \subfigure[The change of relative $L^2$ error with epochs.]{
        \includegraphics[scale=0.5]{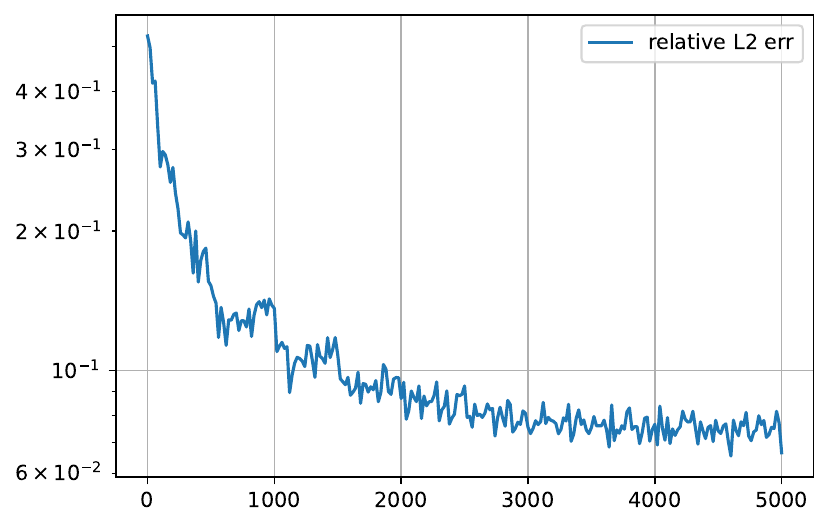}
        \label{fig:8dpmeerrKR}
    }
    \caption{Result of 8D porous medium equation computed by deepSPoC with normalizing flows and loss function $L_{\text{path}}$ .}
\end{figure}

\subsubsection{DeepSPoC based on deterministic particle method for porous medium equations}\label{ode method for pme}
As we have mentioned, we can also associate the PDE (\ref{normalize}) with an ODE and apply deepSPoC to solve it. The ODE has the following form:
\begin{equation}\label{ode for ode method}
dX_t = \nu m \rho^{m-2}(X_t)\nabla\rho(X_t)dt.
\end{equation}
Like stochastic particle methods that include SDEs, method using the ODE (\ref{ode for ode method}) is usually referred to as the deterministic particle method. 
Here we use the same ODE as in~\cite{carrillo2019blob}, in that paper this deterministic particle method is also interpreted as the gradient flow of distribution with respect to the Wasserstein distance. 
Our experimental results show that our method is flexible enough that can be based on not only stochastic particle methods, but also deterministic particle methods.

It can be noticed that the equation (\ref{ode for ode method}) involves taking the derivative of the density function, therefore we need to use a smooth neural network to fit the density function. 
We use SoftPlus function as our activation function in this experiment instead of ReLU function. 
The SoftPlus function can be viewed as a smooth approximation to the ReLU function, which is defined by
$$
\text{SoftPlus}(x)=\frac{1}{\beta}\text{log}(1+\text{exp}(\beta x)),
$$
where the parameter $\beta$ is set to 20 in our experiments.
We obtain the derivatives by using PyTorch's automatic differentiation. 
We apply deepSPoC with fully connected neural network(Algorithm \ref{Deep SPoC Algorithm with fcnn} uniform and adaptive version) to 1D and 3D PME. 
The other training parameters are exactly the same as in the previous experiments of 1D and 3D PME. 
The results is shown in Fig.~\ref{ode method result}, we can see that deepSPoC based on the deterministic particle method is still effective.
\begin{figure}[htbp]\label{ode method result}
    \centering
    \subfigure[]{
        \includegraphics[scale=0.5]{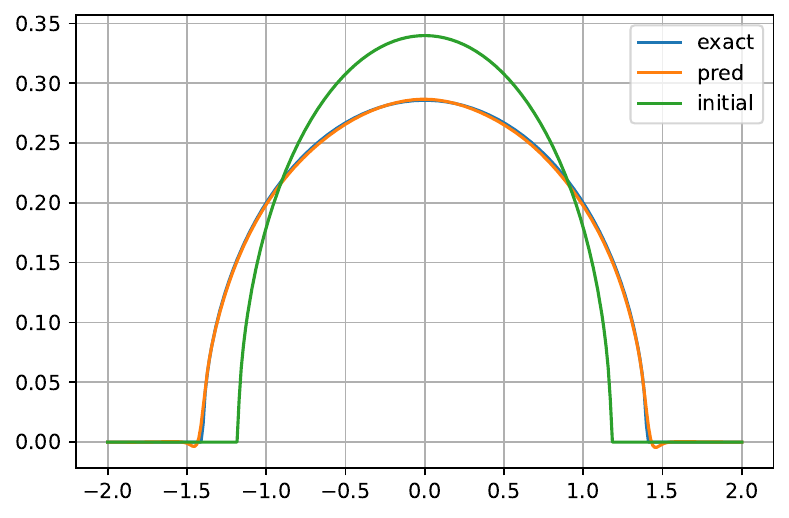}
        \label{fig:1dpmeode}
    }
    \subfigure[]{
        \includegraphics[scale=0.5]{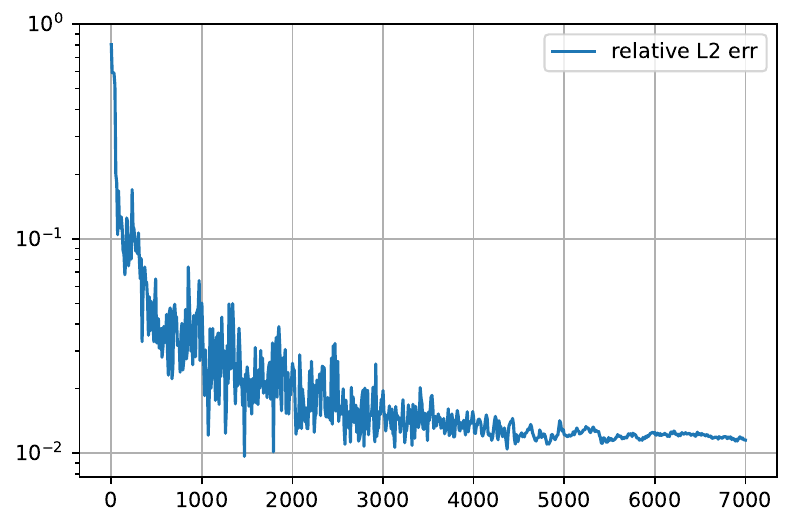}
        \label{fig:1dpmeodeerr}
    }
    \subfigure[]{
        \includegraphics[scale=0.5]{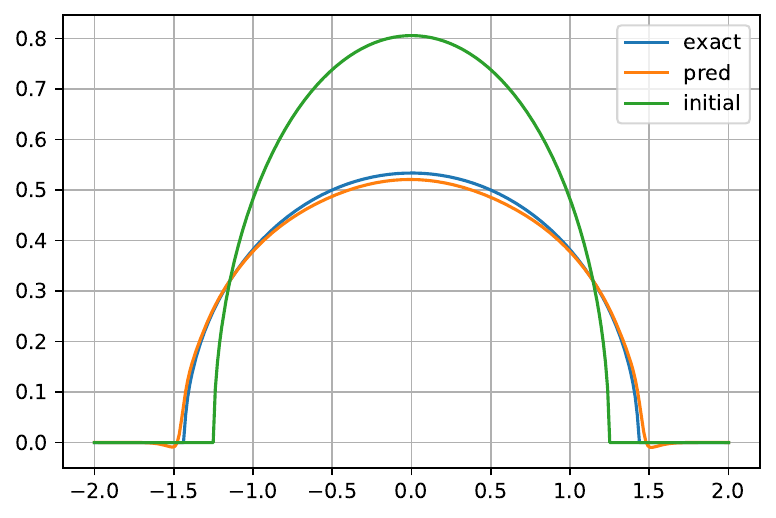}
        \label{fig:3dpmeode}
    }
    \subfigure[]{
        \includegraphics[scale=0.5]{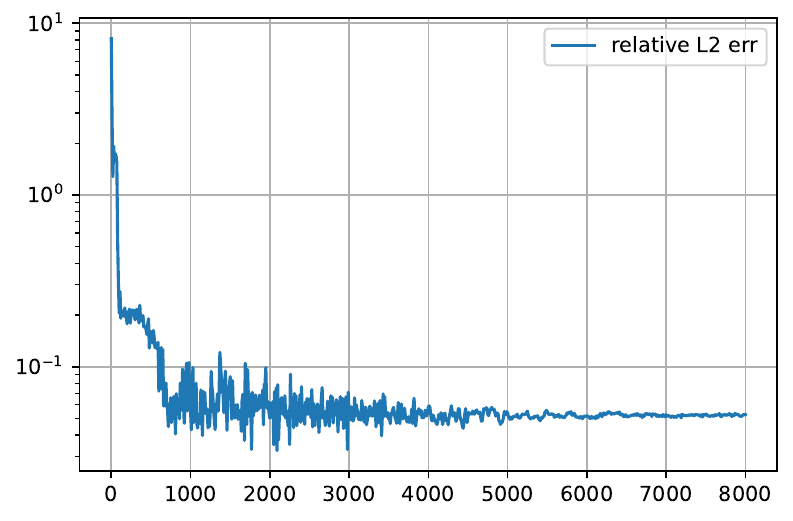}
        \label{fig:3dpmeodeerr}
    }
    \caption{Results of deepSPoC to 1D and 3D porous medium equation using deterministic particle method. The first row is 1D PME and the second row is 3D PME.}
\end{figure}

\subsection{Keller-Segel equations}
In this subsection we use twp deepSPoC algorithms(Algorithm \ref{Deep SPoC Algorithm with fcnn} and Algorithm \ref{Deep SPoC Algorithm with path}) to solve the Keller-Segel equation
$$
\partial_t\mu = \nabla \cdot \left( (\nabla W \ast \mu )\mu \right)+ \Delta \mu,\quad (x,t)\in \mathbb{R}^d\times (0,+\infty)
$$
where
\begin{align}\label{W case}
    W(x)=\left\{
	\begin{aligned}
	&\frac{1}{2\pi} \text{ln}(|x|) \quad \text{if} \quad d=2,\\
 &-\frac{C_d}{|x|^{d-2}} \quad  \text{if}\quad d\ge 3,
	\end{aligned}
	\right.
\end{align}
in which the constant $C_d=\frac{1}{d(d-2)\alpha_d}$. $\alpha_d$ denotes the volume of the unit ball in $\mathbb{R}^d$. It is a semilinear non-local PDE since it contains the convolution of distribution $\mu$. The PDE can be associated with the following mean-field SDE:
$$
dX_t =  -\left( \nabla W \ast \mu_t \right)(X_t)dt+\sqrt{2}dB_t.
$$
The convolution in this SDE represents the interaction between particles. 
The interaction kernel $W$ is an attractive singular kernel. 
Specifically, the PoC result for the 2D case in \eqref{W case} for this Keller-Segel chemotaxis model has been proven in~\cite{pocforKS}. 
When simulating SDEs as in (\ref{euler-scheme}) during the deepSPoC algorithm, we need to evaluate the convolution using Monte Carlo integration at each discrete time step $t_m$. 
In Algorithm \ref{Deep SPoC Algorithm with fcnn}, we use accept-reject sampling method to the density function approximated by the neural network $\rho_{FC,\boldsymbol{\theta}}(t_m,\cdot)$ to generate $\{x_i\}_{\{i=1,\dots,N_g\}}$ independently. In Algorithm \ref{Deep SPoC Algorithm with path}, we draw $\{x_i\}_{\{i=1,\dots,N_g\}}$ directly from KRnet $\rho_{NF,\theta}(t_m,\cdot)$. The independently generated $x_i$ obeys the distribution $\mu_{t_m}^{\boldsymbol{\theta}}$ with density function $\rho_{FC,\boldsymbol{\theta}}(t_m,\cdot)$ or $\rho_{NF,\theta}(t_m,\cdot)$ and we estimate the convolution in the following way:
\begin{equation}\label{monte carlo for convolution}
    \nabla W\ast \mu_{t_m}^{\boldsymbol{\theta}}(x) \approx \frac{1}{N_g}\sum_{i=1}^{N_g}\nabla W(x-x_i).
\end{equation}
In this example, we do not know any solution of Keller-Segel equations that have an exact expression, so we cannot compare the solution we computed with the true solution directly. However, a property of the 2D Keller-Segel equation is that the second moment of the solution of the equation is linear~\cite{K-S2MOMENT} with respect to time and the slope can be explicitly expressed as $4(1-\frac{1}{8\pi})$. Therefore we use this property to show the effectiveness of deepSPoC algorithm when solving 2D Keller-Segel equations, where $W(x)=1/(2\pi)\text{ln}(|x|)$. In particular, in each epoch, we record the second moment of the distribution at each discrete time (again using Monte Carlo method to estimate the second moment), then we calculate the slope of the second moment using least square method to see whether the change of the second moment according to time converges to straight line with the theoretical slope. 

The first initial distribution we choose for our experiment is a 2D Gaussian distribution, which has the density function of $\rho_0(x_1,x_2) = \frac{1}{0.36\pi}\text{exp}(-\frac{x_1^2+x_2^2}{0.36})$. We also choose a mixed Gaussian distribution consists of two weighted Gaussian distributions as a relatively more complicated initial condition, in this case $\rho_0(x_1,x_2)$
$$
\rho_0(x_1,x_2)=\frac{1}{3}\cdot\frac{1}{0.36\pi}\text{exp}\left(-\frac{(x_1+1.5)^2+x_2^2}{0.36}\right)+\frac{2}{3}\cdot\frac{1}{0.36\pi}\text{exp}\left(-\frac{(x_1-1)^2+x_2^2}{0.36}\right).
$$
The $N_g$ in (\ref{monte carlo for convolution}) is set to $500$. For both algorithms, we apply same parameters.Total time span is $0.2$ and time discrete size is $0.01$, number of training points $N$ for each epoch is 2000, which is uniformly sampled from region $[-4,4]^2$, the number of sampled particles $K$ for each epoch is $2000$. The initial learning rate of optimizer is $0.001$ and it is decreased by a contraction factor $\gamma=0.7$ for every $\Gamma=500$ epochs. In Algorithm $\ref{Deep SPoC Algorithm with fcnn}$, the parameter $\epsilon$ for the mollifier is set to be $0.02$.
The evolution graph of the solution computed by Algorithm $\ref{Deep SPoC Algorithm with fcnn}$ after $8000$ epochs at different time is shown in Fig.~\ref{fig:2dKSfig}~\ref{fig:2dKSfig_mix}, which is still the cut view with respect to the first spatial variable. The evolution of the second moment computed by Algorithm $\ref{Deep SPoC Algorithm with fcnn}$ with respect to epoch for two different initial distributions are shown in Fig.~\ref{fig:2dKSslope}~\ref{fig:2dKSslope_mix} respectively. We can see that during the training process, the slopes of second moments gradually converge to the theoretical value. For Algorithm \ref{Deep SPoC Algorithm with path}, we present same numerical results in Fig. \ref{fig:2dKSfigKR}, Fig. \ref{fig:2dKSslope_kr}, Fig. \ref{fig:2dKSfig_mixKR} and Fig.\ref{fig:2dKSslope_mix_kr}. We observe that both algorithms can solve the problem with reasonable accuracy. However, in this example, Algorithm \ref{Deep SPoC Algorithm with fcnn}
outperforms Algorithm \ref{Deep SPoC Algorithm with path}(especially for mixed Gaussian distribution in domain $[-1,0]$). One possible reason is the probability that particles run into interval $[-1,0]$ is too low and therefore path data information is not enough for Algorithm \ref{Deep SPoC Algorithm with path} to generate an accurate solution here.

\begin{figure}[htbp]
    \centering
    \subfigure[Graph of the numerical solution's cut of its first spatial variable at different times.]{
        \includegraphics[scale=0.50]{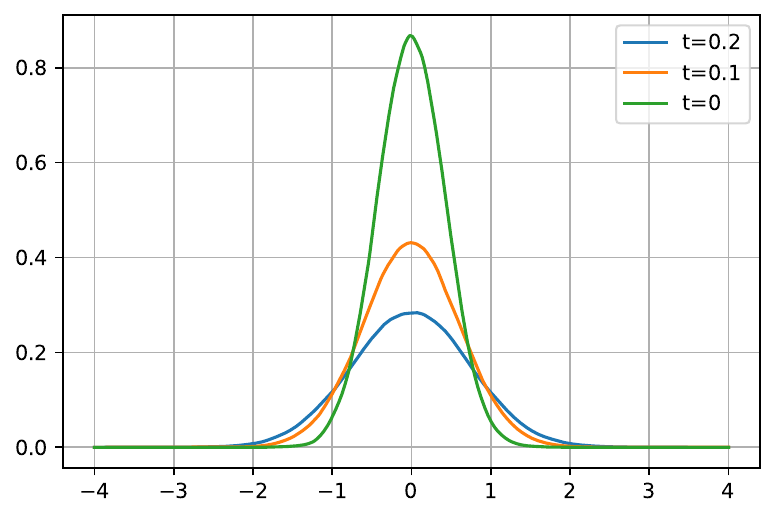}
        \label{fig:2dKSfig}
    }
    \quad
    \subfigure[The change of second moment's slope of numerical solution with epochs and compare it with exact slope.]{
        \includegraphics[scale=0.50]{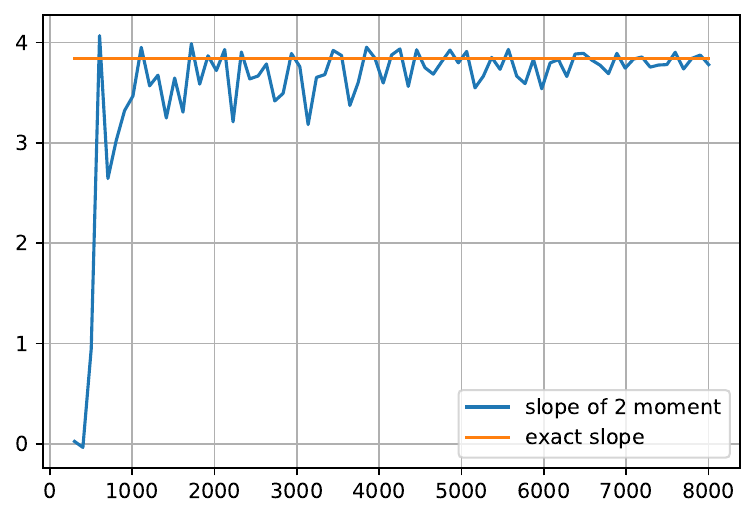}
        \label{fig:2dKSslope}
    }
    \subfigure[Graph of the numerical solution's cut of its first spatial variable at different times.]{
        \includegraphics[scale=0.50]{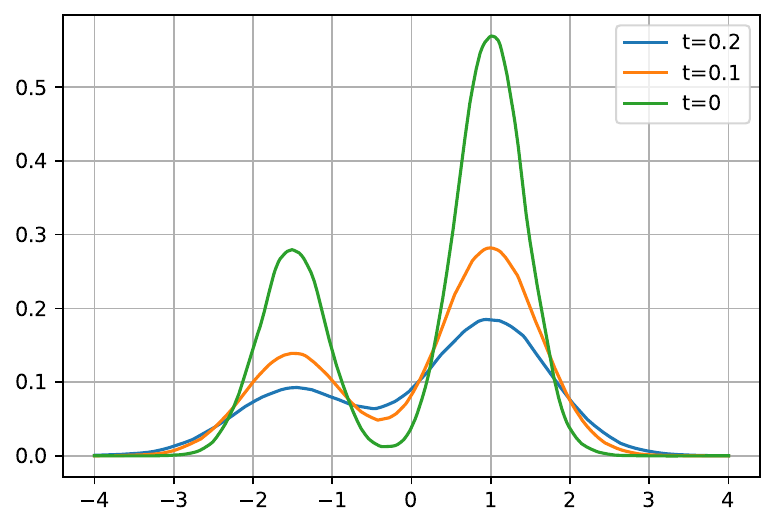}
        \label{fig:2dKSfig_mix}
    }
    \quad
    \subfigure[The change of second moment's slope of numerical solution with epochs and compare it with exact slope.]{
        \includegraphics[scale=0.50]{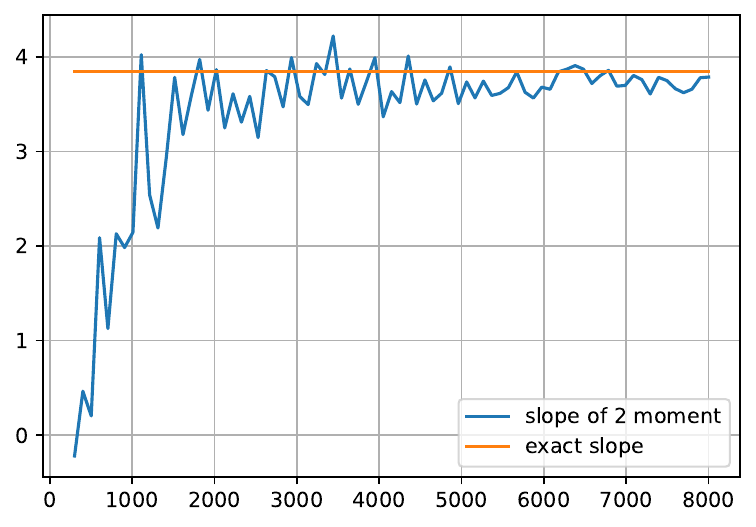}
        \label{fig:2dKSslope_mix}
    }
    
    \caption{Result of 2D Keller-Segel equation for different initial distributions computed by deepSPoC with fully connected neural networks and loss function $L_{sq}$.}
\end{figure}

\begin{figure}[htbp]
    \centering
    \subfigure[Graph of the numerical solution's cut of its first spatial variable at different times.]{
        \includegraphics[scale=0.50]{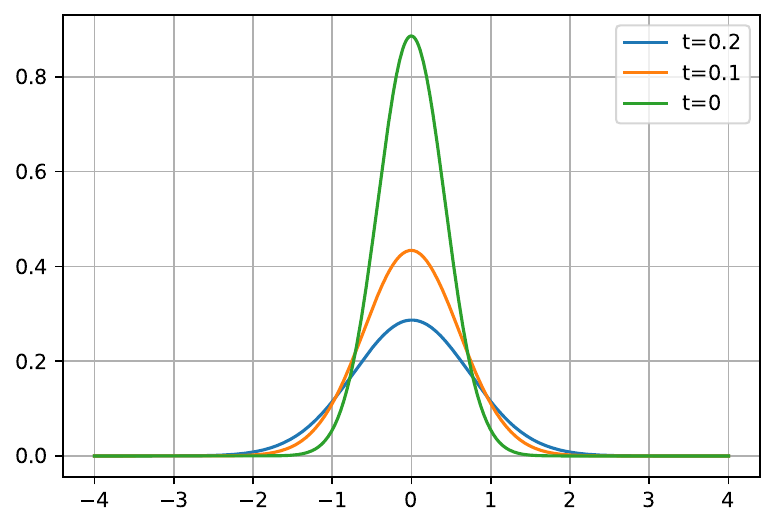}
        \label{fig:2dKSfigKR}
    }
    \quad
    \subfigure[The change of second moment's slope of numerical solution with epochs and compare it with exact slope.]{
        \includegraphics[scale=0.50]{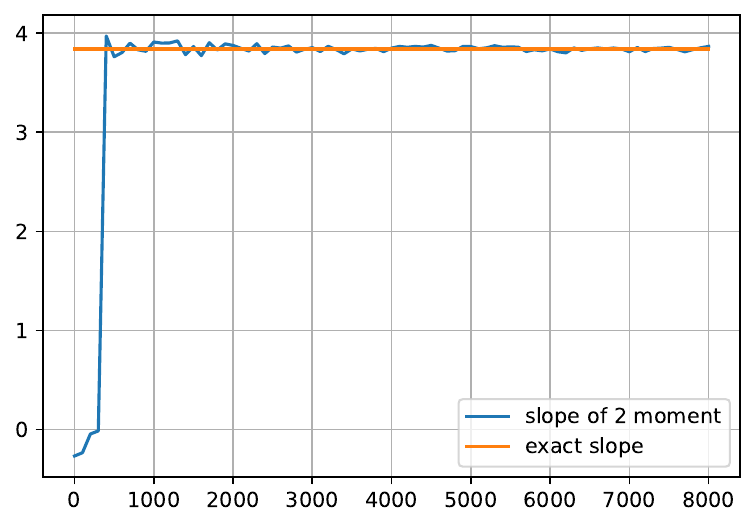}
        \label{fig:2dKSslope_kr}
    }
    \subfigure[Graph of the numerical solution's cut of its first spatial variable at different times.]{
        \includegraphics[scale=0.50]{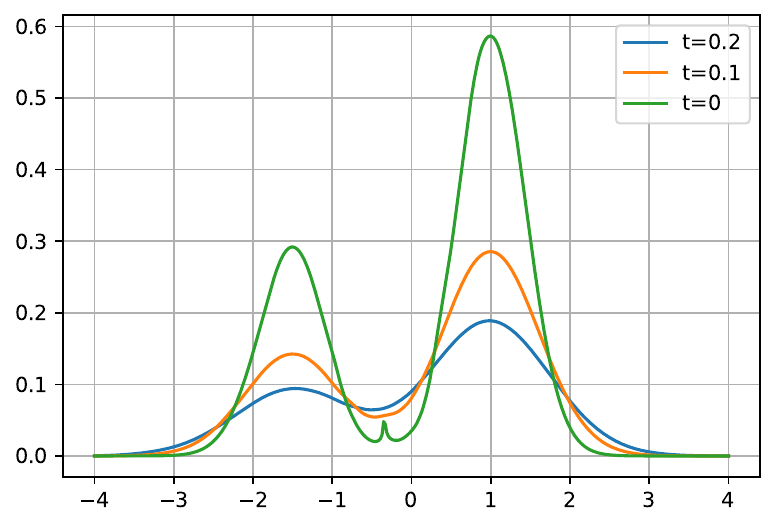}
        \label{fig:2dKSfig_mixKR}
    }
    \quad
    \subfigure[The change of second moment's slope of numerical solution with epochs and compare it with exact slope.]{
        \includegraphics[scale=0.50]{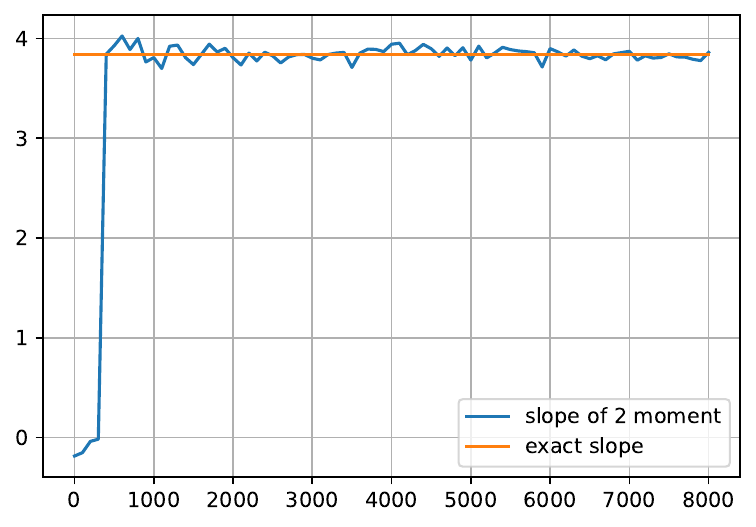}
        \label{fig:2dKSslope_mix_kr}
    }
    
    \caption{Result of 2D Keller-Segel equation for different initial distributions computed by deepSPoC with normalizing flows and loss function $L_{\text{path}}$.}
\end{figure}

\subsection{Curie–Weiss mean-field equations}
This example is known as the Curie-Weiss mean-field equation. It is an expectation dependent mean-field SDE in the following form 
$$
dX_t=\left( -\beta(X_t^3-X_t)+\beta K\mathbb{E}X_t \right)dt+dB_t,
$$
where we set $\beta=1, K=-0.1$.
The equation has an explicit invariant measure and we show the asymptotic behavior of deepSPoC solution to the invariant measure in a relatively longer time.
We use a PoC solution as reference solution and compare our result with it. The invariant measure has the following density (cf. ~\cite{cao2023empirical})
$$
p^{\ast}(x)=\frac{1}{C}\text{exp}\left(-2\beta(\frac{x^4}{4}-\frac{x^2}{2})\right),
$$
where $C=\int_{\mathbb{R}}\text{exp}\left(-2\beta(\frac{x^4}{4}-\frac{x^2}{2})\right)dx$.
The PoC solution is generated by 5 million particles and the deepSPoC solution is generated by 5000 training epochs with each epoch 1000 sampled particles. 
Therefore, we can consider that the two solutions are generated by the same number of particles and we also use the same size of mollification for the two solutions.
Specifically, the reference PoC solution which we show and compare with deepSPoC solution at time $t$ is 
$$
\frac{1}{5000000}\sum_{i=1}^{5000000}\delta_{X^i_{t}}\ast f_\epsilon.
$$ 
We choose the $\mathcal{N}(1,1)$ as the initial distribution, which is a standard Gaussian distribution shifted one unit to the right. 
The time discretization size is 0.01 for both PoC and deepSPoC method(Algorithm \ref{Deep SPoC Algorithm with fcnn} uniform version). 
We still use accept-reject sampling during deepSPoC to estimate the expectation of the neural network fitted density function at each discrete time point by sampling 100 particles each time. 
The number of training points $N$ for each epoch is 1000 and it is sampled uniformly from a truncated region $\Omega_0=[-3,3]$. 
The initial learning rate is still $0.001$ and is reduced to half every 500 epochs. 
The parameter $\epsilon$ for the Gaussian mollifier is $0.01$. 
The comparison of the two solutions at $t=2,4,6,8,10$ is shown in the first 5 figures in Fig.~\ref{CWfig} and the last one is the invariant measure of this equation. 
We can see from the figure that the deepSPoC solution (the blue line) is almost the same as the PoC solution (the red dashes) and it can gradually converge to the invariant measure.
\begin{figure}[htbp]
            \centering
        \includegraphics[scale=0.7]{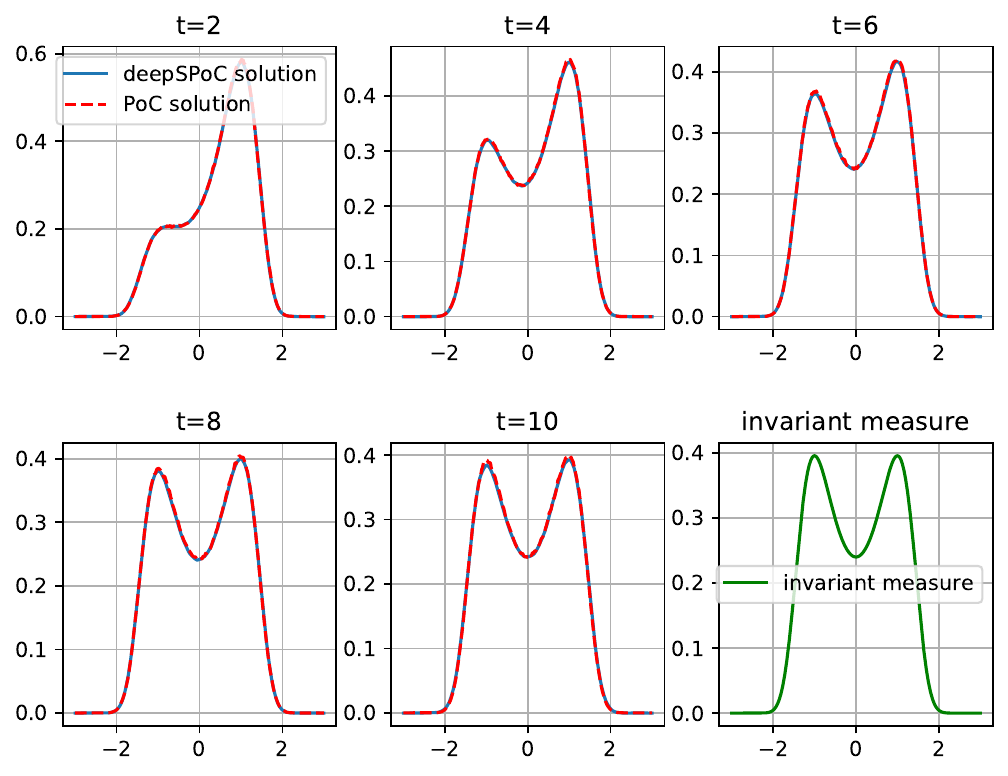}
        \caption{The figure of numerical solutions of CW equation at $t=2,4,6,8,10$ and its invariant measure.}
        \label{CWfig}
\end{figure}

\subsection{Fractional porous medium equations}
The last example is solving the following fractional porous medium equation (FPME)
$$
\partial_t\rho = -(-\Delta)^{\frac{\alpha}{2}}(|\rho|^{m-1}\rho)
$$
where $m>1$, $\alpha\in (0,2)$ and $(-\Delta)^{\frac{\alpha}{2}}$ 
 is the fractional Laplacian defined by 
$$
(-\Delta)^{\frac{\alpha}{2}}f(x)= C_{d,\alpha}\text{P.V.}\int_{\mathbb{R}^d}\frac{f(x)-f(y)}{|x-y|^{d+\alpha}}dy
$$
where the normalization constant $C_{d,\alpha}=2^{\alpha-1}\alpha\Gamma(\frac{d+\alpha}{2})/\pi^{N/2}\Gamma(1-\frac{\alpha}{2})$. 
It is a degenerate and non-local parabolic equation that appears naturally as the limiting equation of particle systems with jumps or long-range interactions. 
In ~\cite{FPMEtheory}, it is proved that the weak solution of the above FPME can be repesented as the distributional density of the solution of the following distribution dependent SDE driven by $\alpha$-stable process $L^{\alpha}_t$
$$
dX_t=\rho(t,X_{t-})^{\frac{m-1}{\alpha}}dL^{\alpha}_t.
$$
In our experiment, we set $d=1$, $\alpha=1$ and $m=2$. We compare our deepSPoC(Algorithm \ref{Deep SPoC Algorithm with fcnn} uniform version) solution with a reference solution calculated by a finite difference method introduced in~\cite{del2014finite}. 
It should be pointed out that we can also apply deepSPoC to higher-dimensional FPMEs. 
However, from~\cite{del2014finite} we can only compute reference solutions of 1D FPMEs according to the method in this paper we need to expand the space dimension from $d$ to $d+1$ then apply finite difference method to the expanded equation, which makes the computational costs for solving FPMEs in two or more dimensions become too large. 
Therefore, we only compare the deepSPoC solution and the reference solution in the one-dimensional case of FPMEs.

The discrete time step for deepSPoC is $0.01$ and the number of sample  is $K=2000$. The number of training points $N$ is $2000$ and they are uniformly sampled from the region $\Omega_0=[-3,3]$. 
Other settings are the same as the 1d PME experiment. We choose a normalized Barenblatt solution as our initial distribution such that we can observe the different behavior between FPME solution and PME solution more clearly. 
The result is shown in Fig~\ref{fig:fpme}, which consists of the images of deepSPoC solution after $5000$ epochs (the green line) and the reference solution(with spatial step size $0.005$ in the original and extend dimension, and time step size $0.005$, the red dashes) at two different time points. We can see that the deepSPoC solution is close to the reference solution and it has infinite propagation speed and "long-tail" shape which is different from the case of PME.  

\begin{figure}[htbp]
    \centering
    \subfigure[t=0.3]{
        \includegraphics[scale=0.48]{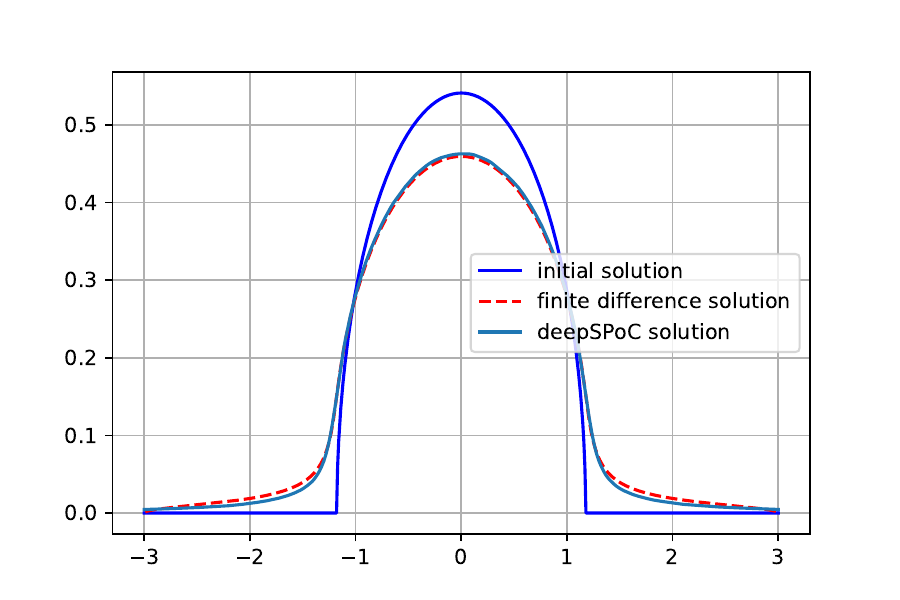}
        \label{fig:fpme0.3}
    }
    \subfigure[t=0.5]{
        \includegraphics[scale=0.48]{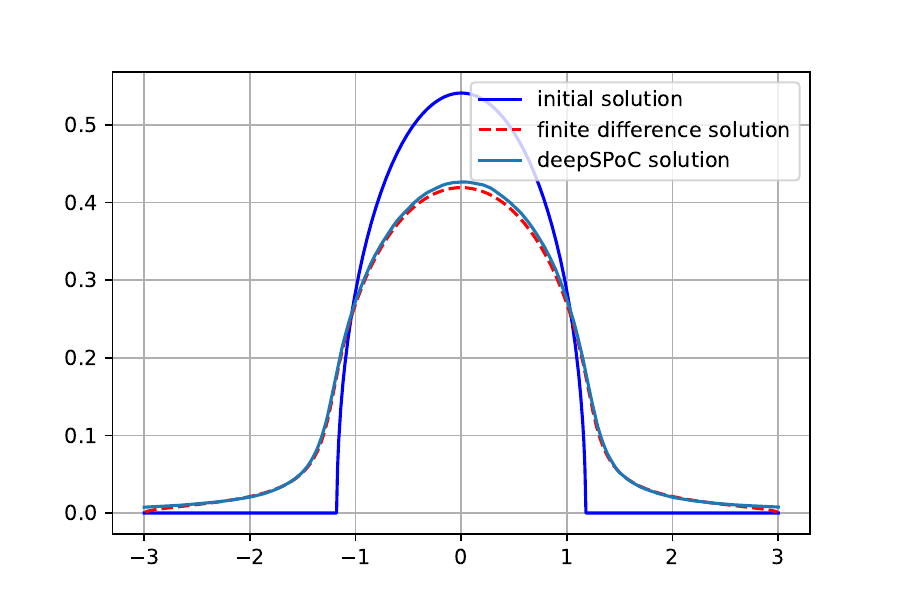}
        \label{fig:fpme0.5}
    }
    \caption{Results of 1D fractional porous medium equation for different time.}
    \label{fig:fpme}
\end{figure}

\section{Conclusion}
We propose an algorithm framework for solving nonlinear Fokker-Planck equations. 
Within this framework, we apply two different network architectures, fully connected and KRnet, and design multiple corresponding loss functions. 
The effectiveness of our algorithm is verified in different numerical examples. 
We have theoretically proven the convergence of the algorithm for approximating density functions using Fourier basis functions within this framework.
We also analyze the theoretical posterior error estimation of the algorithm framework. 
At the same time, the experimental results show that for high-dimensional problems, the adaptive method we designed has played an important role in improving the accuracy and efficiency of the algorithm.

In the future, we hope to apply this algorithm framework to a wider range of equations, such as nonlinear Vlasov-Poisson-Fokker-Planck equations corresponding to second-order systems. We hope to conduct a more in-depth theoretical analysis of the algorithm framework in the future regarding the selection of network architecture and corresponding loss functions  to further explain the effectiveness of the algorithm and improve its performance.

\appendix
\section{Proofs for the results in Section \ref{theoretical analysis}}

\subsection{Proof of Proposition \ref{proposition}}
    From the integral form of the equation that $X_t^{i,n}$ satisfies, we have 
\begin{align}\label{0}
    \begin{aligned}
        \mathbb{E}\left( \sup_{0\le s\le t}|X^{i,n}_s|^3 \right) &= \mathbb{E}\left( \sup_{0\le s\le t}\left|X^{i,n}_0+\int_0^s b(r,X^{i,n}_r,\rho_{\boldsymbol{\theta}^t_{n-1}})dr+\int_0^s \sigma(r,X^{i,n}_r,\rho_{\boldsymbol{\theta}^t_{n-1}})dB^{i,n}_r \right|^3 \right)\\
        &\le 9 \mathbb{E}(|X_0^{i,n}|^3)+9\mathbb{E}\left( \sup_{0\le s\le t}\left|\int_0^s b(r,X^{i,n}_r,\rho_{\boldsymbol{\theta}^t_{n-1}})dr \right|^3 \right)\\
        &+9\mathbb{E}\left( \sup_{0\le s\le t}\left|\int_0^s \sigma(r,X^{i,n}_r,\rho_{\boldsymbol{\theta}^t_{n-1}})dB^{i,n}_r \right|^3 \right) .
    \end{aligned}
\end{align}
Using H\"{o}lder‘s inequality and assumption \ref{assumption}, we have
 \begin{align} \label{1}
    \begin{aligned}
    &\mathbb{E}\left( \sup_{0\le s\le t}\left|\int_0^s b(r,X^{i,n}_r,\rho_{\boldsymbol{\theta}^t_{n-1}})dr \right|^3 \right)\\
    \le & T^2\mathbb{E}\left( \int_0^t|b(r,X_r^{i,n},\rho_{\boldsymbol{\theta}_{n-1}^t})|^3dr \right)\\
    \le & T^2C^3\mathbb{E}\left( \int_{0}^{t}(1+X_r^{i,n}+\vert| 
\rho_{\boldsymbol{\theta}_{n-1}^r} \vert|_1)^3 dr\right)\\
\le & 9T^3C^3+9T^2C^3\mathbb{E}\int_0^t|X_r^{i,n}|^3dr+9T^2C^3\mathbb{E}\int_0^t \vert| \rho_{\boldsymbol{\theta}_{n-1}^r} \vert|_1^3dr.
    \end{aligned}
\end{align}
Similarly, using BDG's inequality together assumption \ref{assumption}, we have
\begin{align}   \label{2}
    \begin{aligned}
    &\mathbb{E}\left( \sup_{0\le s\le t}\left|\int_0^s \sigma(r,X^{i,n}_r,\rho_{\boldsymbol{\theta}^t_{n-1}})dB_r^{i,n} \right|^3 \right)\\
    \le & C_{BDG}\mathbb{E}\left( \int_0^t|\sigma(r,X_r^{i,n},\rho_{\boldsymbol{\theta}_{n-1}^t})|^3dr \right)\\
\le & 9TC_{BDG}C^3+9C_{BDG}C^3\mathbb{E}\int_0^t|X_r^{i,n}|^3dr+9C_{BDG}C^3\mathbb{E}\int_0^t \vert| \rho_{\boldsymbol{\theta}_{n-1}^r} \vert|_1^3dr.
    \end{aligned}
\end{align}
Recall that we have already had an explicit expression of $\rho_{\boldsymbol{\theta}_{n-1}^r}$ according to (\ref{explicit expression of rho theta}) which is
$$
\rho_{\boldsymbol{\theta}_{n-1}^r} = \sum_{l=0}^{n-1} \beta_{l}P_N\left(\frac{1}{K}\sum_{i=1}^K\delta_{\Tilde{X}_r^{i,l}}^\epsilon\right)=P_N\left(\sum_{l=0}^{n-1}\frac{\beta_l}{K}\sum_{i=1}^K \delta_{\Tilde{X}_r^{i,l}}^\epsilon\right),
$$
where $\beta_l = (1-2\alpha_{n-1})(1-2\alpha_{n-2})\dots(1-2\alpha_{l+1})2\alpha_l$ is the weight of $l$-th batch of particles that satisfies $\sum_{l=0}^{n-1}\beta_l=1$. Then by (\ref{infty to control W1}) and the property of Wasserstein distance, we have
\begin{align*}
    \begin{aligned}
    \vert| \rho_{\boldsymbol{\theta}_{n-1}^r }\vert|_1\le & \left\Vert \sum_{l=0}^{n-1}\frac{\beta_l}{K}\sum_{i=1}^K \delta_{\Tilde{X}_r^{i,l}}^\epsilon \right\Vert_1+\epsilon\\
    \le & \left\Vert \sum_{l=0}^{n-1}\frac{\beta_l}{K}\sum_{i=1}^K \delta_{\Tilde{X}_r^{i,l}} \right\Vert_1+2\epsilon\\
    \le & \sum_{l=0}^{n-1}\frac{\beta_1}{K}\sum_{i=1}^{K}|X_r^{i,l}|+2\epsilon,
    \end{aligned}
\end{align*}
the third inequality uses the definition of $\vert| \cdot \vert|_1$ and the truncated particle. Furthermore, by Jensen's inequality and the convexity of function $|x|^3$, we have 
\begin{align}\label{3}
    \begin{aligned}
    \vert| \rho_{\boldsymbol{\theta}_{n-1}^r }\vert|_1^3\le & 
    \left(\sum_{l=0}^{n-1}\frac{\beta_1}{K}\sum_{i=1}^{K}|X_r^{i,l}|+2\epsilon\right)^3\\
    \le & 4\left( \sum_{l=0}^{n-1}\frac{\beta_1}{K}\sum_{i=1}^{K}|X_r^{i,l}| \right)^3+4(2\epsilon)^3\\
    \le & 4\left( \sum_{l=0}^{n-1}\frac{\beta_1}{K}\sum_{i=1}^{K}|X_r^{i,l}|^3 \right)+4(2\epsilon)^3
    \end{aligned}
\end{align}
Put the estimation (\ref{3}) into (\ref{1}) and (\ref{2}), and comeback to (\ref{0}), we will find that there exists a constant $C_1$ depending only on $C,T,C_{BDG}$ and $\mu_0$ such that
\begin{align}\label{22}
    \begin{aligned}
    \mathbb{E}\left( \sup_{0\le s\le t}|X^{i,n}_s|^3 \right)\le &C_1\left(1+\int_0^t \mathbb{E}|X_r^{i.n}|^3 dr+ \sum_{l=0}^{n-1}\frac{\beta_l}{K}\sum_{i=1}^K\int_0^t \mathbb{E}|X_r^{i,l}|^3 dr\right)\\
    \le & C_1\left(1+\int_0^t \mathbb{E}\sup_{0\le s\le r}|X_s^{i.n}|^3 dr+ \sum_{l=0}^{n-1}\frac{\beta_l}{K}\sum_{i=1}^K\int_0^t \mathbb{E}\sup_{0\le s\le r}|X_s^{i,l}|^3 dr\right).
    \end{aligned}
\end{align}
Let $C_2 = max\{ \vert| \mu_0 \vert|_3^3, 2C_1e^{C_1T} \}, C_3 = 2C_1e^{C_1T}$, we now argue by induction on $n$ that $\mathbb{E}\left( \sup_{0\le s\le t}|X_s^{i,n}|^3 \right)\le C_2e^{C_3t}$, according the definition of $C_2$, it is true when $n=0$. If $n\ge 1$, then by (\ref{22}), we obtain 
\begin{equation*}
        \mathbb{E}\left( \sup_{0\le s\le t}|X^{i,n}_s|^3 \right)\le C_1\left(1+\int_0^t\mathbb{E}\sup_{0\le s\le r}|X^{i,n}_s|^3 dr+ \frac{C_2}{C_3}(e^{C_2t}-1) \right),
\end{equation*}
by Grownwall's inequality and the definition of $C_2,C_3$, we have
\begin{align*}
    \begin{aligned}
    \mathbb{E}\left(\sup_{0\le s\le T}|X^{i,n}_s|^3\right)\le & e^{C_1t}(C_1+\frac{C_1C_2}{C_3}(e^{C_3t}-1))\\
    \le & C_1e^{C_1t}+ \frac{1}{2}C_2e^{C_3t}\le C_2e^{C_3t}.
    \end{aligned}
\end{align*}
Let $C_0=C_2e^{C_3T}$, then we have proved the first argument of this proposition, and the second one follows easily from 
$$
\mathbb{E}\left( (\sup_{0\le s\le T}|X^{i,n}_s|^2) \cdot
1_{\{\sup_{0\le s\le T}|X^{i,n}_s|\ge L_0\}}\right)\le \mathbb{E}\left( \sup_{0\le s\le T}|X^{i,n}_s|^3 \right)/L_0.
$$
The proof is complete.

\subsection{Proof of Theorem \ref{theorem}}
Recall that we assume in our main convergence theorem that the learning rate $\alpha_n$ is chosen to make all the particles have the same weight. In fact, no matter what weights of particles we choose, the convergence speed will always come from a sequence of recursive Grownwall-like inequalities, the next lemma is about to tackle such inequalities of the equal weights case which will occur in the proof of our main theorem.

\begin{lemma}\label{lemma}
    For a sequence of recursive inequalities
    $$
    \frac{dx_i(t)}{dt}\le Cx_i(t)+\frac{L}{i}\sum_{j=0}^{i-1}x_j(t)+ M\tau_i,
    $$
    where $t\in [0,T]$ , $x_i(0)=0(i\ge 0)$ and $C,L,M$ are constants, and $\tau_i$ satisfies that there exists a constant $C_0$ such that for any $i\ge 0$, 
    $$
    \frac{\tau_0+\tau_1+\dots+\tau_{i-1}}{i}\le C_0\tau_i,
    $$
    then we can find a constant $\Bar{C}$ independent of $i$ such that for any $i\ge 0$ and $t\in [0,T]$,
    $$
    x_i(t)\le \Bar{C}\tau_i.
    $$
\end{lemma}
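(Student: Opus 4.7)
The lemma is a quantitative Gronwall-type estimate on a triangular recursive system of ODE inequalities. I would prove it by induction on $i\ge 0$, strengthening the target estimate to the ansatz $x_i(t)\le \bar{C}_0\,\tau_i\,e^{\bar{D}t}$ for all $t\in[0,T]$, where $\bar{C}_0, \bar{D}$ are constants independent of $i$ to be tuned at the end. The conclusion of the lemma then follows with $\bar{C}:=\bar{C}_0\, e^{\bar{D}T}$. For the base case $i=0$ the hypothesis reduces to $\tfrac{dx_0}{dt}\le Cx_0+M\tau_0$ with $x_0(0)=0$, and multiplication by the integrating factor $e^{-Ct}$ followed by direct integration gives an explicit bound $x_0(t)\le K(C,M,T)\,\tau_0$, which is compatible with the ansatz provided $\bar{C}_0$ is chosen sufficiently large.

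\textbf{Inductive step.} Assume the ansatz holds for all $j<i$. Combined with the averaging hypothesis $(\tau_0+\cdots+\tau_{i-1})/i\le C_0\tau_i$, this yields
\[
\frac{L}{i}\sum_{j=0}^{i-1}x_j(t)\le LC_0\bar{C}_0\,\tau_i\,e^{\bar{D}t},
\]
so the recursive system collapses into a single scalar Gronwall-type inequality
\[
\frac{dx_i}{dt}\le Cx_i+\bigl(LC_0\bar{C}_0\, e^{\bar{D}t}+M\bigr)\tau_i,\qquad x_i(0)=0.
\]
Variation of constants then gives $x_i(t)\le \tau_i\int_0^t e^{C(t-s)}\bigl(LC_0\bar{C}_0\, e^{\bar{D}s}+M\bigr)\,ds$, which, assuming $\bar{D}>C$, is bounded by $\bigl[\tfrac{LC_0\bar{C}_0}{\bar{D}-C}+MT e^{|C|T}\bigr]\tau_i\, e^{\bar{D}t}$. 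Closing the induction therefore reduces to arranging the two separable conditions $\tfrac{LC_0}{\bar{D}-C}\le \tfrac12$ and $MT e^{|C|T}\le \tfrac{\bar{C}_0}{2}$, which is achieved by picking $\bar{D}:=C+2LC_0$ and then taking $\bar{C}_0$ large enough to also dominate the base-case constant $K$.

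\textbf{Main obstacle.} The only real subtlety is the simultaneous calibration of $\bar{C}_0$ and $\bar{D}$: the exponential ansatz must be rigid enough that the forward propagation through the averaging term $\tfrac{L}{i}\sum_{j<i}x_j$ does not degrade the constant at each induction step. This forces $\bar{D}$ to exceed $C$ by a margin proportional to $LC_0$, after which a geometric-series-type absorption yields a constant $\bar{C}_0$ uniform in $i$. The sign of $C$ is not a genuine obstacle: when $C\le 0$ one simply replaces $e^{C(t-s)}$ by the uniform bound $e^{|C|T}$ throughout, and the same scheme goes through.
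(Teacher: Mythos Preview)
Your proof is correct and follows essentially the same strategy as the paper's: both argue by induction on $i$ with an exponential-in-$t$ ansatz $x_i(t)\le C_1 e^{C_2 t}\tau_i$, invoke the averaging hypothesis to control $\tfrac{1}{i}\sum_{j<i}x_j(t)$, and close the induction via Gronwall/variation of constants. The only difference is cosmetic---the paper bounds $e^{C(t-s)}\le e^{CT}$ before integrating and then sets $C_2 = e^{CT}LC_0$, whereas you integrate the exponential exactly and set $\bar D = C + 2LC_0$---but the structure is identical.
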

\begin{proof}
    we prove the following argument by induction 
    $$
    x_i(t)\le C_1e^{C_2t}\tau_i, t\in [0,T],
    $$
    where $C_2=e^{CT}LC_0$. $C_1$ is chosen big enough to satisfies $C_1>e^{CT}MT$ and to make the above argument is true for $i=0$ at the same time. Then for $i\ge 1$, suppose the argument is true for $j\le i$, we have
    \begin{align*}
    \begin{aligned}
    \frac{dx_i(t)}{dt}\le & Cx_i(t)+\frac{L}{i}\sum_{j=0}^{i-1}C_1e^{C_2}\tau_j+M\tau_i\\
    \le & Cx_i(t)+LC_1e^{C_2t}C_0\tau_i+M\tau_i\\
    \le & Cx_i(t)+(LC_0C_1e^{C_2t}+M)\tau_i,
    \end{aligned}
\end{align*}
By Grownwall's inequality,
 \begin{align*}
    \begin{aligned}
    x_i(t)\le & e^{CT}\int_0^t(LC_0C_1e^{C_2t}+M)dt\tau_i\\
    =&\left( (e^{CT}LC_0)\frac{C_1}{C_2}(e^{C_2t}-1)+e^{CT}Mt \right)\tau_i\\
    =&\left( C_1(e^{C_2t}-1)+e^{CT}Mt \right)\tau_i\\
    \le & C_1e^{C_2t}\tau_i. 
    \end{aligned}
\end{align*}
So the argument is true, then we finish the proof by setting $\Bar{C}=C_1e^{C_2T}$.
\end{proof}
Now we give the proof of Theorem \ref{theorem}.
\begin{proof}[Proof of Theorem \ref{theorem}]
     Using synchronous coupling method, we introduce particles $Y_{t}^{i,n}$ to satisfy the following mean field limit equation
     \begin{align*}
    \left\{
    \begin{aligned}
            dY_t^{i,n}=& b\left(t,Y_t^{i,n},\mu_t\right)dt+\sigma\left(t,Y_t^{i,n},\mu_t\right)dB_t^{i,n}, i=1,\dots ,K\\
        Y_0^{i,n}=& X_0^{i,n},
    \end{aligned}
    \right.
\end{align*}
where $\mu_t$ is the distribution of $Y_t^{i,n}$ , notice that $Y_t^{i,n}$ are all i.i.d. random variables. Denote $Z_t^{i,n}:=X_t^{i,n}-Y_t^{i,n}$, we have $Z_0^{i,n}=0$ and  
\begin{align*}
    \begin{aligned}
    d|Z_t^{i,n}|^2 &= 2Z_t^{i,n}\cdot \left( b\left(t,X_t^{i,n},\rho_{\boldsymbol{\theta}^t_{n-1}}\right)-b \left(t,Y_t^{i,n},\mu_t\right)\right)dt\\
    &+2Z_t^{i,n}\cdot \left(\sigma\left(t,X_t^{i,n},\rho_{\boldsymbol{\theta}^t_{n-1}}\right)- \sigma\left(t,Y_t^{i,n},\mu_t\right) \right)dB_t^{i,n}\\
    &+\left\Vert  \sigma\left(t,X_t^{i,n},\rho_{\boldsymbol{\theta}^t_{n-1}}\right)- \sigma\left(t,Y_t^{i,n},\mu_t\right) \right\Vert^2dt.
    \end{aligned}
\end{align*}
Take expectation from both side and use assumption \ref{assumption}, we obtain 
\begin{align*}
    \begin{aligned}
    d\mathbb{E}|Z_t^{i,n}|^2 &= 2\mathbb{E}\left( Z_t^{i,n}\cdot \left( b\left(t,X_t^{i,n},\rho_{\boldsymbol{\theta}^t_{n-1}}\right)-b \left(t,Y_t^{i,n},\mu_t\right)\right)\right)dt\\
    &+\mathbb{E}\vert|  \sigma(t,X_t^{i,n},\rho_{\boldsymbol{\theta}^t_{n-1}})- \sigma(t,Y_t^{i,n},\mu_t) \vert|^2dt\\
    &\le \mathbb{E}|Z_t^{i,n}|^2dt +\mathbb{E}| b(t,X_t^{i,n},\rho_{\boldsymbol{\theta}^t_{n-1}})-b(t,Y_t^{i,n},\mu_t)|^2 dt \\
    &+\mathbb{E}\vert|  \sigma(t,X_t^{i,n},\rho_{\boldsymbol{\theta}^t_{n-1}})- \sigma(t,Y_t^{i,n},\mu_t) \vert|^2dt\\
    &\le (1+4C^2) \mathbb{E}|Z_t^{i,n}|^2dt+4C^2\mathbb{E}W_1^2(\rho_{\boldsymbol{\theta}^t_{n-1}}, \mu_t)dt.
    \end{aligned}
\end{align*}
We now carefully analyze the latter term, since $\beta_l =\frac{1}{n}$, then
\begin{align}\label{term}
    \begin{aligned}
   \mathbb{E}W_1^2(\rho_{\boldsymbol{\theta}^t_{n-1}}, \mu_t)=& \mathbb{E}W_1^2\left( P_N(\frac{1}{Kn}\sum_{l=0}^{n-1}\sum_{i=1}^K\delta_{\Tilde{X}_r^{i,l}}^\epsilon), \mu_t \right)\\
   \le & 2\mathbb{E}W_1^2\left( P_N(\frac{1}{Kn}\sum_{l=1}^{n-1}\sum_{i=1}^K\delta_{\Tilde{X}_r^{i,l}}^\epsilon), \frac{1}{Kn}\sum_{l=0}^{n-1}\sum_{i=1}^{K}\delta_{Y_r^{i,l}} \right)+2\mathbb{E}W_1^2\left( \frac{1}{Kn}\sum_{l=0}^{n-1}\sum_{i=1}^{K}\delta_{Y_r^{i,l}}, \mu_t \right),
   \end{aligned}
\end{align}
where
\begin{align*}
    \begin{aligned}
    & \mathbb{E}W_1^2\left( P_N(\frac{1}{Kn}\sum_{l=1}^{n-1}\sum_{i=1}^K\delta_{\Tilde{X}_r^{i,l}}^\epsilon), \frac{1}{Kn}\sum_{l=0}^{n-1}\sum_{i=1}^{K}\delta_{Y_r^{i,l}} \right)\\
   \le & 2\mathbb{E}W_1^2\left(\frac{1}{Kn}\sum_{l=1}^{n-1}\sum_{i=1}^K\delta_{\Tilde{X}_r^{i,l}},\frac{1}{Kn}\sum_{l=0}^{n-1}\sum_{i=1}^{K}\delta_{Y_r^{i,l}} \right)+ 2(2\epsilon)^2\\
   \le & 2\mathbb{E}\left(\frac{1}{Kn}\sum_{l=0}^{n-1}\sum_{i=1}^{K}|\Tilde{X}_r^{i,l}-Y_{r}^{i,l}|\right)^2+8\epsilon^2\\
   \le &  2\mathbb{E}\left(\frac{1}{Kn}\sum_{l=0}^{n-1}\sum_{i=1}^{K}|\Tilde{X}_r^{i,l}-Y_r^{i,l}|^2\right)+8\epsilon^2\\
   \le &  2\mathbb{E}\left(\frac{1}{Kn}\sum_{l=0}^{n-1}\sum_{i=1}^{K}|X_r^{i,l}-Y_r^{i,l}|^2\right) +10\epsilon^2 \\
   = & 2\mathbb{E}\left(\frac{1}{Kn}\sum_{l=0}^{n-1}\sum_{i=1}^{K}|Z_r^{i,l}|^2\right)+10\epsilon^2,
   \end{aligned}
\end{align*}
the last inequality comes from (\ref{choose of L}). Another term of (\ref{term}) is about the convergence of empirical measures in the Wasserstein distance, and we refer to the lemma 3.3 in \cite{du2023sequential} to get the following estimate
$$
\mathbb{E}W_1^2\left( \frac{1}{Kn}\sum_{l=0}^{n-1}\sum_{i=1}^{K}\delta_{Y_r^{i,l}}, \mu_t \right)\le C(\frac{1}{Kn})^{\frac{1}{1+d/2}},
$$
where $C$ is a constant independent of $n$ and $K$. Then we obtain that there exists a constant $C_0$ such that 
\begin{align*}
    \begin{aligned}
    \frac{d\mathbb{E}|Z_t^{i,n}|^2}{dt}\le C_0\left( \mathbb{E}|Z_t^{i,n}|^2+\frac{1}{Kn}\sum_{l=0}^{n-1}\sum_{i=1}^{K}\mathbb{E}|Z_r^{i,l}|^2+\epsilon^2+\frac{1}{(Kn)^{\frac{1}{1+d/2}}}\right).
   \end{aligned}
\end{align*}
Let $\tau_l = \epsilon^2+\frac{1}{(Kl)^{\frac{1}{1+d/2}}}$, it is easy to verify that $\tau_l$ satisfy the condition of Lemma \ref{lemma}, then by this lemma we can obtain that 
$$
 \mathbb{E}|Z_t^{i,l}|^2\le \Bar{C} \left(\epsilon^2+\frac{1}{(Kl)^{\frac{1}{1+d/2}}}\right),
$$
where $\Bar{C}$ is the constant from this lemma. Then we put the above estimation into (\ref{term})
\begin{align*}
    \begin{aligned}
    \mathbb{E}W_1^2(\rho_{\boldsymbol{\theta}^t_{n-1}}, \mu_t) & \le 2\mathbb{E}\left(\frac{1}{Kn}\sum_{l=0}^{n-1}\sum_{i=1}^{K}|Z_r^{i,l}|^2\right)+10\epsilon^2+C(\frac{1}{Kn})^{\frac{1}{1+d/2}}\\
    & \le C_1(\epsilon^2+(Kn)^{\frac{1}{1+d/2}}).
   \end{aligned}
\end{align*}
The proof is complete.
\end{proof}

\subsection{Proof of Proposition \ref{contraction proposition}}
Again using synchronous coupling method, for $\mu_{\cdot},\nu_{\cdot}\in \mathcal{P}_{2,\infty}([0,T])$, we introduce the following two Markovian SDEs:
\begin{align*}
    \begin{aligned}
    dX_t & = b(t,X_t,\mu_t)dt+\sigma(t,X_t,\mu_t)dB_t,\\
    dY_t & = b(t,Y_t, \nu_t)dt+\sigma(t,Y_t,\nu_t)dB_t,
   \end{aligned}
\end{align*}
where $X_0=Y_0\sim \Bar{\mu}$. Then we have
\begin{align*}
    \begin{aligned}
    \mathbb{E}|X_t-Y_t|^2 & \le 2\mathbb{E}\left| \int_0^tb(s,X_s,\mu_s)-b(s,Y_s,\nu_s)ds \right|^2 + 2\mathbb{E}\left| \int_0^t \sigma(s,X_s,\mu_s)-\sigma(s,Y_s,\nu_s)dW_s \right|^2\\
    & \le 2t\mathbb{E}\int_0^t|b(s,X_s,\mu_s)-b(s,Y_s,\nu_s)|^2ds+2\mathbb{E}\int_0^t\Vert\sigma(s,X_s,\mu_s)-\sigma(s,Y_s,\nu_s)\Vert^2ds\\
    & \le 2(T+1)C^2 \left(\mathbb{E}\int_0^t|X_s-Y_s|^2ds + \mathbb{E}\int_0^tW_2^2(\mu_s,\nu_s)ds\right).
    \end{aligned}
\end{align*}
Multiplying both sides by $e^{-\alpha t}$ and integrating from 0 to $T$, we obtain
\begin{align*}
    \begin{aligned}
    & \int_0^T e^{-\alpha t}\mathbb{E}|X_t-Y_t|^2dt\\
    \le & 2(T+1)C^2\int_0^T e^{-\alpha t}\left( \mathbb{E}\int_0^t |X_s-Y_s|^2ds \right)dt+2(T+1)C^2\int_0^T e^{-\alpha t}\left(\int_0^t W_2^2(\mu_s,\nu_s)ds\right)dt,
    \end{aligned}
\end{align*}
By exchanging the order of integration, we obtain
\begin{align*}
    \begin{aligned}
    & \int_0^T e^{-\alpha t}\left(\mathbb{E}\int_0^t|X_t-Y_t|^2ds\right)dt\\
    = & \mathbb{E}\int_0^T\int_0^te^{-\alpha t}|X_s-Y_s|^2dsdt\\
    = & \mathbb{E}\int_0^T\int_s^T e^{-\alpha t}|X_s-Y_s|^2dtds\\
    = & \mathbb{E}\int_0^T\int_s^T e^{-\alpha(t-s)}e^{-\alpha s}|X_s-Y_s|^2dtds \\
    \le & \frac{1}{\alpha}\int_0^T e^{-\alpha s}\mathbb{E}|X_s-Y_s|^2ds.
    \end{aligned}
\end{align*}
Similarly we can obtain
$$
\int_0^T e^{-\alpha t}\left( \int_0^t W_2^2(\mu_s,\nu_s)ds \right)dt\le \frac{1}{\alpha}\int_0^T e^{-\alpha s}W_2^2(\mu_s,\nu_s)ds.
$$
Then we have
$$
\int_0^T e^{-\alpha t}\mathbb{E}|X_t-Y_t|^2dt\le \frac{2(T+1)C^2}{\alpha}\int_0^T e^{-\alpha t}\mathbb{E}|X_t-Y_t|^2dt+ \frac{2(T+1)C^2}{\alpha}\int_0^T e^{-\alpha t}W_2^2(\mu_t,\nu_t)dt,
$$
use the property that $W_2^2(\text{Law}(X_t),\text{Law}(Y_t)) \le \mathbb{E}|Y_t-X_t|^2$ and recall the definition of $\Phi$ we get
$$
\left( 1-\frac{2(T+1)C^2}{\alpha} \right)\int_0^T e^{-\alpha t}W_2^2(\Phi(\mu_{\cdot})_t,\Phi(\nu_{\cdot})_t)dt\le \frac{2(T+1)C^2}{\alpha}\int_0^T e^{-\alpha t}W_2^2(\mu_t,\nu_t)dt.
$$
The conclusion we aim to prove can be easily obtained from the above inequality.

\bibliographystyle{unsrt}   
\bibliography{spocref} 

\begin{thebibliography}{10}

\bibitem{reviewofapplication}
Louis-Pierre Chaintron and Antoine Diez.
\newblock Propagation of chaos: a review of models, methods and applications. ii. applications.
\newblock {\em arXiv preprint arXiv:2106.14812}, 2021.

\bibitem{mckean1967propagation}
Henry~P {M}c{K}ean.
\newblock Propagation of chaos for a class of non-linear parabolic equations.
\newblock {\em Stochastic Differential Equations (Lecture Series in Differential Equations, Session 7, Catholic Univ., 1967)}, pages 41--57, 1967.

\bibitem{sznitman1991topics}
Alain-Sol Sznitman.
\newblock Topics in propagation of chaos.
\newblock {\em Lecture notes in mathematics}, pages 165--251, 1991.

\bibitem{carrillo2019blob}
Jos{\'e}~Antonio Carrillo, Katy Craig, and Francesco~S Patacchini.
\newblock A blob method for diffusion.
\newblock {\em Calculus of Variations and Partial Differential Equations}, 58:1--53, 2019.

\bibitem{carrillo2016numerical}
Jos{\'e}~A Carrillo, Helene Ranetbauer, and Marie-Therese Wolfram.
\newblock Numerical simulation of nonlinear continuity equations by evolving diffeomorphisms.
\newblock {\em Journal of Computational Physics}, 327:186--202, 2016.

\bibitem{bossy1997stochastic}
Mireille Bossy and Denis Talay.
\newblock A stochastic particle method for the {M}ckean-{V}lasov and the {B}urgers equation.
\newblock {\em Mathematics of computation}, 66(217):157--192, 1997.

\bibitem{le2017particle}
Anthony Le~Cavil, Nadia Oudjane, and Francesco Russo.
\newblock Particle system algorithm and chaos propagation related to non-conservative {M}c{K}ean type stochastic differential equations.
\newblock {\em Stochastics and Partial Differential Equations: Analysis and Computations}, 5:1--37, 2017.

\bibitem{belaribi2011probabilistic}
Nadia Belaribi, Fran{\c{c}}ois Cuvelier, and Francesco Russo.
\newblock A probabilistic algorithm approximating solutions of a singular {PDE} of porous media type.
\newblock {\em Monte Carlo Methods and Applications}, 17(4):317--369, 2011.

\bibitem{belaribi2013probabilistic}
Nadia Belaribi, Fran{\c{c}}ois Cuvelier, and Francesco Russo.
\newblock Probabilistic and deterministic algorithms for space multidimensional irregular porous media equation.
\newblock {\em Stochastic Partial Differential Equations: Analysis and Computations}, 1(1):3--62, 2013.

\bibitem{randombatch}
Shi Jin, Lei Li, and Jian-Guo Liu.
\newblock Random batch methods (rbm) for interacting particle systems.
\newblock {\em Journal of Computational Physics}, 400:108877, 2020.

\bibitem{pinn}
Maziar Raissi, Paris Perdikaris, and George~E Karniadakis.
\newblock Physics-informed neural networks: A deep learning framework for solving forward and inverse problems involving nonlinear partial differential equations.
\newblock {\em Journal of Computational physics}, 378:686--707, 2019.

\bibitem{deepritz}
Weinan E and Bing Yu.
\newblock The deep {R}itz method: a deep learning-based numerical algorithm for solving variational problems.
\newblock {\em Communications in Mathematics and Statistics}, 6(1):1--12, 2018.

\bibitem{zang2020weak}
Yaohua Zang, Gang Bao, Xiaojing Ye, and Haomin Zhou.
\newblock Weak adversarial networks for high-dimensional partial differential equations.
\newblock {\em Journal of Computational Physics}, 411:109409, 2020.

\bibitem{han2018solving}
Jiequn Han, Arnulf Jentzen, and Weinan E.
\newblock Solving high-dimensional partial differential equations using deep learning.
\newblock {\em Proceedings of the National Academy of Sciences}, 115(34):8505--8510, 2018.

\bibitem{du2023sequential}
Kai Du, Yifan Jiang, and Xiaochen Li.
\newblock Sequential propagation of chaos.
\newblock {\em arXiv preprint arXiv:2301.09913}, 2023.

\bibitem{applebaum2009levy}
David Applebaum.
\newblock {\em {L}{\'e}vy processes and stochastic calculus}.
\newblock Cambridge university press, 2009.

\bibitem{serfaty2020mean}
Mitia Duerinckx and Sylvia Serfaty.
\newblock Mean field limit for {C}oulomb-type flows.
\newblock {\em Duke Mathematical Journal}, 169(15):2887--2935, 2020.

\bibitem{jabinwang2018quantitative}
Pierre-Emmanuel Jabin and Zhenfu Wang.
\newblock Quantitative estimates of propagation of chaos for stochastic systems with $w^{-1,\infty}$ kernels.
\newblock {\em Inventiones mathematicae}, 214:523--591, 2018.

\bibitem{pocreview}
Louis-Pierre Chaintron and Antoine Diez.
\newblock Propagation of chaos: a review of models, methods and applications. i. models and methods.
\newblock {\em arXiv preprint arXiv:2203.00446}, 2022.

\bibitem{pocforpme}
Benjamin Jourdain and Sylvie M{\'e}l{\'e}ard.
\newblock Propagation of chaos and fluctuations for a moderate model with smooth initial data.
\newblock In {\em Annales de l'Institut Henri Poincare (B) Probability and Statistics}, volume~34, pages 727--766. Elsevier, 1998.

\bibitem{eulerscheme}
Philip Protter and Denis Talay.
\newblock The {E}uler scheme for {L}{\'e}vy driven stochastic differential equations.
\newblock {\em The Annals of Probability}, pages 393--423, 1997.

\bibitem{tang2022adaptive}
Kejun Tang, Xiaoliang Wan, and Qifeng Liao.
\newblock Adaptive deep density approximation for {F}okker-{P}lanck equations.
\newblock {\em Journal of Computational Physics}, 457:111080, 2022.

\bibitem{feng2021solving}
Xiaodong Feng, Li~Zeng, and Tao Zhou.
\newblock Solving time dependent {F}okker-{P}lanck equations via temporal normalizing flow.
\newblock {\em arXiv preprint arXiv:2112.14012}, 2021.

\bibitem{rezende2015variational}
Danilo Rezende and Shakir Mohamed.
\newblock Variational inference with normalizing flows.
\newblock In {\em International conference on machine learning}, pages 1530--1538. PMLR, 2015.

\bibitem{papamakarios2021normalizing}
George Papamakarios, Eric Nalisnick, Danilo~Jimenez Rezende, Shakir Mohamed, and Balaji Lakshminarayanan.
\newblock Normalizing flows for probabilistic modeling and inference.
\newblock {\em Journal of Machine Learning Research}, 22(57):1--64, 2021.

\bibitem{tang2020deep}
Keju Tang, Xiaoliang Wan, and Qifeng Liao.
\newblock Deep density estimation via invertible block-triangular mapping.
\newblock {\em Theoretical and Applied Mechanics Letters}, 10(3):143--148, 2020.

\bibitem{lu2022learning}
Yubin Lu, Romit Maulik, Ting Gao, Felix Dietrich, Ioannis~G Kevrekidis, and Jinqiao Duan.
\newblock Learning the temporal evolution of multivariate densities via normalizing flows.
\newblock {\em Chaos: An Interdisciplinary Journal of Nonlinear Science}, 32(3), 2022.

\bibitem{approximate}
John~C Mason.
\newblock Near-best multivariate approximation by {F}ourier series, {C}hebyshev series and {C}hebyshev interpolation.
\newblock {\em Journal of Approximation Theory}, 28(4):349--358, 1980.

\bibitem{wang2018distribution}
Feng-Yu Wang.
\newblock Distribution dependent {SDE}s for {L}andau type equations.
\newblock {\em Stochastic Processes and their Applications}, 128(2):595--621, 2018.

\bibitem{mishra2022estimates}
Siddhartha Mishra and Roberto Molinaro.
\newblock Estimates on the generalization error of physics-informed neural networks for approximating a class of inverse problems for {PDE}s.
\newblock {\em IMA Journal of Numerical Analysis}, 42(2):981--1022, 2022.

\bibitem{mishra2023estimates}
Siddhartha Mishra and Roberto Molinaro.
\newblock Estimates on the generalization error of physics-informed neural networks for approximating {PDE}s.
\newblock {\em IMA Journal of Numerical Analysis}, 43(1):1--43, 2023.

\bibitem{hillebrecht2022estimation}
Birgit Hillebrecht and Benjamin Unger.
\newblock Certified machine learning: A posteriori error estimation for physics-informed neural networks.
\newblock In {\em 2022 International Joint Conference on Neural Networks (IJCNN)}, pages 1--8. IEEE, 2022.

\bibitem{pocforKS}
Didier Bresch, Pierre-Emmanuel Jabin, and Zhenfu Wang.
\newblock On mean-field limits and quantitative estimates with a large class of singular kernels: Application to the {P}atlak--{K}eller--{S}egel model.
\newblock {\em Comptes Rendus Mathematique}, 357(9):708--720, 2019.

\bibitem{K-S2MOMENT}
Giani Ega{\~n}a~Fern{\'a}ndez and St{\'e}phane Mischler.
\newblock Uniqueness and long time asymptotic for the {K}eller--{S}egel equation: the parabolic--elliptic case.
\newblock {\em Archive for Rational Mechanics and Analysis}, 220:1159--1194, 2016.

\bibitem{cao2023empirical}
Wenjing Cao and Kai Du.
\newblock Empirical approximation to invariant measures of non-degenerate {M}ckean--{V}lasov dynamics.
\newblock {\em arXiv preprint arXiv:2310.19302}, 2023.

\bibitem{FPMEtheory}
Michael R{\"o}ckner, Longjie Xie, and Xicheng Zhang.
\newblock Superposition principle for non-local {F}okker--{P}lanck--{K}olmogorov operators.
\newblock {\em Probability Theory and Related Fields}, 178(3):699--733, 2020.

\bibitem{del2014finite}
F{\'e}lix del Teso.
\newblock Finite difference method for a fractional porous medium equation.
\newblock {\em Calcolo}, 51:615--638, 2014.

\end{thebibliography}
\end{document}